\def\eqref#1{equation~\ref{#1}}
\def\1{\bm{1}}
\DeclareMathAlphabet{\mathsfit}{\encodingdefault}{\sfdefault}{m}{sl}
\SetMathAlphabet{\mathsfit}{bold}{\encodingdefault}{\sfdefault}{bx}{n}
\def\gL{{\mathcal{L}}}
\newcommand{\R}{\mathbb{R}}
\newcommand{\softmax}{\mathrm{softmax}}
\DeclareMathOperator*{\argmax}{arg\,max}
\DeclareMathOperator*{\argmin}{arg\,min}
\crefname{figure}{Fig.}{Figs.}
\crefname{table}{Tab.}{Tabs.}
\crefname{equation}{Eq.}{Eqs.}
\crefname{section}{Sec.}{Secs.}
\crefname{subsection}{Sec.}{Secs.}
\crefname{appendix}{App.}{Apps.}
\crefname{theorem}{Thm.}{Thms.}
\crefname{lemma}{Lem.}{Lems.}
\crefname{corollary}{Cor.}{Cors.}
\crefname{proposition}{Prop.}{Props.}
\crefname{algorithm}{Alg.}{Algs.}
\theoremstyle{plain}
\newtheorem{theorem}{Theorem}[section]
\newtheorem{lemma}[theorem]{Lemma}
\newtheorem{corollary}[theorem]{Corollary}
\theoremstyle{definition}
\newtheorem{definition}[theorem]{Definition}
\newtheorem{assumption}[theorem]{Assumption}
\theoremstyle{remark}
\newtheorem{remark}[theorem]{Remark}
\newcommand{\norm}[1]{\left\lVert#1\right\rVert}
\title{Task Vector Bases: A Unified and Scalable Framework for Compressed Task Arithmetic}
\author{\name Siqi Zeng \email siqi6@illinois.edu \\
      \addr University of Illinois Urbana-Champaign
      \AND
      \name Yifei He \email yifeihe3@illinois.edu \\
      \addr University of Illinois Urbana-Champaign
      \AND
      \name Meitong Liu \email meitong4@illinois.edu \\
      \addr University of Illinois Urbana-Champaign
      \AND
      \name Weiqiu You \email weiqiuy@seas.upenn.edu \\
      \addr University of Pennsylvania
      \AND
      \name Yifan Hao \email yifanh12@illinois.edu \\
      \addr University of Illinois Urbana-Champaign
      \AND
      \name Yao-Hung Hubert Tsai \email y.h.huberttsai@gmail.com \\
      \addr Okinawa Institute of Science and Technology
      \AND
      \name Makoto Yamada \email makoto.yamada@oist.jp \\
      \addr Okinawa Institute of Science and Technology
      \AND
      \name Han Zhao \email hanzhao@illinois.edu \\
      \addr University of Illinois Urbana-Champaign
      }
\begin{document}

\maketitle

\begin{abstract}
Task arithmetic, representing downstream tasks through linear operations on task vectors, has emerged as a simple yet powerful paradigm for transferring knowledge across diverse settings. However, maintaining a large collection of task vectors introduces scalability challenges in both storage and computation. We propose Task Vector Bases, a framework compressing $T$ task vectors into $M < T$ basis vectors while preserving the functionality of task arithmetic. By representing each task vector as a structured linear combination of basis atoms, our approach supports standard operations such as addition, negation, as well as more advanced arithmetic ones. The framework is orthogonal to other efficiency-oriented improvements in task arithmetic and can be used in combination with them. We provide theoretical analysis showing that basis compression retains addition generalization guarantees and provides unlearning error bounds that depend on reconstruction quality. Empirically, our proposed basis construction methods consistently outperform heuristic basis construction baselines and, in some cases, even surpass the performance of full task vector collections across diverse downstream applications while reducing storage and computational requirements. The code is available at \url{https://github.com/uiuctml/TaskVectorBasis}.

\end{abstract}

\section{Introduction}


Task vectors~\citep{ilharco2022editing} have emerged as a lightweight technique for model editing. Given a downstream task of interest, a task vector is constructed by subtracting the pretrained model weights from those of a fine-tuned model, encoding task-specific information as a direction in parameter space. These vectors can be combined through simple arithmetic operations such as addition and negation, enabling flexible capabilities such as multi-task composition, task analogy or domain generalization, and even task unlearning. Because of their simplicity and effectiveness, task vectors have been widely studied and applied in vision \citep{chen2025bring, zhu2025remedy, tian2025extrapolating} and language \citep{zhao2024towards, wang2024surgical,fu2025training} domains.

Despite these successes, an important question remains: \emph{how well do task vector methods scale with the number of tasks $T$?} From the perspective of task addition, it is often considered efficient compared to multi-task learning on the full mixture of data. But practical deployments increasingly involve dozens of tasks, where both computation and memory footprint still scale linearly with $T$. Storing each task vector, which is the same size as the full model weights, can already be huge for LLMs, and even if disk storage is a relatively moderate cost, the primary systems bottleneck still arises during composition: loading 72 fine-tuned ViT-B/32 models simultaneously can require more than 200GB of memory when combination coefficients are learned with gradient-based methods~\citep{huang2023adversarial, li2024scalable, yang2024model}, making large-scale GPU training infeasible. 

To address this, layer-wise merging~\citep{yang2023adamerging} has been proposed to improve flexibility and scalability by operating at the level of individual layers, which leads to a significant performance increase. However, due to the memory constraint, these methods require sequential loading and unloading of layer parameters between CPU and GPU, which incurs significant overhead and prevents full utilization of GPU parallelism~\citep{he2025mergebench}. This makes these methods prohibitively slow in practice, let alone overlooking cross-layer dependencies within tasks. Beyond addition, in negation, to forget any task in a large collection requires access to specific task vectors to be removed, and thus scales poorly with $T$ when vectors must be stored and retrieved individually. This scaling limitation becomes particularly acute when addition and negation are combined, such as composing many tasks while selectively removing a subset. Finally, as $T$ grows, optimizing addition itself becomes increasingly difficult, often leading to degraded multi-task performance in both offline composition~\citep{ilharco2022editing} and continual merging scenarios~\citep{tang2025merging}. 

In light of the limitations when scaling the number of task vectors, we introduce Task Vector Bases, an algorithm that compresses the entire $T$ task vectors into $M$ basis vectors, yielding a novel unified framework that can be directly integrated with existing task-arithmetic applications. Our framework is orthogonal to and compatible with other compression methods too. Our contributions are:  
\begin{itemize}[leftmargin=*]
    \item \textbf{Scalability.} We reduce both storage and computation overhead from a factor of $T$, the number of tasks, to $M$ with $M \leq T$ denoting the number of basis vectors, making task vector methods practical in large-scale or resource-constrained settings at minimum loss of downstream performance. With only 50\% of the vectors, in some settings, we can achieve results better than using 100\% of the task vectors, and even when reducing $M$ to $25\% \times T$ we still retain up to 97\% of the full performance.
    \item \textbf{Unified framework.} Task Vector Bases provide a unified framework broadly compatible with all weight-space vector steering operations, including offline/online addition and negation.  
    \item \textbf{Principled construction.} By learning bases aligned with the geometry of task vectors, our approach avoids the inefficiencies of heuristics such as PCA or random selection and consistently achieves stronger downstream results.  
    \item \textbf{Theoretical and empirical validation.} We theoretically compare the generalization performance between full task vectors and compressed bases, and empirically validate the benefits of our method across diverse applications.  
\end{itemize}


\section{Preliminaries}

\paragraph{Problem Setting}
Let $\ell : \mathcal{Y} \times \mathcal{Y} \to \mathbb{R}$ be the loss function, and $h : \mathcal{X} \times \Theta \to \mathcal{Y} \subseteq \mathbb{R}$ be the classifier. When the context is clear, we omit some arguments for $\ell$ and $h$. We consider the initial pre-trained model parameter $\theta_0 \in \mathbb{R}^d$, which is fine-tuned on $T$ tasks to yield fine-tuned parameters $\{ \theta_1, \dots, \theta_T \}$ with respect to the loss functions $\{ \ell_1, \dots, \ell_T \}$. For $n_i$ training samples $D_i  = \{(x_{i1},y_{i1}),\dots,(x_{in_i},y_{in_i})\}$ drawn from the $i$-th task distribution $\mathcal{D}_i$, we denote the population risk evaluated at $\theta$ as $\mathcal{L}_i(\theta) = \mathbb{E}_{(x,y)\sim\mathcal{D}_i}[\ell_i(h(x, \theta),y)]$.

\paragraph{Task Arithmetic (TA) and Applications}
Given a collection of $T$ tasks, task vectors are defined as $\tau_i := \theta_i - \theta_0, \forall i \in [T]$, 
where $\theta_0$ is the pretrained initialization and $\theta_i$ is the fine-tuned model on task $i$. 
\citet{ilharco2022editing} showed that meaningful model behaviors can be obtained through simple arithmetic on task vectors. 
In general, we view \textbf{Offline Task Addition} as producing a merged model
\vspace{-0mm}
\begin{equation}
    \theta_\text{Add}^T = \theta_0 + \mathcal{M}(\tau_1, \dots, \tau_T),
    \label{eq:add}
\vspace{-0mm}
\end{equation}
where the algorithm $\mathcal{M}$ specifies how the task vectors are combined (see \Cref{tab:task_vector_vs_basis} for concrete examples of $\mathcal{M}$). Depending on the downstream application, $\mathcal{M}$'s input can be either learned from in-domain data for multi-task learning or from out-of-domain (OOD) validation data for domain generalization. \textbf{Online Task Addition} can be applied in a continual setting where $T$ tasks arrive over time. If unlimited storage were available, one could save all past task data exemplars and task vectors \citep{coleman2024adaptive, Marczak_Twardowski_Trzciński_Cygert_2024, chitale2023task}, reducing $t$-th step model to be $\theta^{(t)} := \theta_\text{Add}^t$. To forget a particular task $j$, we subtract the task vector from $\theta_0$ scaled by a task-specific coefficient $\alpha$, yielding \textbf{Task Negation}
\vspace{-0mm}
\begin{equation}
    \theta_{\text{Neg},j} = \theta_0 - \alpha \tau_j, \quad \forall j \in [T].
\end{equation}
Task negation has found broad applications in safety related settings, including unlearning untruthfulness and toxicity \citep{hu2024separate}, mitigating social biases \citep{shirafuji2025bias}, and reducing hallucinations \citep{daheim2024elastic}.

\section{Improving Efficiency with Bases Arithmetic}

Under limited compute, it is impractical to save all task vectors for a large number of tasks $T$, and simply impossible in the online sequential setting with infinitely growing number of tasks. In this study, we focus on the unique computational bottlenecks of task vector methods that arise from operations that scale linearly with $T$. Let space complexity refer to \textit{persistent storage of parameters required to support
any future arithmetic operations} (i.e., artifacts that must live on memory and/or disk in the long term), and time complexity refers to the number of \textit{elementary operations needed to produce an edited model 
prior to inference} from the stored artifacts. Under the setting of \citet{ilharco2022editing}, supporting task arithmetic operation requires
a space complexity of $O(Td)$, since supporting task negation for any of the possibly randomly selected $T$ tasks
means every $\tau_j$ must remain accessible. In terms of time complexity, task addition requires scanning and weighting all $T$ vectors during merging, which costs $O(Td)$ for the combination step alone. The merging method $\mathcal{M}$ incurs additional per-task-vector processing cost $c_\mathcal{M}$ (e.g., computing per-task statistics, hyperparameter search, validation forward passes, or mask optimization), giving an effective merging cost of $O(Td \cdot c_\mathcal{M})$. Thus the naive method couples linear space in $T$ with linear-time merging,
even if inference time remains $O(d)$ once the edited model is formed. 

We propose the framework of \textbf{Task Vector Bases}, compressing original $T$ task vectors into $M$ $d$-dim basis vectors 
$\{B_1, \dots, B_M\}$ with $M < T$. Bases arithmetic framework can be written as\vspace{-0mm}
\begin{equation}
    \theta_\text{Add}^M = \theta_0 + \mathcal{M}(B_1, \dots, B_M), \quad \quad \theta_{\text{Neg},j} = \theta_0 - \alpha \cdot \hat{\tau_j}(B_1, \dots, B_M), \  \forall j \in [T], 
\end{equation}
where bases addition replaces any $\tau_i$ with $B_i$, and negation for any of $T$ tasks can be recovered from saved bases. Our goal is to create a compact representation that preserves the information needed to support all existing task arithmetic
operations and follow-up improvements built upon naive task arithmetic,
while reducing both storage and time complexity from dependence on $T$ to $M$.

\subsection{Principal Components as Bases}
\label{sec:PCA}
A natural candidate for compressing $T$ task vectors into $M < T$ directions is Principal 
Component Analysis (\textbf{PCA}). Let $\mathbf{T} = [\tau_1,\dots,\tau_T] \in \mathbb{R}^{d \times T}$ denote 
the task vector matrix stacking task vectors with mean $\mu \in \mathbb{R}^d$. PCA yields $\mathbf{T} - \mu\mathbf{1}^\top \approx (\mathbf{U}_M \mathbf{S}_M) \mathbf{V}_M^\top$, where $\mathbf{B} = \mathbf{U}_M \mathbf{S}_M \in \mathbb{R}^{d \times M}$ serves as the scaled basis matrix
and $\mathbf{C} = \mathbf{V}_M^\top \in \mathbb{R}^{M \times T}$ are task-specific coefficients. 
Original task vector matrix is reconstructed by $\hat{\mathbf{T}} = \mu\mathbf{1}^\top + \mathbf{B} \mathbf{C}$. PCA representation requires storing $O(Md)$ basis vectors for addition and additional $T\times M$ coefficients for negation, 
and all $T$ task vectors can be recovered using transient working memory, thus matching the desired complexity profile and achieving the optimal rank-$M$ approximation in Frobenius norm by \citet{eckart1936approximation}.

Despite its appeal for negation, it is not compatible with the way task addition is typically performed:  
\vspace{-0mm}
\begin{figure}[ht]
    \centering
    \begin{subfigure}[c]{0.61\linewidth}
        \centering
        \includegraphics[width=\linewidth]{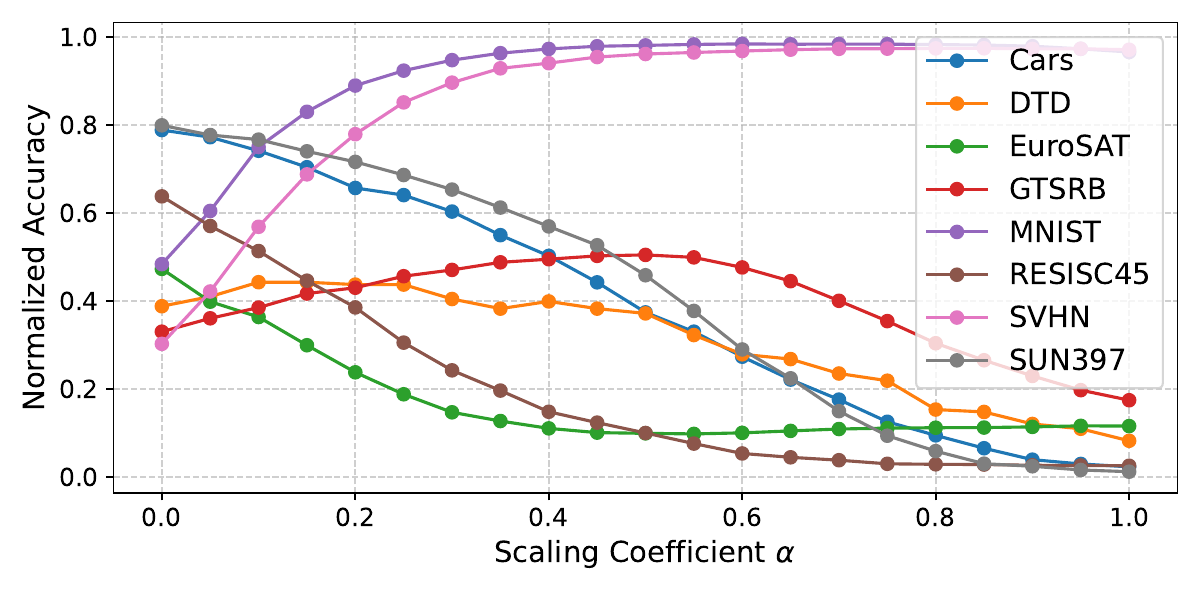}
        \vspace{-6mm}
        \caption{Addition under PCA. Normalized accuracy vs.\ scaling coefficient $\alpha$ 
        for different datasets using PCA components.}
        \label{fig:pca_per_dataset}
    \end{subfigure}%
    \hfill
    \begin{subfigure}[c]{0.37\linewidth}
        \centering
        \includegraphics[width=\linewidth]{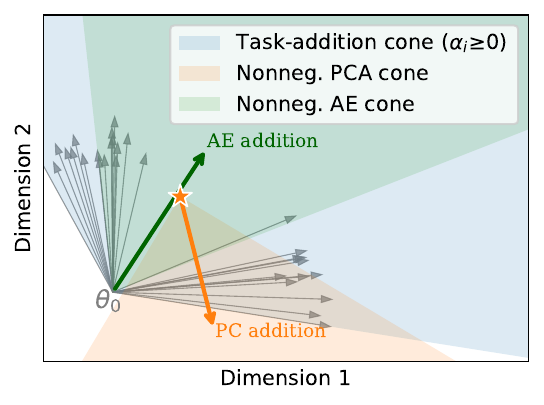}
        \vspace{-6mm}
        \caption{Comparing geometry of task vectors, PC, and our AE bases for addition. Grey arrows are original task vectors from $\mathbf{T}$.}
        \label{fig:pca_geometry}
    \end{subfigure}
    \vspace{-0mm}
    \caption{Limitations of PCA for task addition: (a) performance view and (b) geometric view.}
    \label{fig:pca_failure}
\end{figure}
\vspace{-0mm}

1. \textbf{Coefficient tuning.}  
In many addition formulations \citep{ilharco2022editing, yadav2024ties, ortiz2024task}, when applied to task vectors, the merged model can be written as $\theta_\text{Add}^T = \theta_0 + \alpha \mathcal{M}(\tau_1, \dots, \tau_T)$ where $\alpha$ is a single \emph{nonnegative} scalar tuned on validation data
shared across task vectors. This formulation assumes that each task vector can be only rescaled positively according to the naming of task \emph{addition},
but PCA bases are arbitrary orthogonal directions with their signs having no semantic meaning. Therefore, these types of addition methods are not compatible with PCA basis: in the left panel of Fig.~\ref{fig:pca_per_dataset}, half of the datasets cannot even recover the pretrained model accuracy at $\alpha=0$ by applying \citet{ilharco2022editing} on PCA basis. This indicates that these basis components are completely misaligned with the original task vectors, 
so tuning nonnegative scaling factors cannot interpolate back to the pretrained initialization. 
Geometrically (right panel of Fig.~\ref{fig:pca_geometry}), this failure arises because first, the bases are not anchored at $\theta_0$ but at \textcolor[HTML]{FF7F0E}{$\bigstar$}; second, even if we shift the anchor, the orientation of principal components is determined by the SVD implementation and the sign of PCs (boundary directions of the orange cone) doesn't affect the approximation optimality. The shaded orange cone spanned by nonnegative coefficients does not fully overlap with blue cone from merging original task vectors. In Fig.~\ref{fig:pca_geometry}, tasks to the far left are completely misaligned from PC bases interpolation, 
leading to irreversible information loss for addition.

2. \textbf{Interpretability of directions.}  
In task arithmetic, each task dataset $D_i$ corresponds to a specific task vector $\tau_i$, 
so merging methods often rely on this 1-to-1 correspondence.
A representative example is Localize-and-Stitch (\textbf{L\&S}) ~\citep{he2024localize}, which trains a binary mask 
to identify the most relevant parameters for each task by solving
\vspace{-0mm}
\begin{equation}
    S_i = \argmin_{S \in \R^d}
\mathcal{L}_i\!\big(\theta_0 + \sigma(S)\odot \tau_i\big) 
+ \lambda \|\sigma(S)\|_1,
\label{eq:LandS}
\vspace{-0mm}
\end{equation}
where the empirical loss is computed on the validation data $D_i$. 
This method is attractive since the task vectors masked by sparse $S_i$ can be stored at much lower cost than full task vectors, while still enabling accurate model merging. Under a PCA basis representation, assume $\mu = 0$, then bases $\mathbf{B} = \mathbf{U}_M\mathbf{S}_M \approx \mathbf{TV}_M$ can be written as the linear combination of input vectors, but $\mathbf{V}_M$ involves negative values. This breaks the dataset–vector interpretability: one cannot straightforwardly construct a validation dataset for a PCA basis to learn its mask since negative coefficients 
would correspond to a nonsensical negative task dataset contribution.

\subsection{Softmax-Activated Linear Autoencoder as Bases}

\subsubsection{Basis Construction}

So how to design bases to preserve the spectral optimality of PCA while aligning the basis direction with original task vectors? We propose to use a softmax activated linear autoencoder, which ensures each basis vector can be interpreted as a convex combination of input task vectors, which is essential for supporting both addition and negation operations in a unified basis framework. 

\begin{definition}[Autoencoder with softmax encoder and linear decoder]
\label{def:ae-matrix-tau}
Given parameters $\mathbf{A}\in\mathbb{R}^{T\times M}$, let the encoder weight be a column-wise softmax at temperature $\tau > 0$: $\mathbf{W}_e[:,m] \;:=\; \mathrm{softmax}\!\big(\mathbf{A}_{:,m}/\tau\big)\in\Delta^{T-1}, \forall m \in [M]$ and $\mathbf{W}_d\in\mathbb{R}^{M\times T}$ be a linear decoder. We minimize the reconstruction loss as the squared error in Frobenius norm:
\vspace{-0mm}
\begin{equation}
    \mathcal{L}_\text{AE}(\mathbf{W}_e, \mathbf{W}_d) = \|\hat{\mathbf{T}} - \mathbf{T}\|_F^2 := \|\mathbf{T}\mathbf{W}_e\mathbf{W}_d - \mathbf{T}\|_F^2.
    \label{eq:ae-loss}
    \vspace{-0mm}
\end{equation}
\end{definition}
\begin{lemma}[Equivalent Gram reformulation]
\label{lem:gram-match-impl}
With gram matrix $\mathbf{G}:=\mathbf{T}^\top \mathbf{T}$ and $\mathbf{E}=\mathbf{W}_e\mathbf{W}_d-\mathbf{I}_T$ as above, \Cref{eq:ae-loss} is equivalent to
\vspace{-0mm}
\begin{equation}
\label{eq:gram-loss-impl}
\mathcal{L}_\mathrm{AE}(\mathbf{W}_e, \mathbf{W}_d) = \|\mathbf{G}^{1/2}\mathbf{E}\|_F^2 = \operatorname{Tr}(\mathbf{E}^\top\mathbf{G}\mathbf{E}).
\vspace{-0mm}
\end{equation}
\end{lemma}
\begin{proof}
$\|\mathbf{T}\mathbf{W}_e\,\mathbf{W}_d - \mathbf{T}\|_F^2 = \|\mathbf{T}\mathbf{E}\|_F^2 = \operatorname{Tr}((\mathbf{T}\mathbf{E})^\top(\mathbf{T}\mathbf{E})) = \operatorname{Tr}(\mathbf{E}^\top\mathbf{T}^\top\mathbf{T}\mathbf{E}) = \operatorname{Tr(\mathbf{E}^\top\mathbf{G}\mathbf{E})}$.
Since $\mathbf{G}\succeq 0$, $\mathbf{G}^{1/2}$ is its PSD square root so
$\operatorname{Tr}(\mathbf{E}^\top\mathbf{G}\mathbf{E})=\|\mathbf{G}^{1/2}\mathbf{E}\|_F^2$.
\end{proof}
\begin{remark}
The formulation in \Cref{eq:ae-loss} requires storing $\mathbf{T}\in\mathbb{R}^{d\times T}$,
which scales with model parameters $d$. The Gram reformulation \cref{eq:gram-loss-impl} only depends on $\mathbf{G}, \mathbf{E}\in\mathbb{R}^{T\times T}$, eliminating the $d$-dependence during gradient-based optimization on GPU if precomputing the Gram matrix on CPU. Note that loading $\mathbf{T}$ and computing $\mathbf{G}$ are one time transient costs ($O(dT)$ space, $O(dT^2)$ time) incurred only during basis construction; once bases are built, only $M$ basis vectors of dimension $d$ (and a small $M\times T$ decoder matrix for negation) need to be persisted (see \Cref{app:runtime_memory} for empirical profiling).
\end{remark}

It is well known that the global optimum of 
a linear autoencoder without softmax activations can be characterized by PCA solution ~\citep{baldi1989neural}. 
For any $M<T$, the best rank-$M$ reconstruction's error is given by the spectral bound in both Frobenius and spectral norm:
\vspace{-0mm}
\begin{equation}
    \min_{\mathrm{rank}(\hat{\mathbf{T}})\le M} 
  \|\hat{\mathbf{T}} - \mathbf{T}\|_F^2 
  = \sum_{i=M+1}^r \lambda_i(\mathbf{G}), 
\qquad 
\min_{\mathrm{rank}(\hat{\mathbf{T}})\le M} 
  \|\hat{\mathbf{T}} - \mathbf{T}\|_2^2 
  = \lambda_{M+1}(\mathbf{G}),
    \label{eq:spectral-LB}
\vspace{-0mm}
\end{equation}
where $r=\mathrm{rank}(\mathbf{T})$ and $\lambda_i(\mathbf{G})$ are the eigenvalues of $\mathbf{G}$ 
in nonincreasing order, with minimum achieved when choosing any $\mathbf{W}_e$ whose column space equals the top-$M$ eigenspace of $\mathbf{G}$ denoted by $S_\star$, and $\mathbf{W}_d$ be the ordinary least square solution given fixed $\mathbf{W}_e$. When softmax activation is applied to the encoder, the loss is at least \Cref{eq:spectral-LB} with equality conditions below (proof details in \Cref{app:softmax_proof}),

\begin{restatable}[Exact Achievability with Softmax Encoder]{theorem}{SoftmaxEquality}
\label{thm:exact-achievability-softmax}
Using a softmax-activated encoder $\mathbf{W}_e$, 
\Cref{eq:ae-loss} attains the spectral optimum in \Cref{eq:spectral-LB} if and only if 
there exist $M$ linearly independent vectors 
$x_1, \dots, x_M \in S_\star$ with strictly positive coordinates.
\end{restatable}

\subsubsection{Basis Arithmetic}
\label{sec:basis_arithmetic}
After solving \Cref{eq:gram-loss-impl}, we define the Autoencoder (\textbf{AE}) bases as
\vspace{-0mm}
\begin{equation}
    \mathbf{B} := \mathbf{TW}_e \in \mathbb{R}^{d\times M} = \left[\sum_{i = 1}^T\mathbf{W}_e[i,1]\tau_i \; \middle| \;\dots\; \middle| \; \sum_{i = 1}^T\mathbf{W}_e[i,M]\tau_i\right],
    \label{eq:basis_expansion}
    \vspace{-0mm}
\end{equation}
where the bases are the convex combinations of original task vectors. 

\begin{wrapfigure}{l}{0.47\textwidth}
\vspace{-8mm}
\begin{minipage}[t]{\linewidth}
\begin{algorithm}[H]
\caption{Online Bases Addition}
\label{alg:tvb-online}
\begin{algorithmic}[1]
\Require Buffer budget $M$, basis construction pipeline $\textsc{AE\_Train}(\cdot)$
\State Initialize basis set $\mathbf{B} \gets \emptyset$ and $k \gets 0$
\For{each new task $t = 1,2,\dots$}
    \State Receive new task vector $\tau_t \in \R^d$
    \If{$k < M$}
        \State $\mathbf{B} \gets \mathbf{B} \cup \{\tau_t\}$, $k \gets k+1$ 
    \Else
        \State $(\mathbf{W}_e, \mathbf{W}_d) \gets \textsc{AE\_Train}(\mathbf{B}, M)$
        \State $\mathbf{U} = \mathbf{B}\mathbf{W}_e \in \R^{d \times (M-1)}$
        \State $\mathbf{B} \gets [\mathbf{U}, \tau_t] \in \R^{d \times M}$
    \EndIf
    \State $\theta^{(t)} = \theta_0 + \mathcal{M}_{\cup_{i=1}^T\widetilde{D}_i}(\mathbf{B})$
\EndFor
\end{algorithmic}
\end{algorithm}
\vspace{-8mm}
\end{minipage}
\end{wrapfigure}

For \textbf{offline bases addition}, with $M$ bases, no matter what merging method $\mathcal{M}$ we use, we always subsample $n_i\cdot M/T$ validation data from each task dataset $D_i$, and re-use all existing merging methods directly on $M$ bases paired with subsampled dataset denoted as $\widetilde{D}_i$. This immediately reduces the time complexity of data-based merging methods from $\times T$ 
evaluations to $\times M$ , since they are now applied on $Mn_i$ effective data points in $\cup_{i=1}^T\widetilde{D}_i$. 

To see how AE bases solving the limitations of PCs, for coefficient tuning methods, softmax bases guarantee that each $B_m$ is a convex combination of task vectors. If we apply \citet{ilharco2022editing} for addition, any nonnegative mixture of the bases $\sum_{m=1}^M \alpha_m B_m, \ \alpha_m \ge 0$ remains a nonnegative linear combination of the original $\tau_i$ by plugging in \Cref{eq:basis_expansion}. Hence the feasible region spanned by softmax bases is always a subset of the nonnegative cone defined by task addition like in \Cref{fig:pca_geometry}.

Besides, in settings with 1-to-1 correspondence between tasks and datasets, we define validation data mixture for basis $B_m$ 
with its encoder weights $\mathbf{W}_{e}[:,m]$, which specify how input 
tasks contribute to the basis. For example, to use \citet{he2024localize}, for basis $m$, the effective objective becomes
\vspace{-0mm}
\begin{equation}
    S_m = \argmin_{S\in\R^d}
   \Big(\sum_{i=1}^T \mathbf{W}_e[i,m] \,\mathcal{L}_i\!\big(\theta_0 + \sigma(S)\odot B_m\big)\Big) 
   + \lambda \|\sigma(S)\|_1.
   \label{eq:weighted_LandS}
   \vspace{-0mm}
\end{equation}
That is, instead of attaching one dataset to one task vector, each basis $B_m$ 
is paired with a convex combination of the original task validation losses, 
weighted by the encoder weights, thus $S_m$ is now a joint mask applicable to multiple tasks. This allows Localize-and-Stitch to remain applicable in the basis setting 
while requiring space only for the $M$ bases 
rather than all $T$ task vectors to reduce the memory burden. \Cref{tab:task_vector_vs_basis} summarizes how existing merging methods adapt to the basis setting (see \Cref{app:1to1_additonal_examples} for algorithmic details).

{\color{blue}
\begin{table}[ht]
\centering
\caption{Comparison of original task-vector formulations and their basis-setting counterparts during addition. Top group: methods requiring 1-to-1 task-dataset correspondence, where basis-level statistics are constructed via encoder-weighted combinations. Bottom group: methods whose mathematical form is unchanged by replacing $\tau_i$ with $B_m$.}
\label{tab:task_vector_vs_basis}
\scalebox{0.58}
{
\begin{tabular}{l|c|c}
\toprule
\textbf{Method} & \textbf{$T$ tasks setting} & \textbf{$M$ bases setting} \\
\midrule
\textbf{Fisher merge}
& $\displaystyle
   \theta_{\text{Add}} = \frac{\sum_{i=1}^T \mathbf{F}_i \,\theta_i}{\sum_{i=1}^T \mathbf{F}_i}
   $
& $\displaystyle
   \theta_{\text{Add}} = \frac{\sum_{m=1}^M \widetilde {\mathbf{F}}_m \, (\theta_0 + B_m)}{\sum_{m=1}^M \widetilde {\mathbf{F}}_m}, \;
   \widetilde {\mathbf{F}}_m = \sum_{i=1}^T \mathbf{W}_e[i,m] \mathbf{F}_i
   $ \\
\textbf{RegMean}
& $\displaystyle
   \theta_{\text{Add}} = \Big(\sum_{i=1}^T \mathbf{X}_i^\top \mathbf{X}_i\Big)^{-1}
                         \Big(\sum_{i=1}^T \mathbf{X}_i^\top \mathbf{X}_i \,\theta_i\Big)
   $
& $\displaystyle
   \theta_{\text{Add}} = \Big(\sum_{m=1}^M \widetilde {\mathbf{G}}_m\Big)^{-1}
                         \Big(\sum_{m=1}^M \widetilde {\mathbf{G}}_m\, (\theta_0 + B_m)\Big), \;
   \widetilde {\mathbf{G}}_m = \sum_{i=1}^T \mathbf{W}_e[i,m] \mathbf{X}_i^\top \mathbf{X}_i
   $ \\
\textbf{Localize \& Stitch}
& $\displaystyle
   S_i = \argmin_{S\in\R^d}\;
   \mathcal{L}_i\!\big(\theta_0 + \sigma(S)\odot\tau_i\big)
   + \lambda \|\sigma(S)\|_1
   $
& $\displaystyle
   S_m = \argmin_{S\in\R^d}\;
   \Big(\sum_{i=1}^T \mathbf{W}_e[i,m] \,\mathcal{L}_i\!\big(\theta_0 + \sigma(S)\odot B_m\big)\Big)
   + \lambda \|\sigma(S)\|_1
   $ \\
\midrule
\textbf{Task Arithmetic}
& $\displaystyle
   \theta_{\text{Add}} = \theta_0 + \sum_{i=1}^T\alpha_i\tau_i
   $
& $\displaystyle \theta_{\text{Add}} = \theta_0 + \sum_{m=1}^M\alpha_mB_m$ \\
\textbf{TIES} & $\displaystyle
   \theta_{\text{Add}} = \theta_0 + \alpha\mathrm{TrimElectMerge}(\tau_1, \dots, \tau_T)
   $  & $\displaystyle
   \theta_{\text{Add}} = \theta_0 + \alpha\mathrm{TrimElectMerge}(B_1, \dots, B_m)
   $ \\
\textbf{AdaMerging} & $\min\limits_{\lambda_1,\dots, \lambda_T}
    \sum\limits_{x_i \in D_t}
        H\!\left[f\left(x_i; \theta_0 + \Bigl\{\sum\limits_{t=1}^T \lambda_t^l \tau_t^l\Bigr\}_{l=1}^L\right)\right]$
& $\min\limits_{\lambda_1,\dots, \lambda_M}
    \sum\limits_{x_i \in \widetilde{D_t}}
        H\!\left[f\left(x_i; \theta_0 + \Bigl\{\sum\limits_{m=1}^M \lambda_m^l B_m^l\Bigr\}_{l=1}^L\right)\right]$  \\
\textbf{aTLAS} & $\min\limits_{\Lambda_1, \dots, \Lambda_T}\sum\limits_{(x_i, y_i) \in D_t}\left[\ell(f(x_i; \theta_0 + \sum_{i=1}^T\Lambda_i\tau_i), y_i)\right]$ & $\min\limits_{\Lambda_1, \dots, \Lambda_M}\sum\limits_{(x_i, y_i) \in \widetilde{D_t}}\left[\ell(f(x_i; \theta_0 + \sum_{m=1}^M\Lambda_mB_m), y_i)\right]$ \\
\textbf{WEMoE} & $\displaystyle \theta_{\text{merged},l}^{mlp} = \mathrm{WEMoE}(\theta_{0,l}^{mlp}, \tau_{1,l}^{mlp}, \dots, \tau_{T,l}^{mlp} \mid \mathrm{R}_l)$ & $\displaystyle \theta_{\text{merged},l}^{mlp} = \mathrm{WEMoE}(\theta_{0,l}^{mlp}, B_{1,l}^{mlp}, \dots, B_{M,l}^{mlp} \mid \mathrm{R}_l)$ \\
\bottomrule
\end{tabular}
}
\end{table}
}

For \textbf{online bases addition}, we consider a more practical limited-compute setting where only $M$ finite vectors can be stored persistently in \Cref{alg:tvb-online}. At step $t$, when we found the buffer is full, we apply the autoencoder 
compression to reduce these $M$ vectors back into $M-1$ bases, then we put
a new task vector $\tau_t$ also back into the buffer, 
ensuring storage cost remains fixed as $O(Md)$ while supporting an unbounded
sequence of tasks. In particular, as $T\to\infty$, neither persistent storage nor per step computation depends on $T$. Per step time complexity only depends on $M$: the compression to reduce $M$ vectors down to $M-1$ costs $O(M^2d)$, and the merging algorithm operates on $M$ bases. Note that although we call the AE training pipeline for each step $t$, empirically compared to line 10's addition step, the basis processing step cost is negligible (\Cref{app:runtime_memory}).

For \textbf{bases negation}, we first reconstruct the full task vector matrix 
\vspace{-0mm}
\begin{equation}
    \hat{\mathbf{T}} = \mathbf{B}\mathbf{W}_d,
    \label{eq:reconstruction}
    \vspace{-0mm}
\end{equation}
and then apply negation directly on the reconstructed vectors so that we forget task $j$ via $\theta_0 - \alpha \hat\tau_j = \theta_0 - \alpha\hat{\mathbf{T}}[:,j]$.  In terms of storage, we only need to persist the $M \times d$ bases $\mathbf{B}$ and the decoder weight
$\mathbf{W}_d\in\mathbb{R}^{M\times T}$, but since $M < T \ll d$, the total storage is still much smaller 
than the naive $O(Td)$ required to store all task vectors directly, 
and thus matches our efficiency goal.  

\subsubsection{Theoretical Guarantees}
To compare original task arithmetic methods and basis version, we briefly state our theoretical analysis guided by Taylor expansion for standard task arithmetic in \citet{ilharco2022editing} and their counterparts under basis representations.
See proof details in \Cref{app:arithmetic_proof_details}.

We state the shared assumptions used across all theorems below. (A1) \textbf{Fine-tuning regime:} $\forall i\in [T]$, $\frac{\partial\mathcal{L}_i(\theta_i)}{\partial{\theta}} = \mathbf{0}$ and $\exists\, C > 0$ such that $\|\tau_i\|^2 \leq C$. (A2) \textbf{Local smoothness:} each loss $\mathcal{L}_i$ is $L_i$-locally smooth at $\theta_i$, i.e., $\mathcal{L}_i(\theta) - \mathcal{L}_i(\theta_i) \leq \langle\theta - \theta_i, \nabla\mathcal{L}_i(\theta_i)\rangle + \frac{L_i}{2}\|\theta - \theta_i\|^2$ for $\|\theta - \theta_i\|^2 = O(C)$. (A3) \textbf{Scaling coefficients:} $\alpha_i \geq 0$ and $\sum_{i=1}^T \alpha_i = 1$. We further denote the pairwise task vector similarity as $\epsilon = \max_{i\neq j} |\langle \tau_i, \tau_j\rangle|/C^2$.

\begin{theorem}[Task Addition \& Basis Addition]
\label{thm:basis-addition}
For $\theta_\mathrm{Add}^T = \theta_0 + \sum_{j=1}^T \alpha_j \tau_j$, then $\forall i\in[T]$, the generalization gap between the merged model and finetuned model is bounded by:
    \vspace{-0mm}
    \begin{equation}
        \mathcal{L}_i(\theta_\mathrm{Add}^T) - \mathcal{L}_i(\theta_i) \;\le\; L_i C (1+\epsilon).
        \label{eq:task_bases_add_shared}
        \vspace{-0mm}
    \end{equation} 
    When each basis is defined in \Cref{eq:basis_expansion}, 
    and bases addition merged model is $\theta_\mathrm{Add}^M = \theta_0 + \sum_{m=1}^M \alpha_m B_m$, the same bound in \Cref{eq:task_bases_add_shared} holds.
\end{theorem}

\begin{theorem}[OOD Generalization with Task Vectors \& Bases]
\label{thm:basis-generalization}
Suppose $\tau_{\mathrm{tar}}$ is an unseen task vector and we want to generalize to this target task with \Cref{eq:add} with existing task vectors in $\mathbf{T}$. If $\exists i^\star \in [T]$ with $\langle \tau_{\mathrm{tar}}, \tau_{i^\star}\rangle \ge \gamma C$,  
    then there exists set of merging coefficients $\alpha_i, \forall i \in [T]$ such that
    \vspace{-0mm}
\begin{equation}
    \mathcal{L}_{\mathrm{tar}}(\theta_\mathrm{Add}^T) \;\le\;
    \mathcal{L}_{\mathrm{tar}}(\theta_{\mathrm{tar}}) + L_{\mathrm{tar}} C (1-\gamma).
    \vspace{-0mm}
\end{equation}
If we use basis instead when $\theta_\mathrm{Add}^M = \theta_0 + \sum_{m=1}^M \alpha_m B_m$, if some basis $B_m$ contains $\tau_{i^\star}$ with weight at least $\rho$,  
    i.e.\ $\mathbf{W}_e[i^\star,m]\ge \rho$,
    \vspace{-0mm}
\begin{equation}
    \mathcal{L}_{\mathrm{tar}}(\theta_\mathrm{Add}^M) \;\le\;
    \mathcal{L}_{\mathrm{tar}}(\theta_{\mathrm{tar}}) + L_{\mathrm{tar}} C (1-\rho \gamma).
    \vspace{-0mm}
\end{equation}
\end{theorem}

\begin{theorem}[Task Negation \& Basis Negation]
\label{thm:basis-negation}
For task vector negation $\theta_{\mathrm{Neg},i} = \theta_0 - \alpha_i \tau_i$, for all control tasks $j\neq i$, the performance gap compared to pretrained model is bounded by: 
\vspace{-0mm}
\begin{equation}
     \mathcal{L}_j(\theta_{\mathrm{Neg},i}) - \mathcal{L}_j(\theta_0) \;\le\; L_j C \Big(\tfrac{3}{2}+\epsilon\Big).
     \vspace{-0mm}
\end{equation}
    Let $\widehat \tau_i$ be the $i$-th reconstructed task vector from \Cref{eq:reconstruction}. If \Cref{eq:ae-loss} is minimized to the spectral lower bound, for $\theta_{\mathrm{Neg},i} = \theta_0 - \alpha_i \widehat \tau_i$, where $\lambda_{M+1}(\mathbf{G})$ is the $(M{+}1)$-th eigenvalue of the Gram matrix $\mathbf{G} = \mathbf{T}^\top\mathbf{T}$:  
    \vspace{-0mm}
    \begin{equation}
         \mathcal{L}_j(\theta_{\mathrm{Neg},i}) - \mathcal{L}_j(\theta_0) 
    \;\leq\; L_j C\!\left(\frac{5}{2}+2\epsilon\right)
+
L_j\lambda_{M+1}(\mathbf{G}).
\vspace{-0mm}
    \end{equation}
\end{theorem}

\begin{remark}
All proofs proceed via Taylor expansion of the loss around the fine-tuned model, combined with the convex combination structure of softmax bases which ensures bases remain in the non-negative task vector cone. Notably, evaluating the pretrained model $\theta_0$ directly on $\mathcal{L}_i$ yields $\mathcal{L}_i(\theta_0) - \mathcal{L}_i(\theta_i) \le L_i C$, which is no worse than the addition bound up to the cross-task interference term $\epsilon$. In other words, task addition does not hurt too much whenever fine-tuning does not help that much ($L_i C$ is small), which partially explains why task arithmetic has mainly been applied to well-pretrained models where individual fine-tuning gains are moderate. The bounds also reveal which factors govern the worst case and connect to concrete methodological improvements: smaller $L_i$ (flatter minima are preferred for merging \citep{iurada2025efficient, lee2025mitigating}) and smaller $\epsilon$ (reducing task vector interference via sparsification \citep{yadav2024ties, he2024localize}). The key takeaways for basis compression are:
\begin{itemize}[leftmargin=*,nosep]
    \item \Cref{thm:basis-addition}: compression is ``free'' since the bound is identical for $T$ task vectors and $M$ bases, because any non-negative combination of bases is itself a non-negative combination of the original task vectors.
    \item \Cref{thm:basis-generalization}: the penalty from compression is exactly quantified by the encoder coverage factor $\rho$.
    \item \Cref{thm:basis-negation}: the additional term $\lambda_{M+1}(\mathbf{G})$ gives a computable criterion for choosing $M$, which vanishes when $M$ captures the principal components of $\mathbf{G}$.
\end{itemize}
\end{remark}

\section{Experiments}
We present the experiments organized by task arithmetic application under basis framework. Details of datasets, metrics, hyperparameters, and additional experiments including verification of theoretical claims (\Cref{app:exp_theory}), sensitivity of $\tau$ (\Cref{app:temperature}), choice of subsampling/weighting (\Cref{app:subsample}), results on generative tasks (\Cref{app:offline_addition}), numerical precision (\Cref{tab:precision}), multi-seed runs (\Cref{tab:multiseed}), and softmax vs.\ linear encoder ablation (\Cref{tab:softmax_ablation}) are deferred to Appendix. 

\subsection{Bases Addition}

\subsubsection{Offline Multitask Learning}
\label{sec:offline_multitask}

\vspace{-0mm}
\begin{table*}[ht]
\centering
\small
\caption{Comparison of absolute addition accuracy across ViT models under 8, 14, and 20 vision tasks \citep{wang2024rethinking} with $M=50\%$ of total tasks. Bold entries are the best-performing basis method within each block, while underlined entries are cases where basis addition outperforms full task-vector addition. See normalized accuracies and per dataset results in \Cref{tab:addition_vision_normalized}, and \Cref{fig:vision_8tasks_perdataset,fig:vision_14tasks_perdataset,fig:vision_20tasks_perdataset}.}
\label{tab:addition_vision}
\vspace{-0mm}
\setlength{\tabcolsep}{4pt}
\begin{tabular}{lccccccccc}
\toprule
\multirow{2}{*}{\textbf{Method}}  & \multicolumn{3}{c}{\textbf{ViT-B/16}} & \multicolumn{3}{c}{\textbf{ViT-B/32}} & \multicolumn{3}{c}{\textbf{ViT-L/14}} \\
\cmidrule(lr){2-4} \cmidrule(lr){5-7} \cmidrule(lr){8-10}
& 8 task & 14 task & 20 task & 8 task & 14 task & 20 task & 8 task & 14 task & 20 task \\
\midrule
Pretrained & 0.554 & 0.620 & 0.598 & 0.481 & 0.569 & 0.556 & 0.698 & 0.691 & 0.656 \\
Finetuned & 0.924 & 0.913 & 0.916 & 0.904 & 0.893 & 0.898 & 0.943 & 0.934 & 0.935 \\
\midrule
TA \citep{ilharco2022editing}    & 0.754 & 0.705 & 0.658 & 0.708 & 0.653 & 0.605 & 0.850 & 0.794 & 0.740 \\
\rowcolor{yellow!20} RandSelect        & 0.645 & 0.649 & 0.620 & 0.643 & 0.638 & \underline{0.611} & 0.697 & 0.727 & 0.609 \\
\rowcolor{yellow!20} PCA               & 0.495 & 0.578 & 0.573 & 0.532 & 0.571 & 0.585 & 0.589 & 0.653 & 0.642 \\
\rowcolor{yellow!20} AE (Ours)      & \textbf{0.666} & \textbf{0.673} & \textbf{0.635} & \textbf{0.689} & \underline{\textbf{0.660}} & \underline{\textbf{0.613}} & \textbf{0.736} & \textbf{0.753} & \textbf{0.715} \\
\midrule
TIES \citep{yadav2024ties}  & 0.797 & 0.732 & 0.682 & 0.751 & 0.680 & 0.634 & 0.869 & 0.795 & 0.757 \\
\rowcolor{yellow!20} RandSelect        & 0.664 & 0.659 & 0.620 & 0.655 & 0.649 & \textbf{0.627} & 0.733 & 0.733 & 0.708 \\
\rowcolor{yellow!20} PCA               & 0.496 & 0.578 & 0.573 & 0.533 & 0.571 & 0.595 & 0.589 & 0.652 & 0.644 \\
\rowcolor{yellow!20} AE (Ours)      & \textbf{0.672} & \textbf{0.672} & \textbf{0.635} & \textbf{0.687} & \textbf{0.651} & 0.607 & \textbf{0.742} & \textbf{0.754} & \textbf{0.711} \\
\midrule
L\&S \citep{he2024localize}              & 0.759 & 0.681 & 0.601 & 0.767 & 0.652 & 0.598 & 0.778 & 0.753 & 0.701 \\
\rowcolor{yellow!20} RandSelect        & 0.553 & 0.534 & 0.437 & 0.546 & 0.494 & 0.434 & 0.670 & 0.677 & 0.601 \\
\rowcolor{yellow!20} PCA               & 0.523 & 0.450 & 0.410 & 0.467 & 0.408 & 0.361 & 0.667 & 0.589 & 0.543 \\
\rowcolor{yellow!20} AE (Ours)      & \textbf{0.667} & \textbf{0.672} & \underline{\textbf{0.641}} & \textbf{0.691} & \underline{\textbf{0.667}} & \underline{\textbf{0.628}} & \textbf{0.736} & \textbf{0.732} & \underline{\textbf{0.729}} \\
\bottomrule
\end{tabular}
\end{table*}

\begin{table}[ht]
\centering
\small
\vspace{0mm}
\caption{Comparison of absolute addition accuracy with RoBERTa-base model on 12 language task benchmark with bases number $M=25\%$ of the total tasks. 100\% means using all task vectors for corresponding merging methods. See the normalized accuracy version and full per dataset results in \Cref{tab:addition_language_normalized} and \Cref{fig:language_12tasks_perdataset}. With $25\%$ of task vectors, we can recover up to $97\%$ (L\&S-AE) of the accuracy.}
\label{tab:addition_language}
\vspace{-0mm}
\setlength{\tabcolsep}{3.5pt}
\begin{tabular}{cccc cccc cccc}
\toprule
\multicolumn{4}{c}{\textbf{TA} \citep{ilharco2022editing}} 
& \multicolumn{4}{c}{\textbf{TIES} \citep{yadav2024ties}} 
& \multicolumn{4}{c}{\textbf{L\&S} \citep{he2024localize}} \\
\cmidrule(lr){1-4} \cmidrule(lr){5-8} \cmidrule(lr){9-12}
$100\%$ & \cellcolor{yellow!20}RandSelect & \cellcolor{yellow!20}PCA & \cellcolor{yellow!20}AE & $100\%$ & \cellcolor{yellow!20}RandSelect & \cellcolor{yellow!20}PCA & \cellcolor{yellow!20}AE & $100\%$ & \cellcolor{yellow!20}RandSelect & \cellcolor{yellow!20}PCA & \cellcolor{yellow!20}AE \\
\midrule

0.626 & \cellcolor{yellow!20}0.453 & \cellcolor{yellow!20}0.449 & \cellcolor{yellow!20}\textbf{0.472} 
& 0.600 & \cellcolor{yellow!20}0.453 & \cellcolor{yellow!20}0.469 & \cellcolor{yellow!20}\textbf{0.470} 
& 0.759 & \cellcolor{yellow!20}0.619 & \cellcolor{yellow!20}0.623 & \cellcolor{yellow!20}\textbf{0.733} \\
\bottomrule
\end{tabular}
\vspace{-0mm}
\end{table}

\Cref{tab:addition_vision} compares basis construction strategies across ViT models for vision tasks, and \Cref{tab:addition_language} is the comparison on the language benchmark with RoBERTa models. We include 3 popular merging methods, TA, TIES with coefficient tuning, and L\&S where the last one can be used to additionally compress task vectors with sparsity, and compare three reduced-basis approaches: RandSelect (randomly selecting available tasks), PCA, and our AE. We keep $50\%$ of the vectors in bases for vision and $25\%$ for languauge experiments. For a fair comparison, all bases methods use the same subsampling strategy in \Cref{sec:basis_arithmetic} to only use $n_iM/T$ validation data. While constructing L\&S bases, in RandSelect, we allow the method to only learn task masks for the selected task vectors, and in PCA, since we cannot disentangle nonnegative contributions from each original task to a principal component, we assign uniform weights across tasks where $\mathbf{W}_e[i,m] = 1/T$ in \Cref{eq:weighted_LandS}. See the alternative baseline only using positive weights for PCA in \Cref{tab:addition_vision_normalized}.

Across nearly all settings, AE achieves the best performance within each method block, consistently outperforming both RandSelect and PCA, implying that learning a compact AE basis captures more useful task interactions than other methods. The advantage of AE is especially pronounced in L\&S, where interpretability of bases vectors is central to the method’s validity. For several large-task regimes, AE or even RandSelect can outperform full-task merging, showing that fewer but more coherent vectors may lead to better generalization while reducing storage cost and merging time. As predicted in \Cref{sec:PCA}, PCA consistently performs poorly in vision experiments, and sometimes even worse than the pretrained baseline. We leave the comparison of bases methods across $M$ in \Cref{fig:offline_addition_M}. 

\begin{figure*}[ht]
\vspace{-0mm}
\centering
\begin{subfigure}{0.3\textwidth}
    \centering
    \includegraphics[width=\linewidth]{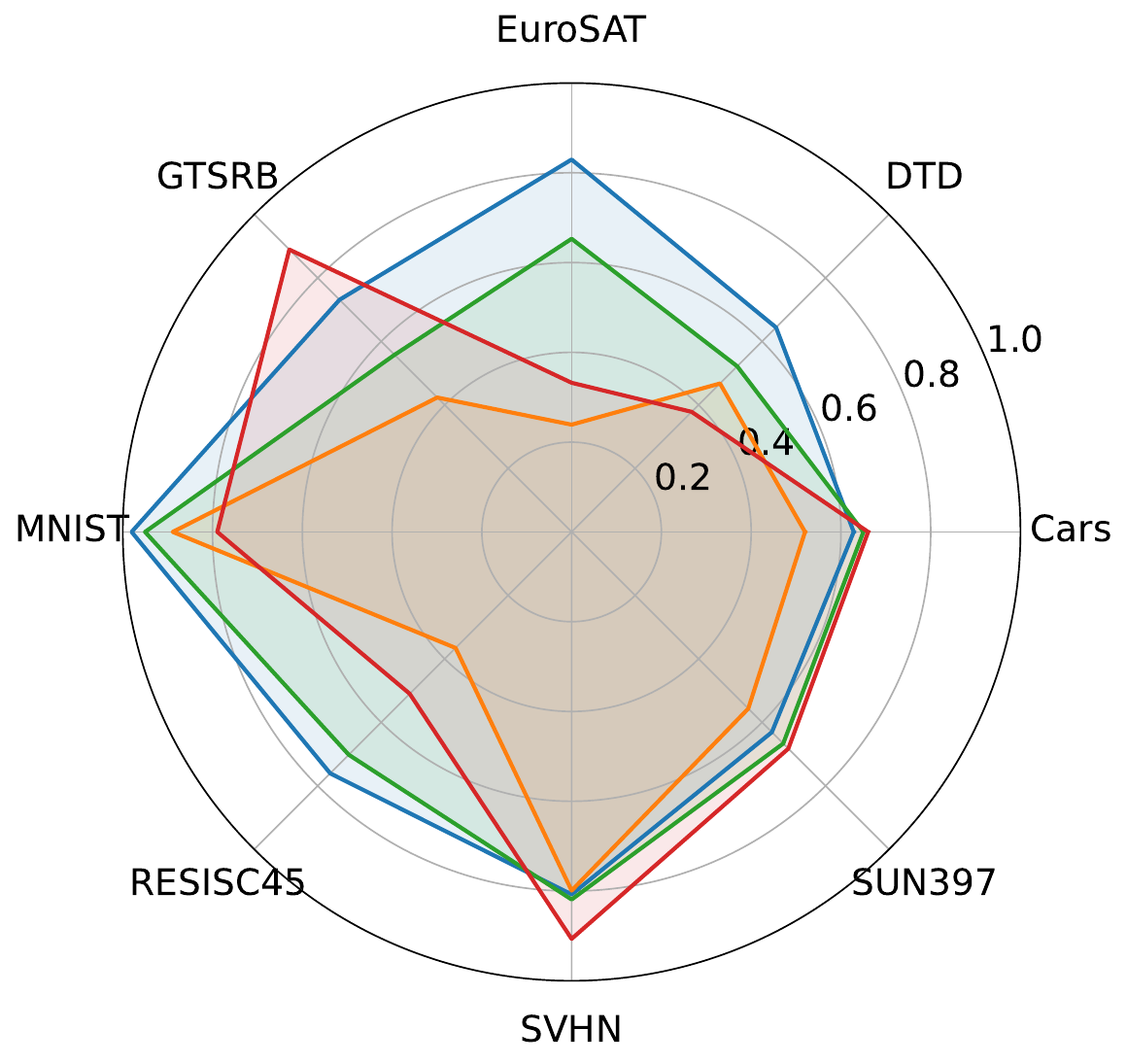}
    \vspace{-0mm}
    \caption{ViT-B/32, 8 tasks.}
    \label{fig:vitb32_TA_pertask}
\end{subfigure}\hfill
\begin{subfigure}{0.3\textwidth}
    \centering
    \includegraphics[width=\linewidth]{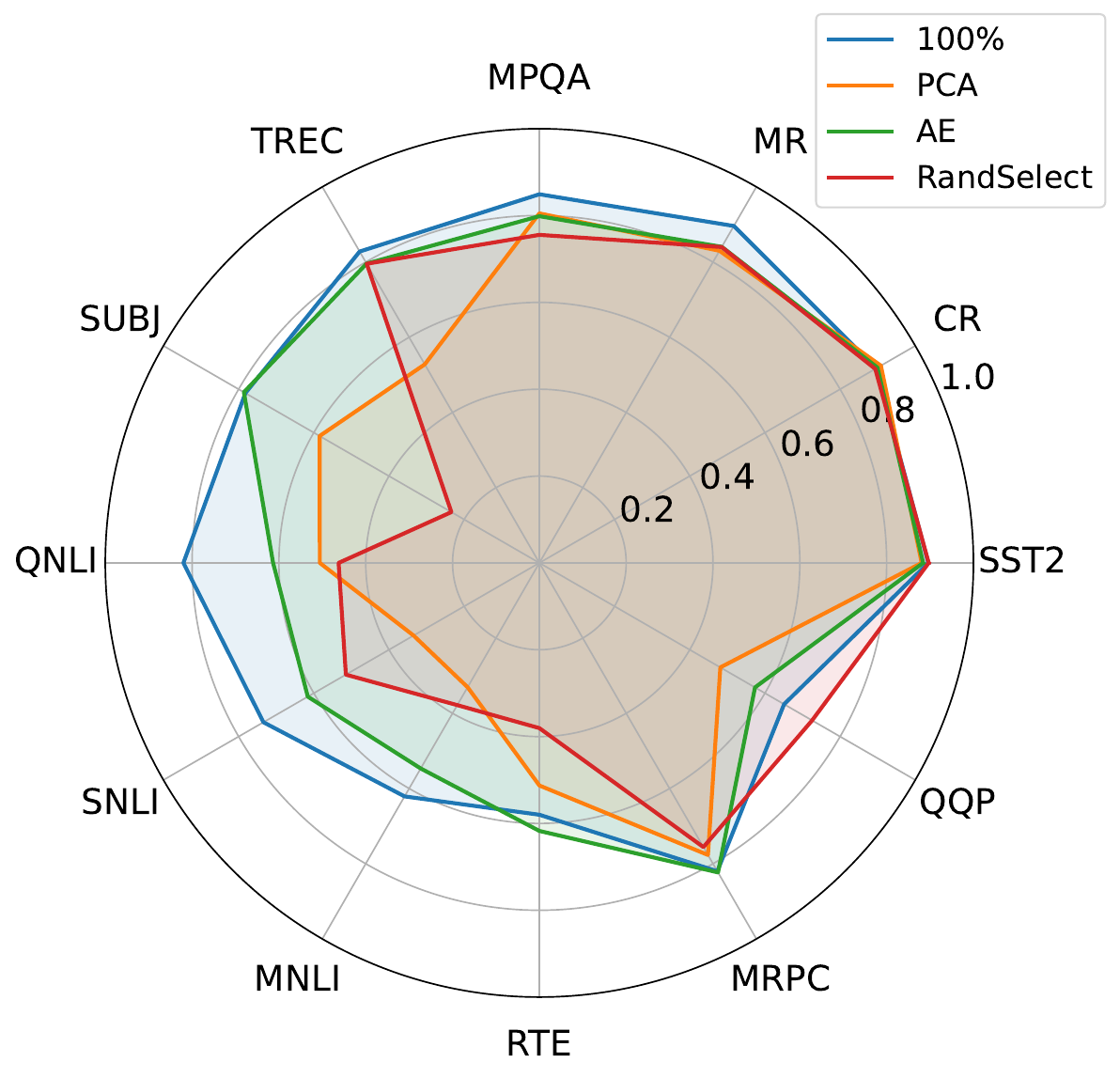}
    \vspace{-0mm}
    \caption{RoBERTa-base, 12 tasks}
    \label{fig:roberta_LandS_pertask}
\end{subfigure}\hfill
\begin{subfigure}{0.37\textwidth}
    \centering
    \includegraphics[width=0.9\linewidth]{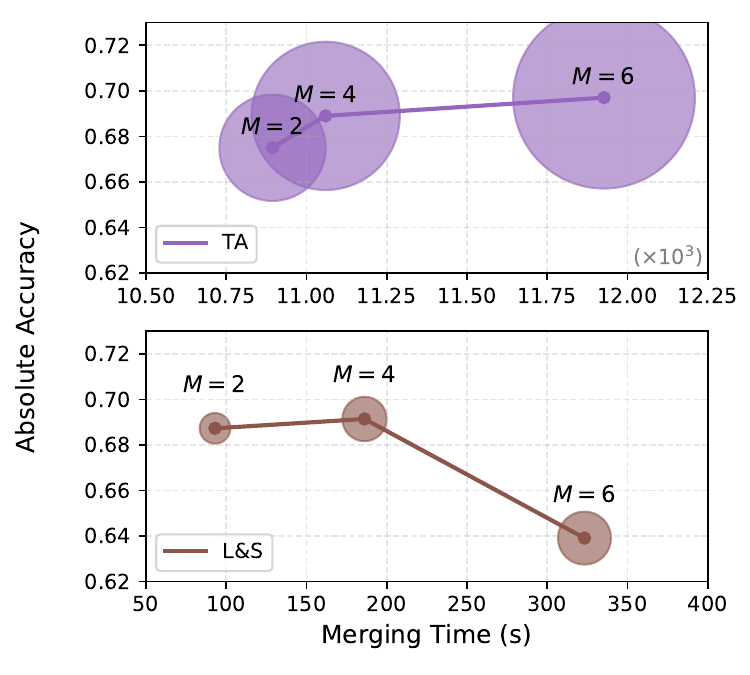}
    \vspace{-3mm}
    \caption{Acc. vs. merging time and storage.}
    \label{fig:efficiency_scaling}
\end{subfigure}
\vspace{-0mm}
\caption{(a)–(b) Radar plots showing per-task accuracy across vision ($100\%$ = TA) and language ($100\%$ = L\&S) benchmarks. (c) Absolute accuracy against merging time for different $M$, with circle size indicating disk storage cost in gigabytes (same scale across top and bottom).}
\vspace{-0mm}
\label{fig:all_three}
\end{figure*}

For per task results in \Cref{fig:vitb32_TA_pertask,fig:roberta_LandS_pertask}, we observe a nested pattern where the 100\% merge generally dominates or on par with AE, and AE in turn dominates PCA. Therefore, using all available task vectors provides the strongest signal, AE compresses them while preserving most of the structure, and PCA mixes wrong directions, leading to degraded performance. Unlike AE and PCA, RandSelect is inherently unstable: by dropping more than half the task vectors blindly, it can achieve remarkably strong performance on particular tasks (e.g., GTSRB and SVHN), but this comes at the cost of severe degradation on other tasks where critical information is lost (e.g., DTD and SUBJ). 

\Cref{fig:efficiency_scaling} show accuracy versus merging time, with bubble size indicating storage cost. Clearly, both merging time and storage grow with the number of bases $M$. This highlights that our basis compression method provides improvements in both time and space efficiency due to $M < T$. Importantly, our basis compression is complementary to sparsity-based approaches like L\&S, showing compatibility with existing task vector compression frameworks which may further compress bases storage up to roughly 90\% if bases are saved in CSR format. In terms of accuracy, for TA, increasing $M$ yields better accuracy but for L\&S, however, accuracy does not monotonically improve with larger $M$. In fact, accuracy drops for $M = 100\%$ in \Cref{tab:addition_vision} for certain settings, and adding too many task vectors may actually hurt performance due to increasing task conflicts \citet{ilharco2022editing}.

\subsubsection{Offline Fewshot OOD Generalization}
\label{sec:ood}

\begin{figure*}[ht]
\centering
\vspace{-0mm}
\begin{minipage}{0.55\textwidth}
    \centering
    \scalebox{
    1.0}{
    \small
    \setlength{\tabcolsep}{4pt}
    \begin{tabular}{lcc}
    \toprule
    \textbf{Method} & \textbf{2 shot} & \textbf{16 shot} \\
    \midrule
    aTLAS \citep{zhang2024knowledge}          & 0.826 & 0.837 \\
    aTLAS$_\text{subsample}$              & 0.819 & 0.835 \\
    \rowcolor{yellow!20} RandSelect           & 0.821 & 0.829 \\
    \rowcolor{yellow!20} PCA                     & 0.817 & 0.829 \\
    \rowcolor{yellow!20} AE (Ours)            & \textbf{0.822} & \textbf{0.830} \\
    \midrule
    aTLAS$^{\geq0}$   & 0.825 & 0.833 \\
    aTLAS$^{\geq0}_\text{subsample}$      & 0.820 & 0.830 \\
    \rowcolor{yellow!20} RandSelect           & \textbf{0.821} & 0.827 \\
    \rowcolor{yellow!20} PCA                     & 0.816 & 0.822 \\
    \rowcolor{yellow!20}AE (Ours)            & 0.820 & \textbf{0.828} \\
    \bottomrule
    \end{tabular}
    }
    \captionof{table}{ViT-B/32 results with OOD 6 tasks at $M=50\%$ of in domain 8 tasks. }
    \vspace{-0mm}
    \label{tab:ood}
\end{minipage}
\hfill
\vspace{-3mm}
\begin{minipage}{0.43\textwidth}
    \centering
    \vspace{-2mm}
    \includegraphics[width=\linewidth]{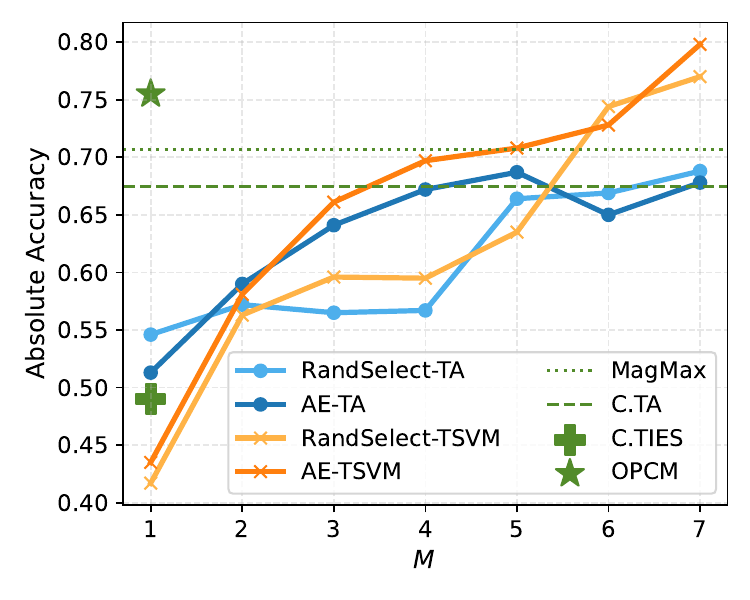}
    \vspace{-9mm}
    \caption{Online continual results on ViT-B/32 with 8 tasks varying the size of storage buffer.}
    \vspace{-0mm}
    \label{fig:continual_M}
\end{minipage}\hfill
\end{figure*}
\vspace{3mm}

\Cref{tab:ood} presents addition evaluated on unseen 6 OOD tasks by merging in domain 8 task vectors from \Cref{tab:addition_vision} under a few shot setting where direct finetuning on tiny target subset only creates weak models. We use aTLAS \citep{zhang2024knowledge} as the base addition method, a flexible framework where each scaling coefficient is a learned block matrix (more details see \Cref{tab:task_vector_vs_basis}). We compare its standard formulation with a square-parameterized version aTLAS$^{\geq0}$, which guarantees the nonnegativity of coefficients simulating \citep{ilharco2022editing} at the cost of minor performance drop.

We benchmark aTLAS and aTLAS$_{\text{subsample}}$ (trained with $100\%$ task vectors but only $50\%$ of the coefficient-learning data, and basis methods with $M=50\%$, evaluated under both unconstrained and nonnegative merging coefficients settings. In \Cref{tab:ood} $k$-shot refers to the number of per class samples for aTLAS without any subsampling. Key observations include: first, performance gap among basis methods is smaller than in \Cref{tab:addition_vision}, but PCA remains consistently worse likely due to its misaligned anchor not centered at $\theta_0$ mentioned in \Cref{sec:PCA}. Second, our AE method outperforms RandSelect and PCA in 3 out of 4 settings, particularly when coefficients are unconstrained, showing AE’s flexibility. Finally, with very limited data (2-shot), basis methods slightly outperform aTLAS$_{\text{subsample}}$ since aTLAS has higher degrees of freedom and requires more data during learning. But in 16-shot, aTLAS$_{\text{subsample}}$ catches up and surpasses the bases methods.

\vspace{-0mm}
\subsubsection{Online Continual Learning}

\Cref{fig:continual_M} illustrates the online continual task merging setting where we fix $M$ checkpoints stored in persistent memory and evaluate the final merged model $\theta^{(t)}$ on all $t$ tasks seen in the sequence. We compare two basis construction methods, RandSelect and AE, paired with two merging rules: TA and TSVM \citep{gargiulo2025task}. Note that the full offline TSVM accuracy with $M=100\%$ is 0.857. PCA is omitted since it consistently underperforms in offline addition experiments. For baselines, we also include prior continual merging methods in green: MagMax \citep{Marczak_Twardowski_Trzciński_Cygert_2024} (selecting maximum-magnitude task vector entries), Continual TA $\theta^{(t)} = \theta^{(t-1)} + \lambda \tau_t = \theta_0 + \lambda\sum_{i=1}^{t-1}\tau_i + \lambda\tau_t$, Continual TIES, and OPCM \citep{tang2025merging}. In prior work, continual merging was only defined for $M=1$, i.e., storing a single model checkpoint $\theta^{(t-1)}$ and merging it with the new task vector $\tau_t$. However, methods like MagMax and Continual TA can be reformulated as running statistics, making their $M=1$ version equivalent to the offline $M=100\%$ limit.

We see that as $M$ increases, accuracy steadily improves, and AE almost consistently outperforms RandSelect across both TA and TSVM, showing clear advantages for most values of $M$, although when $M \to T$, AE and RandSelect roughly coincide as they are both approaching full-rank approximation. Comparing to a fixed green baseline method, AE achieves better performance than RandSelect with fewer checkpoints. For example, with $M=4 \ (50\%)$, AE–TSVM already surpasses Continual TA, while RandSelect requires $M=6 \ (75\%)$. Finally, although MagMax and OPCM are specifically designed for online continual merging, pairing AE with a strong offline merging method (TSVM here) eventually outperforms specialized baselines once $M$ is moderately large. Thus, even a weak basis method like RandSelect, combined with an effective offline $\mathcal{M}$, provides strong continual merging performance without specialized online setup adaptations. With future advances in the field, we expect even smaller $M$ values to surpass SOTA continual merging baselines. The result across different sizes of ViT is included in \Cref{tab:continual_merging_8task_5runs} in the Appendix.

\subsection{Bases Negation}

In negation experiments, since RandSelect cannot be directly applied to negation (discarded task vectors cannot be recovered or inferred from saved bases without retraining task vectors), we propose RandProj as the random baseline, where bases are defined as the random orthogonal matrix 
$\mathbf{Q} \in \mathbb{R}^{d \times M}$ obtained from QR decomposition of a Gaussian random matrix. We save projection coefficients $\mathbf{C} = \mathbf{Q}^\top\mathbf{T} \in \mathbb{R}^{M \times T}$, and during negation, we reconstruct $T$ task vectors by $\hat{\mathbf{T}} = \mathbf{QC} = \mathbf{QQ}^\top\mathbf{V}$, projecting each task vector onto the 
random subspace spanned by $\mathbf{Q}$. 

\begin{figure*}[ht]
\centering
\vspace{-0mm}
\begin{minipage}{0.28\textwidth}
    \centering
    \centering
    \includegraphics[width=\linewidth]{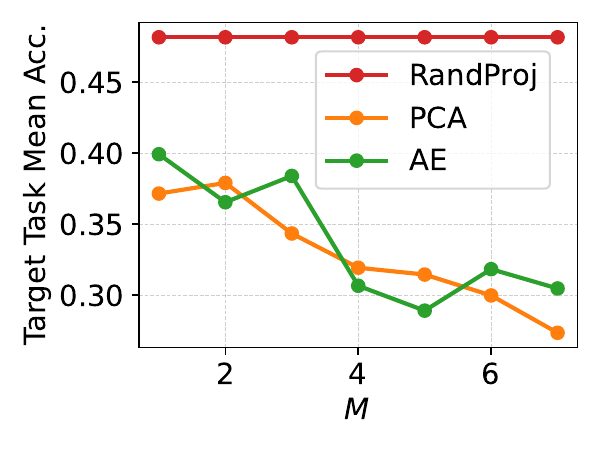}
    \vspace{-8mm}
    \caption{Target task forgetting as a function of $M$.}
    \label{fig:ViT-B-32_target_vs_M}
    \vspace{-0mm}
\end{minipage}
\hspace{1mm}
\begin{minipage}{0.7\textwidth}
    \centering
    \scalebox{0.8}{
    \begin{tabular}{lcccccc}
    \toprule
     & \multicolumn{2}{c}{\textbf{ViT-B/16}} & \multicolumn{2}{c}{\textbf{ViT-B/32}} & \multicolumn{2}{c}{\textbf{ViT-L/14}} \\
    \cmidrule(lr){2-3} \cmidrule(lr){4-5} \cmidrule(lr){6-7}
    \textbf{Method} & Target ($\downarrow$) & Control & Target ($\downarrow$) & Control & Target ($\downarrow$) & Control \\
    \midrule
    TA          & 0.213  & 0.654  & 0.240  & 0.649  & 0.190  & 0.729 \\
     Gradient Ascent & 0.019 & 0.007 & 0.027 & 0.003 & 0.039 & 0.163 \\
    \rowcolor{yellow!20} RandProj  & 0.494 & 0.683 & 0.482 & 0.633 & 0.589 & 0.755 \\
    \rowcolor{yellow!20} PCA         & 0.190 & 0.646 & 0.319 & 0.610 & 0.178 & 0.724 \\
    \rowcolor{yellow!20} AE & 0.255 & 0.659 & 0.307 & 0.605 & 0.270 & 0.734 \\
    \bottomrule
    \end{tabular}
    }
    \vspace{0mm}
    \captionof{table}{Target and control metrics comparison averaged under 8 vision tasks' unlearning setting. \colorbox{yellow!20}{Yellow} rows use reconstructed task vectors from $M=50\%$ bases. }
    \vspace{-0mm}
    \label{tab:negation_vit}
\end{minipage}
\end{figure*}

\Cref{tab:negation_vit} compares methods under the 8-task setting with $M=50\%$, reporting performance on both target and control tasks. On the target tasks, lower accuracy indicates better forgetting, while on control tasks (ImageNet \citep{deng2009imagenet}), higher accuracy indicates better retention of pretrained knowledge. We observe that PCA and AE both achieve significant forgetting compared to RandProj, and the difference between PCA and AE metrics can be treated as tradeoffs between target and control metrics. \Cref{fig:ViT-B-32_target_vs_M} reports target task mean accuracy as a function of the number of bases $M$. Both PCA and AE gradually comparably reduce target task accuracy as $M$ increases. This behavior is expected from \Cref{eq:spectral-LB}: with suitable hyperparameter tuning, the softmax AE variant can approximate the same spectral lower bound as PCA. In contrast, RandProj remains flat at roughly the same level as the pretrained model $\theta_0$, showing that it fails to forget even as $M$ grows, since random projections do not align with task-specific directions. We also include Gradient Ascent, a data aware unlearning baseline that maximizes loss on target data. Although it achieves the lowest target accuracy, it destroys control performance (e.g., 0.3\% on ViT-B/32), unlike task vector negation methods which preserve model utility.

\section{Related Work}
\label{sec:related_work}

\subsection{Compressed Task Arithmetic}

A number of recent efforts have sought to make task vector methods more scalable through compression, and we discuss the broader scope of task arithmetic and model merging in \Cref{app:addition_related_work}. One line of work focuses on localization or sparsification, identifying subsets of parameters most relevant for each task and masking out the rest. By sparsifying task updates into different subspaces, these methods reduce task interference and improve memory efficiency, since only sparse weights or masks are stored~\citep{he2024localize, yadav2024ties, yu2024language, davari2025model, tang2023concrete, wang2024localizing}. A complementary line explores quantization~\citep{liu2024bitdelta, huang2025dynamic}, where task vectors are quantized directly without notable degradation in merging performance~\citep{kim2025task}. Sparsification and quantization both act along the parameter‐dimension axis of the task matrix (reducing $d$), while our Task Vector Bases approach operates on the task‐count axis (reducing $T$); we empirically verify this orthogonality in \Cref{app:precision}. For single-task model merging (\Cref{sec:single-task}), prior work proposes alternative optimization algorithms~\citep{li2024scalable} or assumes fine-tuned weights lie in a thin Gaussian shell~\citep{jang2025model}, a different context from our multitask setup.

\subsection{Bases in Low-rank and Subspace Merging Methods}

Task Singular Vectors (TSV, -M for merging and -C suffix for compression) \citep{gargiulo2025task} and \citet{marczak2025no} study task updates at the layer level and apply SVD to decompose unflattened task matrices, where the latter further differentiate between task-shared and task-specific subspaces. While these works also rely on eigenbasis constructions, they differ in both motivation and scope from ours: they both require access to all full task vectors when computing singular components, and they primarily target improving addition performance when $M = 100\%$. Besides, TSV-C only supports model compression only when the task metadata is known or inferred through routing based merging methods \citep{tang2024merging}. In contrast, our Task Vector Bases framework is designed as a general compression mechanism that unifies any downstream applications not only limited to addition, and aim to approximate (typically treated as upper bound) $M = 100\%$ metrics with $M < 100\%$ bases. 

\subsection{Parameter-Efficient Fine-Tuning and Low-Rank Adaptation}

Another closely related line of work comes from parameter-efficient fine-tuning, especially low-rank adaptation methods. LoRA \citep{hu2021lora} freezes the pretrained weights and learns low-rank update matrices, dramatically reducing the number of trainable parameters and adaptation memory relative to full fine-tuning. QLoRA \citep{dettmers2023qlora} further combines LoRA with 4-bit quantization, enabling efficient fine-tuning of very large models while largely preserving the quality of 16-bit fine-tuning. QA-LoRA \citep{xu2023qa} makes this adaptation quantization-aware so that the learned low-rank updates integrate more naturally with quantized deployment. More recent variants such as EoRA \citep{liu2024eora} and CLoQ \citep{deng2025cloq} further improve efficiency for quantized models, with EoRA providing fine-tuning-free low-rank compensation and CLoQ using calibration-based initialization to find optimal LoRA components for quantized LLMs before fine-tuning. These methods address the same broad goal of efficient task adaptation, but they operate along a different axis: they reduce the per-task adaptation complexity in parameter space, effectively compressing along the model/update dimension $d$, whereas our focus is on compressing a collection of $T$ task vectors into a smaller set of shared basis vectors. This distinction is critical for the online setting where $T \to \infty$: methods that compress only along $d$ still store one vector per task, so storage grows without bound, whereas our bases representation remains fixed at $M$ vectors regardless of how many tasks arrive (see \Cref{tab:qlora} for a compatibility experiment combining LoRA with our bases methods).

\subsection{Matrix Factorization and Dimensionality Reduction Methods}
A related line of work has explored nonnegative variants of matrix factorization such as nonnegative PCA \citep{montanari2015non} and nonnegative matrix factorization (NMF) \citep{fevotte2011algorithms, cichocki2009fast}. These approaches have been proposed as remedies for the interpretability limitations of PCA, since enforcing nonnegativity on either the basis vectors or the coefficients ensures that components can be interpreted as additive components. However, applying these methods in our setting is not straightforward. Standard NMF requires the input matrix itself to be nonnegative, which is incompatible with task vectors that contain signed weight updates. Nonnegative PCA similarly constrains basis vectors to the nonnegative orthant, preventing them from aligning with unconstrained task vector directions. Another family of dimensionality reduction methods includes sparse coding and dictionary learning \citep{mairal2009online}, which learn basis atoms and sparse codes for reconstructing high-dimensional data. While these approaches are applicable to signed inputs, they differ from our design in a critical way. In sparse coding, the learned basis vectors are unconstrained. In summary, even if preprocessing tricks for NMF (e.g., splitting positive and negative channels or affine shifts) are applied, both type of methods distort the geometry of the task-vector cone and break the guarantees needed for task arithmetic: without softmax constraints, adding coefficient vectors (the operation underlying task addition) may yield mixtures outside the cone spanned by the original tasks as in PCA, thus breaking the structure that our analysis relies upon and can widen the addition generalization gap.

\section{Conclusion}
We introduced Task Vector Bases, a unifying framework for compressing collections of task vectors into a compact set of basis vectors. This approach addresses the key computational bottlenecks of task vector methods—space and time complexity scaling linearly with the number of tasks—while preserving compatibility with all standard arithmetic operations. Empirically, Task Vector Bases not only reduce storage and computation but also improve task performance over heuristic alternatives such as PCA or random selection. Our analysis further clarifies the generalization performance difference between full task vectors and compressed bases, showing that bases provide a scalable and effective representation for model editing. We hope this work establishes Task Vector Bases as a practical building block for future research on efficient and interpretable weight space interventions.

\paragraph{Broader Impact.}
This work is primarily methodological and efficiency-oriented. However, we acknowledge that compressed task arithmetic, particularly task negation and model editing, could in principle be applied to remove safety constraints or alignment mechanisms from deployed models. This dual-use concern is not unique to our framework and applies broadly to model editing research. We encourage practitioners to apply appropriate safeguards when deploying models modified through any weight space intervention, including the methods proposed here.

\section*{Acknowledgment}
Part of this work was done when both SZ and HZ were visiting the Simons Institute via the Modern Paradigm in
Generalization program. SZ, YH, ML and HZ were partly supported by an NSF IIS grant No.\ 2416897 and an NSF CAREER Award No.\ 2442290. HZ would like to thank the support of a Google Research Scholar Award and Nvidia Academic Grant Award. WY was supported by a gift from AWS AI to Penn Engineering's ASSET Center for Trustworthy AI and NSF award CCF 2442421. MY was partly supported by JSPS KAKENHI Grant Number 24K03004 and by JST ASPIRE JPMJAP2302. The views and conclusions expressed in this paper are solely those of the authors and do not necessarily reflect the official policies or positions of the supporting companies and government agencies. We want to express a special thanks to Yuechun Sun who pointed out a way to improve the upper bound for task addition theorem. We are also thankful to Wei Hu, Yuki Takezawa, Jose Renato Restom and anonymous reviewers
for other helpful discussions.

\newpage

\bibliography{main}
\bibliographystyle{tmlr}

\newpage

\appendix
\section{Additional Related Work}
\label{app:addition_related_work}

\subsection{Single-task Merging Methods}
\label{sec:single-task}
Prior to Task Arithmetic \citep{ilharco2022editing}, researchers discussed how to combine models fine-tuned on the same task, with some minor differences due to hyperparameter changes, as an alternative to ensembles, starting with model soup \citep{pmlr-v162-wortsman22a}. Since fine-tuned models capture more domain-specific skills while pretrained models contain more generic knowledge, WiSE-FT \citep{wortsman2021robust} proposed merging the pretrained model and the fine-tuned model via linear interpolation, achieving balanced or even optimal performance on both in-domain and out-of-distribution generalization metrics. \citep{izmailov2018averaging} introduced stochastic weight averaging, which includes intermediate checkpoints before model convergence for model merging. Several close variants, such as exponentially moving averaging \citep{szegedy2016rethinking} and LAtest Weight Averaging \citep{kaddour2022stop, sanyal2023early}, have been explained theoretically under a unified framework \citep{wang2024unified}.

\subsection{Multi-task Merging Methods}
The major difference from \cref{sec:single-task} is that all methods discussed in this subsection focus on the setting that one pretrained model is fine tuned on many different tasks. Task arithmetic \citep{ilharco2022editing} can be seen as the generalization of the single-task model merging method, model soup \citep{pmlr-v162-wortsman22a}, where task vectors are simply averaged. In \citep{ilharco2022editing}, however, the scaling coefficients $\alpha$ are allowed to be tuned. Since then, several ideas have been proposed to improve task arithmetic. First, since tuning $\alpha$ is time-consuming, popular approaches such as Fisher merging \citep{matena2022merging}, RegMean \citep{jin2022dataless}, AdaMerging \citep{yang2023adamerging}, Evol \citep{akiba2024evolutionary} aim to find better methods to automatically adjust scaling coefficients for improved task arithmetic performance. Second, instead of using standard fine-tuning to obtain $\tau$, alternative fine-tuning methods, such as tangent space fine-tuning \citep{ortiz2024task} and parameter-efficient fine-tuning methods \citep{zhang2023composing, tang2023parameter, stoica2024model}, are employed in task arithmetic to disentangle task information for better merging. Third, to reduce task vector conflicts, task vectors can be sparsified into different subspaces by localization as we discussed in \Cref{sec:related_work}. Finally, inspired by the Mixture-of-Experts \citep{shazeer2017outrageously} mechanism, task vector merging performance can be enhanced by learned routers that dynamically merge task-specific and task-shared information \citep{lu2024twin, tang2024merging}. For more details on the latest task arithmetic methods and their applications, we refer readers to the model merging survey \citep{yang2024model}.

\section{Proof Details}
\label{app:proof_details}
\subsection{Exact Achievability with Softmax Encoder}
\label{app:softmax_proof}
\begin{lemma}[Softmax surjects onto the simplex interior]
\label{lem:softmax-surject}
Write \(\mathrm{int}(\Delta^{T-1})=\Delta^{T-1}\cap \R^T_{++}\) for the interior of the simplex. For any \(b\in \mathrm{int}(\Delta^{T-1})\) and any \(\tau>0\), there exists \(a\in\R^T\)
such that \(\softmax(a/\tau)=b\). One choice is
\(a_i=\tau\log b_i + c\) for any constant \(c\in\R\).
\end{lemma}

\begin{proof}
This is immediate from the definition of softmax and the invariance under adding a constant:
\(\softmax(z)_i = e^{z_i}/\sum_j e^{z_j}\).
\end{proof}

\SoftmaxEquality*

\begin{proof}

Let \(\R^T_{++}:=\{x\in\R^T:x>0\}\). The statement is equivalent to say $x_j \in \mathbb{R}_{++}^T$ for all $j$.

($\Rightarrow$) If the optimum is achieved by some $\mathbf{W}_e$ with 
columns $w_1,\dots,w_M\in \mathrm{int}(\Delta^{T-1})$, then 
\citep{baldi1989neural} implies that the column space of $\mathbf{W}_e$ 
must equal $S_\star$. Since each $w_m$ is strictly positive, we conclude 
$w_m \in S_\star \cap \mathbb{R}^T_{++}$, and the $w_m$ are linearly 
independent as they span $S_\star$. 

($\Leftarrow$) Conversely, suppose there exist $M$ independent 
$x_1,\dots,x_M\in S_\star \cap \mathbb{R}^T_{++}$. 
Normalize each to sum to one, 
$w_m := x_m/(\mathbf{1}^\top x_m)\in \mathrm{int}(\Delta^{T-1})$. 
Set $\mathbf{W}_e=[w_1 \;\cdots\; w_M]$, which has column space $S_\star$. 
By surjectivity of softmax (Lemma~\ref{lem:softmax-surject}), 
there exists $\mathbf{A}$ such that 
$\mathrm{softmax}(\mathbf{A}/\tau)=\mathbf{W}_e$ (column-wise softmax). 
Then by \citep{baldi1989neural} with the least-squares optimal decoder $\mathbf{W}_d$, 
the reconstruction error equals the spectral bound, 
achieving the optimum. 
\end{proof}

\subsection{Generalization of Task and Bases Arithmetic}
\label{app:arithmetic_proof_details}

\subsubsection{Common Assumptions}

We first introduce several practical shared common assumptions used in our theorems.

\begin{assumption}[Fine-tuning Regime]
    \emph{We assume that $\forall i\in [T], \frac{\partial\mathcal{L}_i(\theta_i)}{\partial{\theta}} = \mathbf{0}$ and $\exists C > 0$ such that $\norm{\tau_i}^2 \leq C$.} 
\label{ass:finetuning}
\end{assumption}

This assumption is often met in practice since $\theta_i$ is fine-tuned from the pre-trained model $\theta_0$ on the particular downstream task $\mathcal{D}_i$ until convergence. Furthermore, during the fine-tuning regime, the change of model parameters is relatively small. Through a sparsity localization technique,~\citep{he2024localize} show that it is sufficient to only fine-tune 1\%$\sim$5\% of the model parameters for competitive performances.

\begin{assumption}[Local Smoothness]
    \emph{Any fine tuning loss function $\mathcal{L}$ is $L_i$-locally smooth w.r.t.\ model parameters at $\theta_i$, which means for any $\theta \in \Theta$ such that $\|\theta - \theta_i\|^2 = O(C), \mathcal{L}(\theta) - \mathcal{L}(\theta_i) \leq \left<\theta - \theta_i, \frac{\partial{\mathcal{L}(\theta_i)}}{\partial\theta}\right> + \frac{L_i}{2}\norm{\theta - \theta_i}^2.$}
    Note that $\theta_i$ is the fine-tuned model trained on $\mathcal{D}_i$ and $L_i = \norm{\mathbf{H}(\theta_i)}_2$ is the spectral norm of the Hessian matrix of $\mathcal{L}$, evaluated locally at $\theta_i$. We hide the subscript of $L_i$ when the context is clear. 
    \label{ass:smooth}
\end{assumption}

Smoothness is a standard assumption in optimization theory \citep{garrigos2023handbook} and has been used in recent work on Sharpness-Aware Minimization \citep{foret2020sharpness,wen2022does} to encourage flatter minima and improve generalization. Since we focus mainly on the fine-tuning regime in the analysis, we only consider smoothness in a local region.

\begin{assumption}[Scaling Coefficients]
\label{ass:coeff}
    \emph{Let $\alpha_1, \dots, \alpha_T$ be the coefficients used to scale the task vector in task arithmetic. We assume $\alpha_i \geq 0, \forall i$ and $\sum_{i\in[T]}\alpha_i = 1$.}
\end{assumption}
\vspace{-2mm}

\subsubsection{Task Addition \& Basis Addition}
\label{sec:theory_add}

\begin{restatable}[Task Addition for Multitask Learning]{theorem}{add}
    Let $0 < \epsilon \leq 1$ be a universal constant such that $\forall i\neq j, |\cos(\tau_i, \tau_j)| \leq \epsilon$.\footnote{This is NOT assuming all task vectors are near-orthogonal. Depending on the benchmark, $\epsilon$ can be as large as $1$ where task vectors are completely aligned (or flipped). Note that task vector orthogonality itself cannot guarantee merging performance: even if PCA creates orthogonal task vectors its performance can be significantly worse than our AE method which does not ensure task vector bases orthogonality.} Let task addition $\theta_\mathrm{Add}^T = \theta_0 + \sum_{j=1}^T\alpha_j\tau_j$ be the model parameter used for multitask learning, then $\forall i\in[T]$,
    \vspace{0mm}
    \begin{equation}
        \mathcal{L}_i(\theta_\mathrm{Add}^T) - \mathcal{L}_i(\theta_i) \leq L_iC(1 + \epsilon).
        \vspace{-2mm}
    \end{equation}
\label{thm:task-addition}
\end{restatable}
\begin{proof}
Note that since $\theta_\mathrm{Add}^T = \theta_0 + \sum_{j=1}^T \alpha_j\tau_j$, so
\begin{equation*}
    \|\theta_\mathrm{Add}^T - \theta_0\|^2 = \left\|\sum_{i=j}^T\alpha_j \tau_j\right\|^2 \leq \left(\sum_{j=1}^T\alpha_j \|\tau_j\|\right)^2 \leq C,
\end{equation*}
which means that $\theta_\mathrm{Add}^T$ is within the fine-tuning regime and satisfies the local smoothness assumption. Hence, if $x\sim\mathcal{D}_i$
\begin{align*}
    \gL_i(\theta_\mathrm{Add}^T) - \mathcal{L}_i(\theta_i) &\leq \left<\theta_\mathrm{Add}^T - \theta_i, \frac{\partial\mathcal{L}_i(x, \theta_i)}{\partial{\theta}}\right> + \frac{L_i}{2}\norm{\theta_\mathrm{Add}^T - \theta_i}^2 \tag{\cref{ass:smooth}}\\
    &= \left<\sum\limits_{j=1}^T\alpha_j\tau_j - \tau_i, \cancel{\frac{\partial\mathcal{L}_i(x, \theta_i)}{\partial{\theta}}}\right> + \frac{L_i}{2} \norm{\sum\limits_{j=1}^T\alpha_j\tau_j - \tau_i}^2 \tag{\cref{ass:finetuning}}
\end{align*}

To bound the second norm term, we reassign the subscript $\alpha_k := \alpha_i$ as the coefficient for the $i$-th task to avoid confusion with the summation indices. Next we define $t_j := \alpha_j\ (j\neq k), t_k := \alpha_k - 1$. Since $\sum_{i}\alpha_i = 1$, we have $t_k + \sum_{j \neq k} t_j = 0$. Then,
\begin{align}
    \norm{\sum_{j=1}^T\alpha_j\tau_j - \tau_i}^2 &= \norm{\sum_{j=1}^T\alpha_j\tau_j - \tau_k}^2 \nonumber \\
    &= \norm{\sum_{i=1}^n t_i\tau_i}^2 \nonumber \\
    &\leq \sum_{i=1}^n{t_i}^2\tau_i^2 + 2\sum_{1 \leq i < j \leq n}\left|t_it_j\right|\cdot\left|\left<\tau_i,\tau_j\right>\right| \nonumber \\
    &\leq C\left(\sum_{i=1}^n t_i^2 + 2\epsilon \sum_{1 \leq i < j \leq n}\left|t_it_j\right|\right) \label{eq:cross-product} 
\end{align}
By \cref{ass:coeff}, $t_k \leq 0$. Besides, we have $-t_k = t_1 + \dots + t_{k-1} + t_{k+1} + \dots + t_n = \sum_{i\neq k}t_i$. We can bound the cross-product term as follows:
\begin{align}
    \sum_{1\leq i<j\leq n}\left|t_it_j\right| &= -t_k\sum_{i\neq k}\left|t_i\right| + \sum_{1\leq i < j \leq n, i\neq k, j \neq k}t_it_j \nonumber \\
    &= t_k^2 + \sum_{1\leq i < j \leq n, i\neq k, j \neq k}t_it_j \nonumber \\
    &= t_k^2 + \frac{1}{2}\left(\left(\sum_{i \neq k}t_i\right)^2 - \sum_{i\neq k}t_i^2\right) \nonumber \\
    &= t_k^2 + \frac{1}{2}\left(t_k^2 - \left(\sum_i t_i^2 - t_k^2\right)\right) \nonumber \\
    &= 2t_k^2 - \frac{1}{2}\sum_{i}t_i^2 \nonumber 
\end{align}
Plug it back into \cref{eq:cross-product}, we have
\begin{align}
    \sum_{i}t_i^2 + 2\epsilon\sum_{1 \leq i < j \leq n}\left|t_i t_j\right| &= \sum_{i}t_i^2 + 2\epsilon\left(2t_k^2 - \frac{1}{2}\sum_{i}t_i^2\right) \nonumber \\
    &= (1-\epsilon)\sum_{i}t_i^2+4\epsilon t_k^2 \nonumber \\
    &= (1-\epsilon)\left[\sum_{i\neq k}\alpha_i^2 + (\alpha_k - 1)^2\right] + 4\epsilon (\alpha_k - 1)^2 \nonumber \\
    &= (1-\epsilon)\left[\sum_{i}\alpha_i^2 - 2\alpha_k + 1\right] + 4\epsilon (\alpha_k - 1)^2 \nonumber  \\
    &\leq (1-\epsilon)\cdot 2 + 4\epsilon\cdot 1 \tag{\cref{ass:coeff}}\\
    &= 2 + 2\epsilon \nonumber
\end{align}
To conclude, we have 
\begin{equation*}
           \gL_i(\theta_\mathrm{Add}^T) - \mathcal{L}_i(\theta_i) \leq \frac{L_iC}{2}(2 + 2\epsilon) = L_i C(1 + \epsilon).\qedhere
\end{equation*}
\end{proof}

\cref{thm:task-addition} shows that as long as the task vectors reside in the fine-tuning regime and task vectors are dissimilar enough, then a single model obtained by model merging simultaneously performs comparably well on all the tasks. The local smoothness constant $L_i$ in the generalization bound implies that a flatter minima is preferred in model merging \citep{iurada2025efficient, lee2025mitigating}, which also related to the Fisher weighted averaging method~\citep{matena2022merging} as $\mathbf{H}$ agrees with the Fisher information matrix when $\ell$ is the cross-entropy loss which is a log-likelihood.

The following corollary shows that the softmax-activated Autoencoder bases formulation will not introduce additional performance gap in the upper bound of vector addition. 

\begin{corollary}[Basis addition reduces to task addition]
\label{cor:basis-to-add}
Let $B_m=\sum_{j=1}^T \mathbf{W}_e[j,m]\tau_j$ as defined in \Cref{eq:basis_expansion}. Define the basis-merged model $\theta_\mathrm{Add}^M = \theta_0+\sum_{m=1}^M \alpha_m B_m.$
For every $i\in[T]$,
\begin{equation}
    \mathcal{L}_i(\theta_\mathrm{Add}^M)-\mathcal{L}_i(\theta_i)\ \le\ L_i\,C\,(1+\epsilon).
\end{equation}
Therefore, bases addition share the same generalization bound as standard task addition. 
\end{corollary}

\begin{proof}
Define effective task weights $\phi_j = \sum_{m=1}^M\alpha_m\mathbf{W}_e[j,m]$.
Since $\alpha\in\Delta^{M-1}$ and each $\mathbf{W}_e[:,m]\in\Delta^{T-1}$, we have $\phi\in\Delta^{T-1}$ ($\phi_j\ge 0$ and $\sum_{j=1}^T\phi_j=1$). Hence
\begin{equation*}
    \theta_\mathrm{Add}^M-\theta_0
=\sum_{m=1}^M \alpha_m\sum_{j=1}^T \mathbf{W}_e[j,m]\tau_j = \sum_{j=1}^T \left(\sum_{m=1}^M \alpha_m \mathbf{W}_e[j,m]\right)\tau_j
=\sum_{j=1}^T \phi_j \tau_j,
\end{equation*}
so $\theta_\mathrm{Add}^M$ is a convex combination of task vectors.

By \cref{ass:finetuning}, $\|\tau_j\|^2\le C$. Using $\phi\in\Delta^{T-1}$,
\begin{equation*}
    \|\theta_\mathrm{Add}^M-\theta_0\|
=\Big\|\sum_j \phi_j \tau_j\Big\|
\le \sum_j \phi_j \|\tau_j\|
\le \sqrt{C},
\end{equation*}
so $\|\theta_\mathrm{Add}^M-\theta_0\|^2\le C$, placing $\theta_\mathrm{Add}^M$ in the local region where \cref{ass:smooth} applies. 

The claim follows immediately from \Cref{thm:task-addition}. \qedhere
\end{proof}

\subsubsection{OOD Generalization with Task Vectors \& Bases}

Though similar tasks represented by $\epsilon$ may hurt addition and negation (see \Cref{thm:negation_task_vector}), it is possible to achieve OOD generalization given insights from \citep{tripuraneni2020theory,hu2024revisiting}.

\begin{restatable}[Out-of-Distribution Generalization]{theorem}{generalize}
    Given a collection of source task vectors $\mathcal{S} = \{\tau_1, \tau_2, \dots, \tau_T\}$ and a target task vector with $\|\tau_\mathrm{tar}\|^2\leq C$.  If $\exists i\in[T]$ such that $\langle \tau_\mathrm{tar}, \tau_i\rangle \geq \gamma C$ for $0 < \gamma \leq 1$, then there exists a merging scheme $\alpha_i, i\in[T]$ such that for the merged model $\theta_\mathrm{Add}^T = \theta_0 + \sum_{i=1}^T \alpha_i \tau_i$,
    \begin{equation}
        \mathcal{L}_\mathrm{tar}(\theta_\mathrm{Add}^T) \leq \mathcal{L}_\mathrm{tar}(\theta_\mathrm{tar}) + L_\mathrm{tar}C(1-\gamma).
    \end{equation}
\label{thm:task-generalization}
\end{restatable}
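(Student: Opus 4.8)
The plan is to mirror the proof of \Cref{thm:task-addition}, but with a carefully chosen merging scheme that isolates the one source task $i$ that aligns with the target. Concretely, I would set $\alpha_i = 1$ for the index $i$ guaranteed by the hypothesis and $\alpha_j = 0$ for all $j \neq i$; this clearly satisfies \Cref{ass:coeff}. The merged model then collapses to $\theta^* = \theta_0 + \tau_i = \theta_i$, the fine-tuned model for task $i$. So the entire question reduces to bounding $\mathcal{L}_\mathrm{tar}(\theta_i) - \mathcal{L}_\mathrm{tar}(\theta_\mathrm{tar})$, i.e.\ how much worse the task-$i$ fine-tuned model is on the target task compared to the target's own fine-tuned model.

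First I would invoke local smoothness of $\mathcal{L}_\mathrm{tar}$ at $\theta_\mathrm{tar}$: provided $\|\theta_i - \theta_\mathrm{tar}\|^2 = O(C)$, we have
\[
\mathcal{L}_\mathrm{tar}(\theta_i) - \mathcal{L}_\mathrm{tar}(\theta_\mathrm{tar}) \le \left\langle \theta_i - \theta_\mathrm{tar},\, \frac{\partial \mathcal{L}_\mathrm{tar}(\theta_\mathrm{tar})}{\partial\theta}\right\rangle + \frac{L_\mathrm{tar}}{2}\|\theta_i - \theta_\mathrm{tar}\|^2.
\]
By \Cref{ass:finetuning} applied to the target task, $\frac{\partial\mathcal{L}_\mathrm{tar}(\theta_\mathrm{tar})}{\partial\theta} = \mathbf{0}$, so the first-order term vanishes and we are left with $\frac{L_\mathrm{tar}}{2}\|\theta_i - \theta_\mathrm{tar}\|^2$. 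Now I expand $\|\theta_i - \theta_\mathrm{tar}\|^2 = \|\tau_i - \tau_\mathrm{tar}\|^2 = \|\tau_i\|^2 + \|\tau_\mathrm{tar}\|^2 - 2\langle \tau_i, \tau_\mathrm{tar}\rangle \le C + C - 2\beta C = 2C(1-\beta)$, using $\|\tau_i\|^2 \le C$ and $\|\tau_\mathrm{tar}\|^2 \le C$ from the norm bounds together with the alignment hypothesis $\langle \tau_\mathrm{tar}, \tau_i\rangle \ge \beta C$. Substituting gives $\mathcal{L}_\mathrm{tar}(\theta^*) - \mathcal{L}_\mathrm{tar}(\theta_\mathrm{tar}) \le \frac{L_\mathrm{tar}}{2}\cdot 2C(1-\beta) = L_\mathrm{tar}C(1-\beta)$, which is exactly the claimed bound.

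The main obstacle is the applicability condition of local smoothness: the assumption as stated only guarantees the quadratic upper bound for $\theta$ with $\|\theta - \theta_\mathrm{tar}\|^2 = O(C)$, and I need to check that $\theta_i$ falls in this neighborhood. The computation above already shows $\|\theta_i - \theta_\mathrm{tar}\|^2 \le 2C(1-\beta) \le 2C = O(C)$, so this is in fact automatically satisfied — but it is worth stating explicitly, since it is the one place where the hypotheses interlock rather than combine additively. A secondary point worth a sentence is that one could instead allow more source vectors in the merge (nonzero $\alpha_j$ for $j$ with $\langle \tau_\mathrm{tar},\tau_j\rangle$ also reasonably large) and optimize, but since the theorem only asserts \emph{existence} of a merging scheme achieving the bound, the one-hot choice suffices and keeps the argument clean.
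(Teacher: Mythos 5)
Your proof is correct and follows essentially the same route as the paper: pick the one-hot merging scheme concentrating all weight on the well-aligned source task, invoke local smoothness at $\theta_\mathrm{tar}$ with the vanishing gradient term, and expand $\|\tau_i - \tau_\mathrm{tar}\|^2$ to get $2C(1-\beta)$. Your explicit check that $\|\theta_i - \theta_\mathrm{tar}\|^2 \le 2C(1-\beta) = O(C)$ falls within the local-smoothness neighborhood is a welcome precision that the paper leaves implicit.
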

\begin{proof}
Let $i^* = \argmax_{i\in[T]}\langle \tau_\mathrm{tar}, \tau_i\rangle$ and choose $\alpha_{i^*} = 1$, $\alpha_j = 0,~\forall j\neq i^*$. Clearly $\langle \tau_\mathrm{tar}, \tau_{i^*}\rangle \geq \beta C$ and $\theta_\mathrm{Add}^T = \theta_0 + \tau_{i^*}$. It is also easy to check that $\|\theta_\mathrm{Add}^T - \theta_\mathrm{tar}\| \leq 4C$. So by the local smoothness assumption of $\gL_\mathrm{tar}$, we have
\begin{align*}
    \gL_\mathrm{tar}(\theta_\mathrm{Add}^T) - \gL_\mathrm{tar}(\theta_\mathrm{tar}) &\leq \frac{L_\mathrm{tar}}{2}\|\theta_\mathrm{Add}^T - \theta_\mathrm{tar}\|^2 \\
    &= \frac{L_\mathrm{tar}}{2}\|\tau_{i^*} - \tau_\mathrm{tar}\|^2 \\
    &\leq \frac{L_\mathrm{tar}}{2}\left( C - 2\langle \tau_\mathrm{tar}, \tau_{i^*}\rangle + C\right) \\
    &\leq L_\mathrm{tar}C(1- \gamma).\qedhere
\end{align*}
\end{proof}

This implies when $\gamma$, which roughly corresponds to the similarity of the two task vectors, is large enough, the gap between $\mathcal{L}_\mathrm{tar}(\theta_\mathrm{Add}^T)$ and $\mathcal{L}_\mathrm{tar}(\theta_\mathrm{tar})$ is small, so we can use the combination of similar task vectors to achieve similar generalization performance for tasks that are OOD w.r.t. the source models.

\begin{theorem}[OOD generalization via basis atoms]
\label{thm:task-generalization-basis}
Assume the target task vector $\tau_{\mathrm{tar}}$ and all source tasks are nonnegatively aligned, i.e., 
$\langle \tau_{\mathrm{tar}}, \tau_i\rangle \ge 0$ for every $i\in[T]$, and $\exists m^* \in [M]$ such that $\mathbf{W}_e[i^*,m^*] \geq \rho$ for some $\rho \in (0,1]$ where $i^* = \argmax_{i \in [T]}\langle \tau_\mathrm{tar}, \tau_i\rangle$. Denote $\theta_\mathrm{Add}^M = \theta_0 + \sum_{m=1}^M \alpha_m B_m$ with $\alpha\in\Delta^{M-1}$ and
$B_m=\sum_{j=1}^T \mathbf{W}_e[j,m]\tau_j$ as defined in \Cref{eq:basis_expansion}, then
there exists a choice of $\alpha$ such that
\begin{equation}
    \mathcal{L}_{\mathrm{tar}}(\theta_\mathrm{Add}^M)
\;\le\;
\mathcal{L}_{\mathrm{tar}}(\theta_{\mathrm{tar}})\;+\; L_{\mathrm{tar}}\,C\,\big(1-\rho\,\gamma\big).
\end{equation}
\end{theorem}

\begin{proof}
First note that the existential coverage condition for $m^*$ is always satisfied for softmax encoders by picking $\rho = \max_{m \in [M]}\mathbf{W}_e[i^*,m]$. 

Now consider the construction with $\alpha_{m^*}=1$ and $\alpha_m=0$ for $m\neq m^*$, so that
$\theta_\mathrm{Add}^M=\theta_0+B_{m^*}$.

Since $\|B_{m^*}\|^2 = \|\sum_{j=1}^T \mathbf{W}_e[j,m^*]\tau_j\|^2 \leq C$, it is easy to see $\|\theta_\mathrm{Add}^M - \theta_\mathrm{tar}\| \leq 4C$. So by the local smoothness assumption of $\mathcal{L}_\mathrm{tar}$, we have
\begin{equation*}
    \mathcal{L}_{\mathrm{tar}}(\theta_\mathrm{Add}^M)-\mathcal{L}_{\mathrm{tar}}(\theta_{\mathrm{tar}})
\;\le\; \frac{L_{\mathrm{tar}}}{2}\,\|\theta_\mathrm{Add}^M-\theta_{\mathrm{tar}}\|^2
\;=\; \frac{L_{\mathrm{tar}}}{2}\,\|B_{m^*}-\tau_{\mathrm{tar}}\|^2.
\end{equation*}
Expand and bound the quadratic term:
\begin{equation*}
    \|B_{m^*}-\tau_{\mathrm{tar}}\|^2
= \|B_{m^*}\|^2 + \|\tau_{\mathrm{tar}}\|^2 - 2\langle \tau_{\mathrm{tar}},B_{m^*}\rangle
\;\le\; 2C - 2\langle \tau_{\mathrm{tar}},B_{m^*}\rangle,
\end{equation*}
using $\|B_{m^*}\|\le \sqrt{C}$ and $\|\tau_{\mathrm{tar}}\|\le \sqrt{C}$, now,
\begin{align*}
    \langle \tau_{\mathrm{tar}},B_{m^*}\rangle &= \sum\limits_{j=1}^T\mathbf{W}_e[j,m]\left<\tau_\mathrm{tar}, \tau_j\right> \\
    &\geq \mathbf{W}_e[i^*,m^*] \left<\tau_\mathrm{tar}, \tau_{i^*}\right> \tag{nonnegative source target alignment} \\
    &\geq \rho\gamma C
\end{align*}
Therefore,
\begin{equation*}
    \|B_{m^*}-\tau_{\mathrm{tar}}\|^2 \;\le\; 2C - 2\rho\gamma C \;=\; 2C(1-\rho\gamma),
\end{equation*}
and hence
\begin{equation*}
    \mathcal{L}_{\mathrm{tar}}(\theta_\mathrm{Add}^M)-\mathcal{L}_{\mathrm{tar}}(\theta_{\mathrm{tar}})
\;\le\; \frac{L_{\mathrm{tar}}}{2}\,2C(1-\rho\gamma)
\;=\; L_{\mathrm{tar}}\,C\,(1-\rho\gamma).
\end{equation*}
\qedhere
\end{proof}

When the softmax encoder is annealed with smaller $\tau$ and columns of $\mathbf{B}$ converges to one-hot, $\rho\approx 1$ and the bound recovers the original $L_{\mathrm{tar}}C(1-\beta)$ rate.

\subsubsection{Task Negation \& Basis Negation}

\begin{restatable}[Task Negation for Unlearning]{theorem}{negate}
Let $0 < \epsilon \leq 1$ be a universal constant such that $\forall i\neq j, |\cos(\tau_i, \tau_j)| \leq \epsilon$, and $\theta_{\mathrm{Neg, i}} = \theta_0 - \alpha_i\tau_i$ be the model parameter used for unlearning task $i$. Then $\forall j\neq i$, 
\begin{equation}
    \mathcal{L}_j(\theta_{\mathrm{Neg, i}}) - \mathcal{L}_j(\theta_0)\leq L_jC\left(\frac{3}{2} + \epsilon\right).
\end{equation}
\label{thm:negation_task_vector}
\end{restatable}

\begin{proof}
First, note that 
\begin{equation*}
    \gL_j(\theta_{\mathrm{Neg, i}}) - \gL_j(\theta_0) \leq \gL_j(\theta_{\mathrm{Neg, i}}) - \gL_j(\theta_j) + \gL_j(\theta_j)- \gL_j(\theta_0) \leq \gL_j(\theta_{\mathrm{Neg, i}}) - \gL_j(\theta_j) + |\gL_j(\theta_0) - \gL_j(\theta_j)|
\end{equation*}
We will upper bound the last two terms separately. To bound $\gL_j(\theta_{\mathrm{Neg, i}}) - \gL_j(\theta_j)$, note that $\|\theta_{\mathrm{Neg, i}} - \theta_j\|^2 = \|\alpha_i\tau_i + \tau_j\|^2 \leq \left(\alpha_i \|\tau_i\| + \|\tau_j\|\right)^2 \leq 4C$, due to the local smoothness of $\gL_j$ around $\theta_j$ and the fact that $\partial\gL_j(\theta_j)/\partial \theta = \mathbf{0}$, we have
\begin{equation*}
    \gL_j(\theta_{\mathrm{Neg, i}}) - \gL_j(\theta_j) \leq \frac{L_j}{2}\|\theta_{\mathrm{Neg, i}} - \theta_j\|^2 = \frac{L_j}{2}\|\alpha_i \tau_i + \tau_j\|^2\leq \frac{L_j}{2}\left(\alpha_i^2 C + 2\langle \tau_i,\tau_j\rangle + C\right) \leq L_j C (1 + \epsilon).
\end{equation*}
Similarly, for the second term, we have 
\begin{equation*}
    |\gL_j(\theta_0) - \gL_j(\theta_j)| \leq \frac{L_j}{2}\|\theta_0 - \theta_j\|^2 \leq \frac{L_jC}{2}.
\end{equation*}
Combine both above, leading to
\begin{equation*}
    \gL_j(\theta_{\mathrm{Neg, i}}) - \gL_j(\theta_0)  \leq L_j C (1 + \epsilon) + \frac{L_jC}{2} = L_jC \left(\frac{3}{2} + \epsilon\right),
\end{equation*}
as desired.
\end{proof}

Since $C$ is small due to fine-tuning, $\mathcal{L}_j(\theta_{\mathrm{Neg, i}}) \approx \mathcal{L}_j(\theta_0)$, which means that the negation of a task for forgetting will not adversely impact the performance of other unrelated tasks, which has been shown empirically in \citep{ilharco2022editing}, in contrast to other classic unlearning methods like gradient ascent.

\begin{theorem}[Basis Negation for Unlearning]\label{thm:negation-ae}
Let $\hat{\mathbf{T}}$ be the task vector matrix reconstructed by basis vectors defined by \cref{eq:reconstruction}, and write
$\widehat\tau_i$ for its $i$-th column and per column reconstruction error $e_i:=\widehat\tau_i-\tau_i$.
Define the negation model
$\theta_{\mathrm{Neg},i} := \theta_0 - \alpha_i\,\widehat\tau_i$.
Then for every $j\neq i$,
\begin{equation}
    \mathcal{L}_j(\theta_{\mathrm{Neg},i})-\mathcal{L}_j(\theta_0)
\;\le\;
L_j C\!\left(\frac{5}{2}+2\epsilon\right)
+
L_j\|e_i\|^2_2.
\end{equation}
\end{theorem}

\begin{proof}
Decompose as in the original proof of \Cref{thm:negation_task_vector}:
\begin{equation*}
    \mathcal{L}_j(\theta_{\mathrm{Neg},i})-\mathcal{L}_j(\theta_0)
\le
\big(\mathcal{L}_j(\theta_{\mathrm{Neg},i})-\mathcal{L}_j(\theta_j)\big)
+
\big|\mathcal{L}_j(\theta_0)-\mathcal{L}_j(\theta_j)\big|.
\end{equation*}
By \cref{ass:smooth} at $\theta_j$ and $\nabla\mathcal{L}_j(\theta_j)=\mathbf{0}$ from \cref{ass:finetuning},
\begin{equation*}
    \mathcal{L}_j(\theta_{\mathrm{Neg},i})-\mathcal{L}_j(\theta_j)
\;\le\; \frac{L_j}{2}\,\big\|\theta_{\mathrm{Neg},i}-\theta_j\big\|^2
= \frac{L_j}{2}\,\big\|\alpha_i(\tau_i+e_i)+\tau_j\big\|^2 .
\end{equation*}
Expand the square:

\begin{align*}
    \frac{L_j}{2}\big\|\alpha_i(\tau_i+e_i)+\tau_j\big\|^2  &= \frac{L_j}{2}\big\|(\alpha_i\tau_i + \tau_j) + \alpha_ie_i\big\|^2 \\
    &\leq \frac{L_j}{2}\left(\|\alpha_i\tau_i + \tau_j\| + \|e_i\|\right)^2 \tag{$\alpha_i \leq 1$} \\
    &\leq L_j\big(\underbrace{\|\alpha_i\tau_i + \tau_j\|^2}_{(\star)} + \|e_i\|^2\big) \tag{$(a+b)^2 \leq 2(a^2 + b^2)$}
\end{align*}

The baseline term $(\star)$ is the same as in the original task vector negation proof in \Cref{thm:negation_task_vector}, where it yields $L_j\,(\star)\ \le\ 2L_j C(1+\epsilon)$.

Finally, as in \Cref{thm:negation_task_vector}, $\big|\mathcal{L}_j(\theta_0)-\mathcal{L}_j(\theta_j)\big|
\le \tfrac{L_j}{2}\|\theta_0-\theta_j\|^2 \le \tfrac{L_j C}{2}$. Summing the two parts completes the bound.\qedhere
\end{proof}

Compared to the original negation bound $L_jC(\tfrac{3}{2}+\epsilon)$, the
difference is increased constants for $L_jC$ term another term only depending on the reconstruction error $\|e_i\|$.
If the autoencoder (or PCA) bases reconstructs the $i$-th task vector well (small $\|e_i\|$),
the penalty is negligible, recovering the original guarantee.

\begin{corollary}[Optimal Autoencoder Bases Negation]\label{cor:spectral}
Let the error matrix be $\mathbf{E}:=\hat{\mathbf{T}} - \mathbf{T}$ and suppose the trained autoencoder attains the spectral
lower bound in \Cref{eq:spectral-LB}. Then for all $j\ne i$,
\begin{equation}
    \mathcal{L}_j(\theta_{\mathrm{Neg},i})-\mathcal{L}_j(\theta_0)
\ \le\
L_j C\!\left(\frac{5}{2}+2\epsilon\right)
\;+\;
L_j\lambda_{M+1}(\mathbf{G}).
\end{equation}
\end{corollary}

\begin{proof}

Since the autoencoder attains the spectral lower bound in \Cref{eq:spectral-LB},
we have
\begin{equation*}
    \|\mathbf{E}\|_2^2 \;=\; \lambda_{M+1}(\mathbf{G}).
\end{equation*}
For the $i$-th column error $e_i=\mathbf{E}\,\mathbf{e}_i$ (with $\mathbf{e}_i$ the $i$-th standard basis vector),
\begin{equation*}
    \|e_i\|^2_2 \;=\; \|\mathbf{E}\mathbf{e}_i\|^2_2
\;\leq\; \|\mathbf{E}\|_2^2\,\|\mathbf{e}_i\|^2_2
\;=\; \|\mathbf{E}\|_2^2
\;=\; \lambda_{M+1}(\mathbf{G}).
\end{equation*}
Substituting $\|e_i\|^2_2\leq \lambda_{M+1}(\mathbf{G})$ into the bound from
\Cref{thm:negation-ae} yields the desired upper bound as claimed.
\end{proof}

\section{Additional examples of adapting task vector methods to bases}
\label{app:1to1_additonal_examples}

\paragraph{1-to-1 task dataset correspondence.} Both Fisher weighting~\citep{matena2022merging} and RegMean~\citep{jin2022dataless}
also implicitly assume a one-to-one mapping between each task dataset $D_i$ and its task vector $\tau_i$ (or equivalently fine tuned model $\theta_i$).
For Fisher merging, this is because each task contributes its own Fisher matrix
$\mathbf{F}_i$ computed on validation data from $D_i$.
For RegMean, each task contributes a validation covariance $\mathbf{X}_i^\top \mathbf{X}_i$,
again computed directly from $D_i$.
Thus, in the original $T$ tasks setting, every task has its own dataset-level statistics that align exactly with its task vector.

In the basis setting, this direct correspondence is broken: bases are mixtures of tasks rather than
stand-alone vectors. To preserve compatibility, we follow the same idea as in our adaptation of
Localize-and-Stitch in \Cref{sec:basis_arithmetic}. Specifically, we construct basis-level validation statistics by taking
encoder-weighted combinations of the original per-task quantities.
For Fisher, this yields a basis-level Fisher
$\widetilde {\mathbf{F}}_m = \sum_{i=1}^T \mathbf{W}_e[i,m] \, \mathbf{F}_i$ that aggregates information from all tasks
according to their encoder weights.
For RegMean, the per-task covariance matrices are aggregated into
$\widetilde {\mathbf{G}}_m = \sum_{i=1}^T \mathbf{W}_e[i,m] \, \mathbf{X}_i^\top \mathbf{X}_i$.
These mixtures define new per-basis statistics, which can then be used in the same closed-form
merging rules as the original methods. Notably, under equal dataset sizes $n_i = n$, computing per-task statistics directly on the subsampled mixed basis data $\widetilde{D}_m$ (with $n_i \cdot M/T$ examples per task, giving $nM$ total effective samples instead of $nT$) is equivalent to the weighted sum formulation by linearity of expectation. This means that in the basis setting, both Fisher and RegMean only require $M$ statistic computations on $nM$ total data instead of $T$ computations on $nT$ data.

\paragraph{Others.} For other methods, adapting task-vector methods to a basis representation does not change their mathematical form: Task Arithmetic \citep{ilharco2022editing} and TIES \citep{yadav2024ties} still involve coefficient searching over nonnegative coefficients using validation data, while AdaMerging \citep{yang2023adamerging} and aTLAS \citep{zhang2024knowledge} continue to solve unconstrained coefficient-learning problems either on a small unlabeled or labeled dataset. Similarly, WEMoE \citep{tang2024merging} routes MLP modules via a learned router while merging non-MLP modules standardly; since bases preserve the same module structure, it applies directly. The only modifications are (i) replacing task vectors $\tau_i$ with bases $B_m$, and (ii) replacing $D_t$ with subsampled datasets $\widetilde{D_t}$. Each method incurs a per-vector processing cost $c_\mathcal{M}$ that scales from $T$ to $M$ in the basis setting.

\section{Empirical Verification of Theoretical Results}
\label{app:exp_theory}
We primarily focus on verifying task addition \cref{thm:task-addition} where $M = 100\%$ and examine the relationship between the loss gap and key constants throughout our theoretical statements in \Cref{app:arithmetic_proof_details}.

\subsection{Inspection of Key Constants}

\paragraph{Task Vector Norm $C$}

\begin{table}[!ht]
\centering
\caption{Ratio and Norm Task Vector for Different Models and Datasets}
\label{tab:ratio_norm_task}
\begin{tabular}{l l c c}
\toprule
\textbf{Model} & \textbf{Dataset} & \textbf{Ratio}\% & \textbf{Norm Task Vector} \\
\midrule
laion2b\_e16 & MNIST     & 0.46 & 2.18 \\
             & EuroSAT   & 0.45 & 2.16 \\
             & Cars      & 0.54 & 2.52 \\
             & DTD       & 0.39 & 1.81 \\
             & GTSRB     & 0.49 & 2.32 \\
             & RESISC45  & 0.54 & 2.55 \\
             & SUN397    & 0.65 & 3.03 \\
             & SVHN      & 0.56 & 2.64 \\
\midrule
laion2b\_s34b\_b79k & MNIST     & 0.42 & 2.30 \\
                    & EuroSAT   & 0.42 & 2.27 \\
                    & Cars      & 0.48 & 2.59 \\
                    & DTD       & 0.33 & 1.79 \\
                    & GTSRB     & 0.45 & 2.44 \\
                    & RESISC45  & 0.48 & 2.63 \\
                    & SUN397    & 0.58 & 3.18 \\
                    & SVHN      & 0.51 & 2.76 \\
\bottomrule
\end{tabular}
\end{table}

Two openclip checkpoints are details can be found from \url{https://github.com/mlfoundations/open\_clip/blob/main/docs/PRETRAINED.md} where laion2b\_s34b\_b79k is reported to be trained with larger batch size and learning rate, while two models share the same training data LAION-2B \citep{schuhmann2022laionb}. In \cref{tab:ratio_norm_task}, we reported the task vector norm and the ratio of task vector norm over the pretrained model norm, which is very small across datasets and models.

We elaborate on the connection of small task vector norm $C$ requirement with previous literature. \citep{ilharco2022editing} in its Figure 7 demonstrates that the performance of merging task vectors derived from intermediate checkpoints, far before model convergence, is close to the performance of merging converged task vectors. These intermediate checkpoints typically have smaller norms due to fewer optimization steps, so a small $C$ appears sufficient for the success of task addition. On the other hand, Figure 6 of \citep{ilharco2022editing} also indicated that a smaller learning rate is more important for task addition than for standard single-task fine-tuning, which implies that a smaller $C$ is also necessary.

\begin{figure}[tb]
    \centering
    \begin{minipage}[c]{0.53\linewidth}
        \centering
        \parbox{\textwidth}{%
        \raggedright
        (a)
        }
        \includegraphics[width=\linewidth]{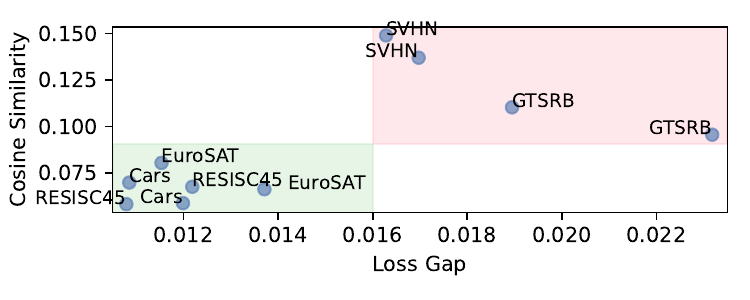}
    \end{minipage}
    \hfill
    \begin{minipage}[c]{0.43\linewidth}
        \centering
        \raisebox{2mm}{
        \parbox{\textwidth}{%
        \raggedright
        (b)
        }
        }
        \includegraphics[width=\linewidth]{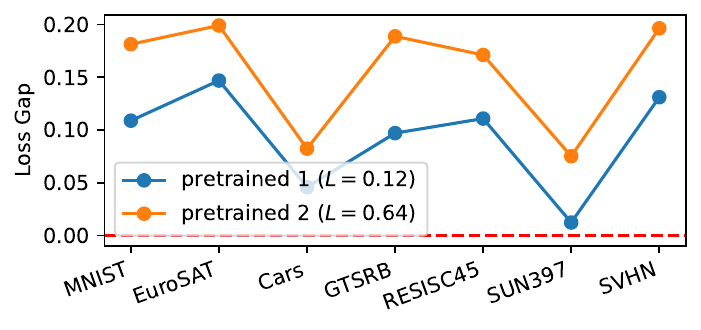}
    \end{minipage}
    \vspace{-4mm}
    \caption{
    (a) Task vector similarity vs. $\mathcal{L}_\text{MNIST}(\theta_\mathrm{Add}^2) - \mathcal{L}_\text{MNIST}(\theta_\text{MNIST})$, where $\theta_\mathrm{Add}^2 = \theta_0 + 0.5\tau_\text{MNIST} + 0.5\tau_\text{task}$. This figure includes two different set of CLIP ViT/B-32 task vectors. The pink shade includes the high similarity high loss gap region, and the green shade is the low similarity low loss gap region. This implies larger task similarity $\epsilon$ is harmful for addition. (b) $\mathcal{L}_\text{DTD}(\theta_\mathrm{Add}^2) - \mathcal{L}_\text{DTD}(\theta_\text{DTD})$ by merging $\tau_\text{DTD}$ with other task vectors, setting scaling coefficient as $0.5$. Two colored pretrained checkpoints have different local smoothness values. 
    }
    \label{fig:orthogonality-smoothness-combo}
\end{figure}

\begin{figure}[ht]
    \centering
    \raisebox{5mm}{
    \begin{minipage}[c]{0.55\linewidth}
        \centering
        \raisebox{0mm}{
        \parbox{\textwidth}{%
        \raggedright
        (a)
        }
        }
        \includegraphics[width=\linewidth]{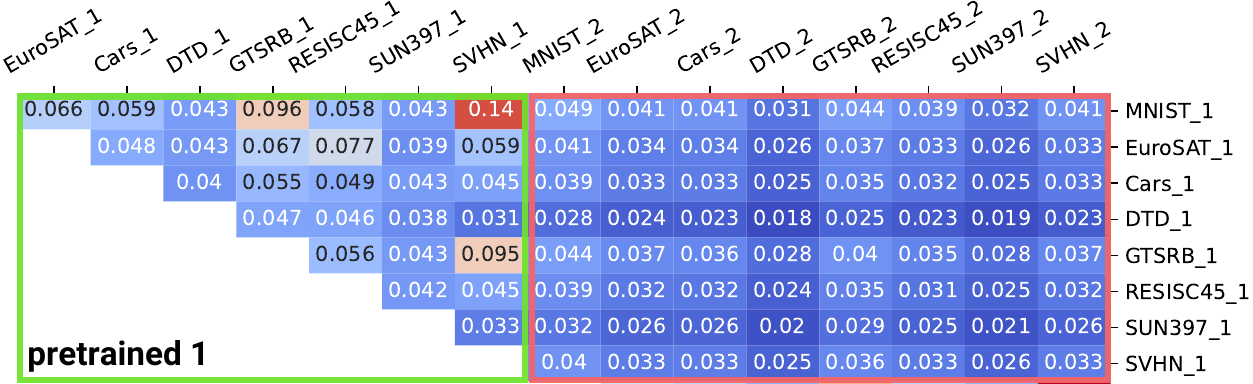}
    \end{minipage}
    }
    \raisebox{3mm}{
    \begin{minipage}[c]{0.4\linewidth}
        \centering
        \raisebox{0mm}{
        \parbox{\textwidth}{%
        \raggedright
        (b)
        }
        }
        \
        \includegraphics[width=\linewidth]{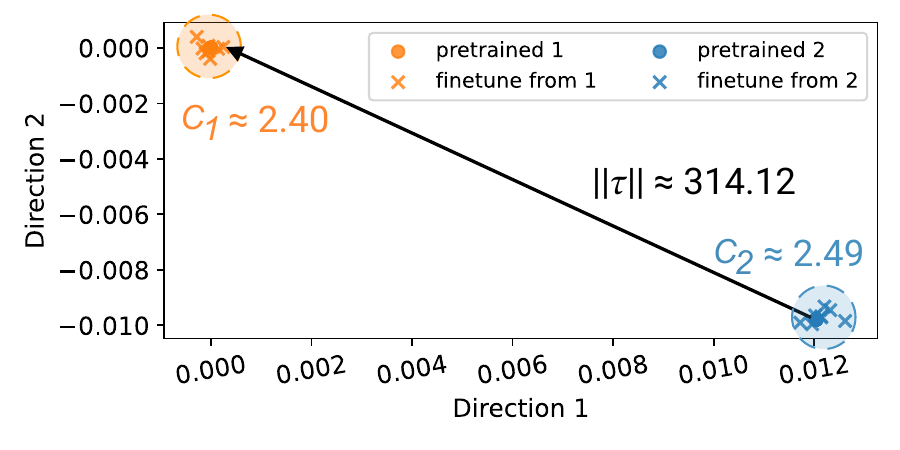}
    \end{minipage}
    }
    \vspace{-3mm}
    \caption{
    (a) Task vector similarity matrix for two checkpoints. The left green box represents the task similarity for vectors all derived from fine-tuning pretrained model 1. The right pink box represents the similarity values for task vectors from two different checkpoints, which corresponds to small $\epsilon$ in the 50/50 row of \cref{tab:twice-task-orthogonality}.
    (b) Task vector norms in different settings. Since the distance of two pretrained models are much larger than the distance between the pretrained model and their own fine tuned model , in \cref{tab:twice-task-orthogonality} if we subtract pretrained 2 from any finetune 1, $\norm{\tau}$'s upper bound $C$ is huge, leading to the merging failure. For visualization purpose, we show two randomly selected dimensions, but the numbers for $C_1,C_2, \norm{\tau}$ are directly computed from high-dimensional vectors. 
    }
    \label{fig:interaction}
    \vspace{-1em}
\end{figure}

\paragraph{Task Vector Similarity $\epsilon$}
To verify how task vector similarity $\epsilon$ impacts the performance, we conduct the experiment shown in Fig. \ref{fig:orthogonality-smoothness-combo}a. We merge the MNIST task vector with each of the other task vectors, all having similar norms ranging from $[2,3)$ (see \cref{tab:ratio_norm_task}), and set the scaling coefficient $\alpha$ to be $0.5$. In this setting, we approximately control all constants in \cref{thm:task-addition}, including $L$, $\alpha$, and $C$, and observe that highly similar tasks, such as digit classification in MNIST, SVHN, and GTSRB, lead to larger loss gaps or worse performance for MNIST compared to less related tasks.

\begin{table}[ht]
    \centering
    \caption{Task vector mixing performance, which is the average of all task test accuracies evaluated with the merged model. Numbers 1 and 2 refer to the identities of the pretrained checkpoints. “50 / 50” represents the experiment where 50\% of the own task vector is mixed with 50\% of task vectors derived from the other pretrained model. }
    \label{tab:twice-task-orthogonality}
    \begin{tabular}{@{}lcc@{}}
    \toprule
    $\theta_i$ \textbackslash $\ \theta_0$ & pretrained 1 & pretrained 2 \\ \midrule
    finetune from 1             & \textbf{70.83} & 51.71 \\
    50 / 50                     & 59.92          & 61.71 \\
    finetune from 2             & 54.21          & \textbf{71.09} \\ \bottomrule
    \end{tabular}
\end{table}

\paragraph{Interaction between $C$ and $\epsilon$}
We provide additional evidence that both $C$ and $\epsilon$ must be constrained for the success of task vector addition. In \cref{tab:twice-task-orthogonality}, we collected task vectors for 8 tasks from two CLIP checkpoints pretrained with different hyperparameters. From this table, we observe that successful task addition reveals the identity of the task vectors. For optimal merging performance, we should only add task vectors fine-tuned from the same checkpoint, as any mixture of task vectors from different checkpoints will cause a significant performance drop. The above empirical observation consolidates our~\cref{ass:finetuning} that task vectors should reside in the same fine-tuning regime. To elaborate, from Fig. \ref{fig:interaction}a, although all $\epsilon$ values in the pink box are very low, task addition still fails due to the large $C$ value. From Fig. \ref{fig:interaction}b, we see that with different hyperparameters, the two pretrained models are situated in two local convex basins, and the distance between the two checkpoints is much larger than the task vectors (\cref{tab:ratio_norm_task}). Thus, if we create task vectors by subtracting the wrong pretrained checkpoint, the large $C$ value leads to the failure of task addition.

\paragraph{Local Smoothness $L$}
The local smoothness $L$ is specific to each pretrained model due to differences in their optimization trajectories. Since, as shown in \Cref{fig:interaction}, the differences in $C$ and $\epsilon$ (not $\theta_i$) between two pretrained models are small. In Fig. \ref{fig:orthogonality-smoothness-combo}b, we merge the DTD task vector with each of the other task vectors and compare the loss gap between two checkpoints. Because it is not feasible to load $\mathbf{H}(\theta_i) \in \mathbb{R}^{d \times d}$ directly onto the GPU, we estimate $L$ using the power iteration method \citep{mises1929praktische} to reduce the largest eigenvalue problem to a Hessian-vector product computation. As seen in Fig. \ref{fig:orthogonality-smoothness-combo}b, larger local smoothness consistently leads to a larger gap from the optimal loss term across datasets, resulting in worse merging performance.

\subsection{Empirical Loss Gap in Task Addition}
\label{sec:empirical-loss-gap}
We report the empirical loss gap using Task Arithmetic as the merging method on several downstream datasets using the ViT-B/16 backbone for CLIP. In \cref{tab:loss-gap}, the loss gap is computed as the empirical difference between the loss of the merged model $\mathcal{L}_i(\theta_\mathrm{Add})$ and the fine-tuned loss $\mathcal{L}_i(\theta_i)$ as in the left hand side of theorems in \cref{sec:theory_add}.

\begin{table}[!ht]
\centering
\caption{Empirical loss gap between the fine-tuned model and the TA-merged model for each task on ViT-B/16.}
\begin{tabular}{@{}lc@{}}
\toprule
Task$_i$ & $\mathcal{L}_i(\theta_\mathrm{Add}) - \mathcal{L}_i(\theta_i)$ \\
\midrule
MNIST    & 0.12 \\
EuroSAT  & 1.00 \\
Cars     & 0.15 \\
DTD      & 1.11 \\
GTSRB    & 0.80 \\
RESISC45 & 0.84 \\
SUN397   & 5.02 \\
SVHN     & 0.60 \\
\bottomrule
\end{tabular}
\label{tab:loss-gap}
\end{table}

As shown in Table~\ref{tab:loss-gap}, the loss gap remains small for most tasks except SUN397, which reaches a gap of $5.02$. This demonstrates that although task arithmetic~\citep{ilharco2022editing} is a widely used model merging baseline, it can suffer significant performance degradation on certain tasks. Importantly, these empirical loss gaps are \emph{not} computed from the constants appearing in the right-hand side of our theoretical upper bound. When substituting reasonable values, such as $L = 1$, $C = 4$, and $\epsilon = 0.2$, the upper bound becomes $LC(1 + \epsilon) \approx 5$, which approximates the largest observed gap. While this value may seem conservative for common losses like cross-entropy, it is necessary to account for the worst-case scenarios observed in practice. This further supports the relevance of our theoretical framework in explaining and bounding the limitations of task arithmetic in heterogeneous settings.

\section{Experimental Details about Bases Construction}

We construct task vector bases using a lightweight AE trained over the collection of task vectors with Adam optimizer \citep{kingma2014adam}. All experiments are conducted on NVIDIA RTX A6000 GPUs with 48GB memory. Unless otherwise specified, we adopt the following default configuration for the AE:
\begin{equation*}
    M=4,\;\text{steps}=4000,\;\text{lr}=0.01,\;\tau=5.0,\;\text{weight decay}=10^{-6}.
\end{equation*}

\subsection{Sensitivity of temperature $\tau$}
\label{app:temperature}
A key hyperparameter in the encoder is $\tau$, which controls the mixing effect of task vectors when constructing basis vectors with AE. 

\Cref{fig:tau_weights} shows the encoder weight distributions across different values of the temperature parameter $\tau$. As $\tau$ decreases, the encoder weights become more selective, pushing the representation closer to a one-hot distribution. At $\tau=5$, the weights are relatively diffuse, with several tasks contributing simultaneously to each basis. By contrast, at $\tau=1$ and below, the encoder begins to isolate dominant contributors, and at $\tau=0.8$ the assignments are nearly one-hot. At this low temperature the learned bases also become interpretable. For example, when $\tau=(500, 0.8)$, basis $1$ captures GTSRB, MNIST, and SVHN, which can be interpreted as a digit classification group. Basis $1$ and $2$ are dominated by Cars and SUN397, which are classification problems very different from other task vectors in the pool. Basis $3$ combines RESISC45, DTD, and EuroSAT, corresponding to texture and satellite imagery that are naturally linked through landscape and surface patterns. This progression illustrates that the softmax activation indeed yields semantically meaningful groupings of tasks. 

\begin{figure*}[ht]
\centering

\begin{subfigure}{0.24\textwidth}
    \centering
    \includegraphics[width=\linewidth]{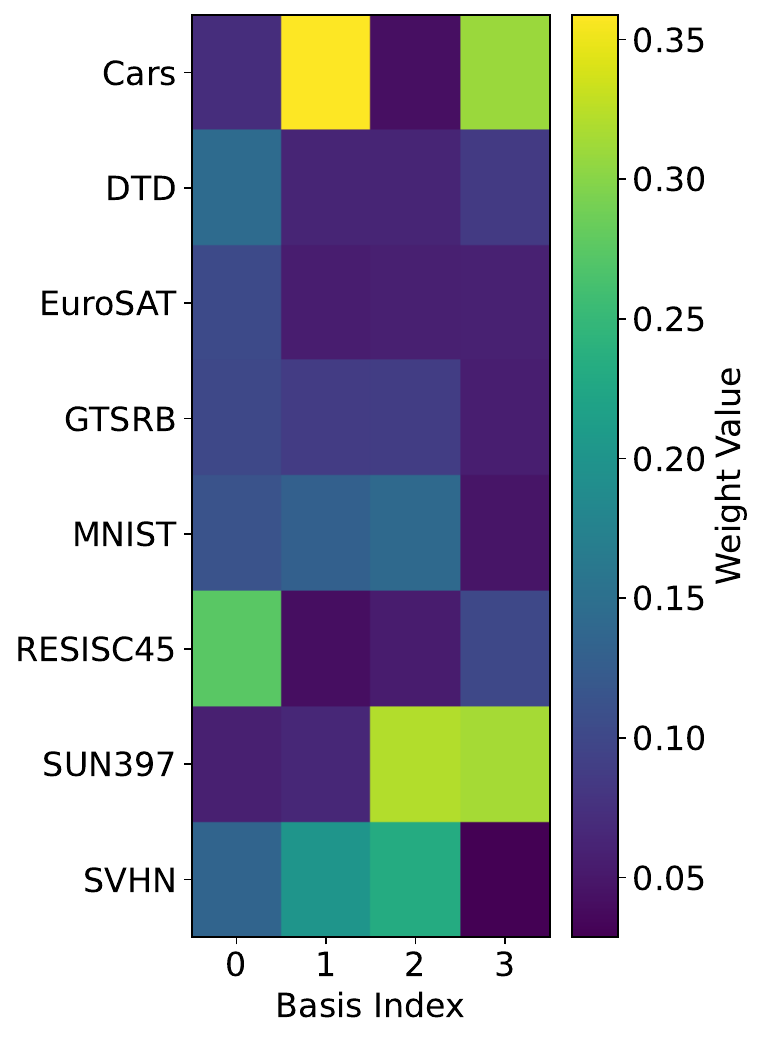}
    \caption{$\tau = 5$}
    \label{fig:tau5}
\end{subfigure}\hfill
\begin{subfigure}{0.24\textwidth}
    \centering
    \includegraphics[width=\linewidth]{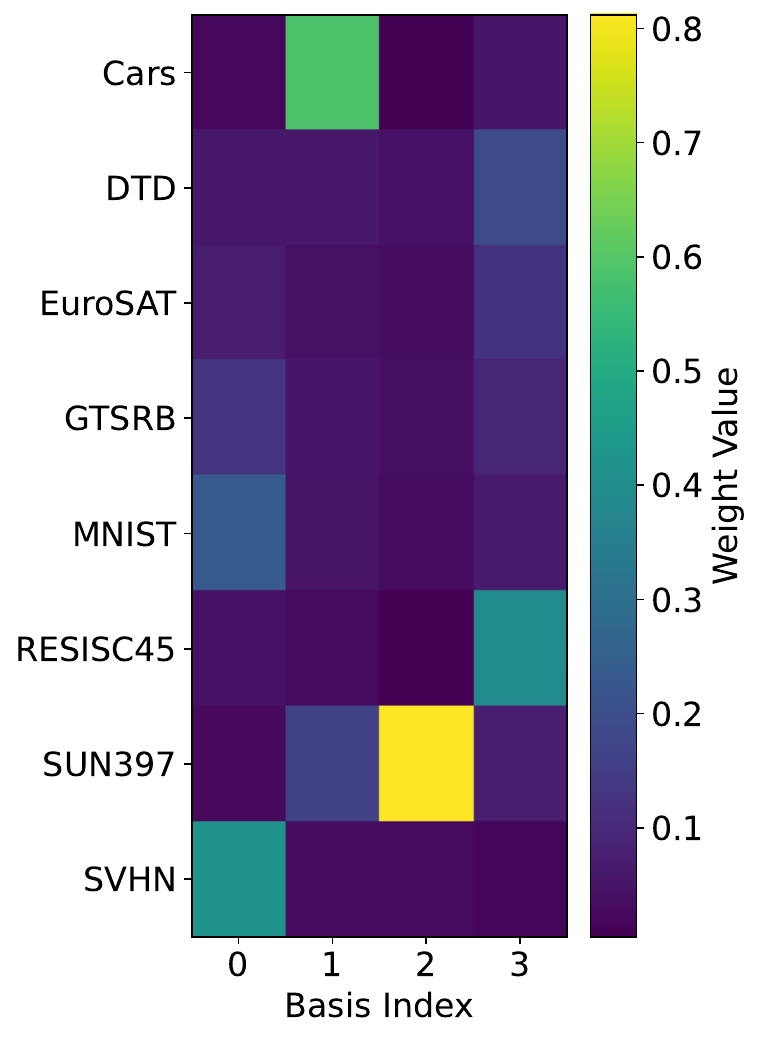}
    \caption{$\tau = 1$}
    \label{fig:tau1}
\end{subfigure}\hfill
\begin{subfigure}{0.24\textwidth}
    \centering
    \includegraphics[width=\linewidth]{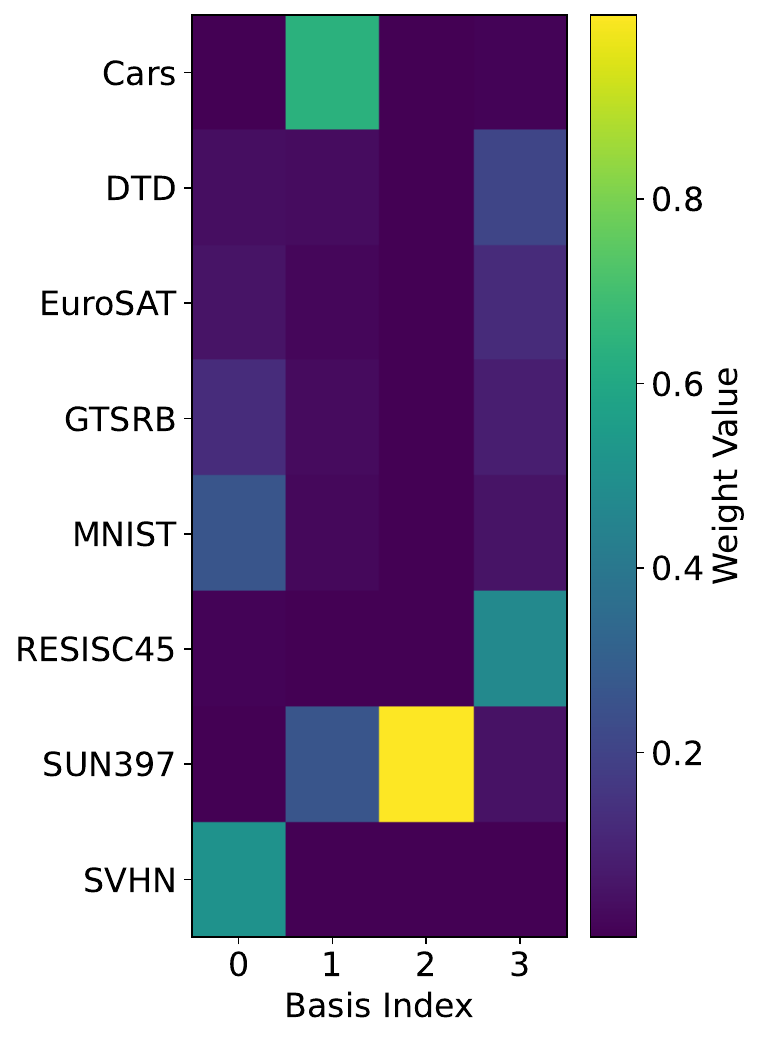}
    \caption{$\tau = (500, 0.9)$}
    \label{fig:tau09}
\end{subfigure}\hfill
\begin{subfigure}{0.24\textwidth}
    \centering
    \includegraphics[width=\linewidth]{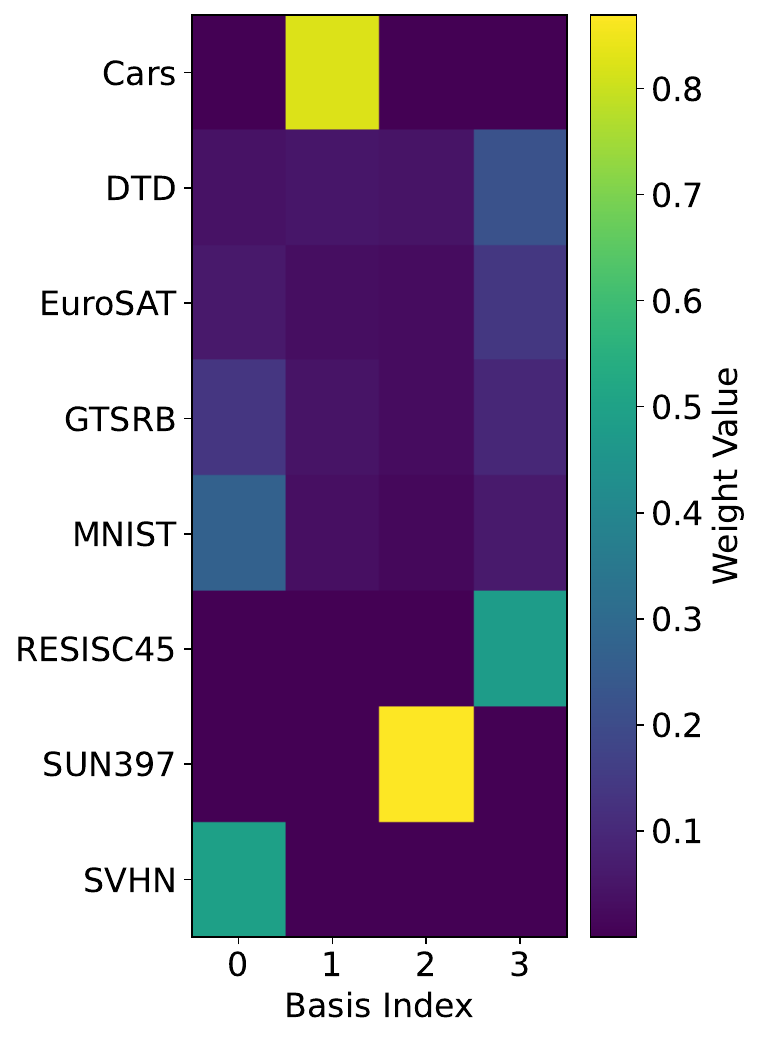}
    \caption{$\tau = (500, 0.8)$}
    \label{fig:tau08}
\end{subfigure}

\caption{Encoder weight $\mathbf{W}_e$ distributions across different values of the temperature parameter $\tau$ for ViT-B/32 8 tasks setting when $M = 4$. The notation $(500,\eta)$ refers to the annealing $\tau$ setting where every 500 steps the temperature $\tau$ is multiplied by a factor of $\eta$.}
\label{fig:tau_weights}
\end{figure*}

The addition performance results for each type of $\tau$ setup are shown in \Cref{tab:tau_sensitivity}. On one hand, different $\tau$ choices can substantially affect downstream addition performance. In particular, although all settings drive the reconstruction loss close to the theoretical lower bound (note that lower values of $\tau$ require more steps to converge), the actual addition accuracies differ: for example, L\&S accuracy vary by as much as 0.09 depending on the schedule. The performance does not grow monotonically with $\tau$ itself, and can vary across setups; for instance, the observed gap to the lower bound depends on the number of training steps, which in turn influences the final accuracy. Therefore, careful hyperparameter tuning of $\tau$ is still required to achieve the best performance. But interestingly, we find that once tuned for one method on a fixed set of task vectors, the same $\tau$ schedule also transfers well across other evaluation metrics (TA results closely track L\&S results). On the other hand, negation forgetting metrics are robust to different $\tau$ setups.

\begin{table}[ht]
\centering
\caption{Normalized offline addition accuracy and target forgetting accuracy of $\tau$ hyperparameter for ViT-B/32 on 8 tasks with $M=4$ using AE bases. The spectral lower bound predicted by \Cref{eq:spectral-LB} is 0.336935.}
\label{tab:tau_sensitivity}
\begin{tabular}{lccccc}
\toprule
$\tau$ & \textbf{Steps} &\textbf{TA}$^+$ & \textbf{L\&S}$^+$ &\textbf{TA}$^-$  & \textbf{Loss} \\
\midrule
$(500,0.8)$  & 30000 & 0.734 & 0.649 & 0.308 & 0.336935 \\
$(500,0.9)$  & 4000 & 0.722 & 0.640 & 0.306 &0.338606 \\
$1$    & 4000 & 0.736 & 0.707 & 0.308 & 0.336935 \\
$5$    & 4000 & 0.744 & 0.733 & 0.307 & 0.336937 \\
\bottomrule
\end{tabular}
\end{table}

\subsection{Runtime and Memory Analysis}
\label{app:runtime_memory}

We first compare the runtime and memory complexity of AE training with PCA for basis construction. Both methods require access to the full task matrix $\mathbf{T} \in \mathbb{R}^{d \times T}$, where $d$ is the parameter dimension and $T$ is the number of task vectors. Since typically $d \gg T$, the dominant storage cost for both methods is $O(dT)$. For time complexity, AE first requires one-time computation of the Gram matrix, which costs $O(dT^2)$. 
Subsequent training then involves $O(T^2)$ operations per optimization step. PCA basis construction is obtained through SVD of the $d \times T$ task matrix, with complexity $O(dT^2)$ too. Thus, in the worst-case analysis both AE and PCA have the same order of time and space complexity. The practical advantage of AE is that, as discussed in \Cref{lem:gram-match-impl}, after Gram computation we can remove the dependence of $d$ during optimization, making potentially parallel gradient-based optimization feasible on modern GPUs. 

We next empirically measure AE training time for different numbers of basis vectors $M$ fixing other hyperparameters. Results are summarized in \Cref{tab:basis_time}. For addition experiments, the runtime decreases modestly with larger $M$, which we attribute to more efficient GPU utilization when matrix multiplications involve wider hidden dimensions. This effect is implementation-dependent but does not change the main conclusion that basis processing is almost negligible compared to merging time (see \Cref{fig:efficiency_scaling}). For negation, we can see that once the bases are constructed the reconstruction time is also minimal compared to the negation tuning time where we use the grid search over $[0, 0.05, 0.10, \dots, 1.0]$ (21 grid points) for forgetting experiments.

\begin{table}[ht]
\centering
\caption{AE steps wall clock time (seconds) for different numbers of basis vectors $M$ with $4000$ gradient steps averaged across three runs on ViT-B/32 for 8 tasks. Basis Construction refers to the runtime of solving \Cref{eq:ae-loss} and Task Vector Reconstruction refers to the runtime of \Cref{eq:reconstruction} given saved bases. A detailed per phase breakdown including memory and IO profiling is provided in \Cref{tab:pipeline_breakdown}.}
\label{tab:basis_time}
\begin{tabular}{lcccc}
\toprule
\textbf{\# Basis} & \textbf{Basis Construction} & \textbf{Addition} &  \textbf{Task Vector Reconstruction} & \textbf{Negation} \\
\midrule
$M=2$ & 28.96 & 10894.54 & 1.11 & 21187.95 \\
$M=4$ & 22.72 & 11059.96 & 2.12 & 21380.52 \\
$M=6$ & 17.09 & 11926.79 & 2.61 & 22792.77 \\
\bottomrule
\end{tabular}
\end{table}

{\color{blue}
\begin{table}[ht]
\caption{Full pipeline profiling for ViT-B/32, $T=8$ tasks, on NVIDIA RTX A6000. Wall clock times are consistent with \Cref{tab:basis_time}; notably, the entire basis construction and reconstruction pipeline takes under 31s, which is negligible compared to the downstream addition (${\sim}$11000s) and negation (${\sim}$21000s) times reported in \Cref{tab:basis_time}. CPU memory is reported as Resident Set Size (RSS), the physical memory occupied by the process. Phases that do not depend on $M$ (loading $\mathbf{T}$, Gram computation, AE optimization) have constant memory cost.}
\label{tab:pipeline_breakdown}
\centering
\begin{subtable}[t]{\textwidth}
\centering
\caption{Wall clock time breakdown. Steps 1--2 are shared across all $M$.}
\label{tab:time_breakdown}
\scalebox{0.82}{
\begin{tabular}{llccc}
\toprule
& \textbf{Phase} & $M=2$ & $M=4$ & $M=6$ \\
\midrule
\multirow{5}{*}{\rotatebox{90}{\small Basis Constr.}}
& 1. Load $\mathbf{T}$ & \multicolumn{3}{c}{6.52s} \\
& 2. Gram $\mathbf{G}=\mathbf{T}^\top\mathbf{T}$ & \multicolumn{3}{c}{2.08s} \\
& 3. AE optimization (\Cref{eq:gram-loss-impl}, 4k steps) & 17.96s & 9.19s & 5.16s \\
& 4. Compute $\mathbf{B}=\mathbf{T}\mathbf{W}_e$ (\Cref{eq:basis_expansion}) & 0.72s & 0.81s & 0.61s \\
& 5. Save to disk & 1.68s & 4.12s & 2.72s \\
\cmidrule{2-5}
& \textbf{Subtotal (Basis Construction)} & \textbf{28.96s} & \textbf{22.72s} & \textbf{17.09s} \\
\midrule
\multirow{1}{*}{\small Reconstr.}
& 6. Reconstruct $\hat{\mathbf{T}}=\mathbf{B}\mathbf{W}_d$ (\Cref{eq:reconstruction}) & 1.11s & 2.12s & 2.61s \\
\midrule
& \textbf{Total} & \textbf{30.07s} & \textbf{24.84s} & \textbf{19.70s} \\
\bottomrule
\end{tabular}
}
\end{subtable}

\vspace{2mm}

\begin{subtable}[t]{0.55\textwidth}
\centering
\caption{CPU RSS and GPU memory.}
\label{tab:memory_breakdown}
\scalebox{0.82}{
\begin{tabular}{llccc}
\toprule
& \textbf{Phase} & $M=2$ & $M=4$ & $M=6$ \\
\midrule
\multirow{5}{*}{\rotatebox{90}{\small Basis Constr.}}
& 1. Load $\mathbf{T}$ & \multicolumn{3}{c}{4308 MB} \\
& 2. Gram $\mathbf{G}$ & \multicolumn{3}{c}{6 MB} \\
& 3. AE optimization & 544 MB & 544 MB & 544 MB \\
& 4. Compute $\mathbf{B}$ & 865 MB & 1730 MB & 2595 MB \\
& 5. Save to disk & 0 MB & 0 MB & 0 MB \\
\midrule
\multirow{1}{*}{\small Reconstr.}
& 6. Reconstruct $\hat{\mathbf{T}}$ & 4977 MB & 4977 MB & 4977 MB \\
\midrule
& \textbf{Peak RSS} & \textbf{10.7 GB} & \textbf{11.6 GB} & \textbf{12.5 GB} \\
& \textbf{GPU peak} & 0.01 MB & 0.01 MB & 0.01 MB \\
\bottomrule
\end{tabular}
}
\end{subtable}
\hfill
\begin{subtable}[t]{0.4\textwidth}
\centering
\caption{IO artifact sizes.}
\label{tab:io_breakdown}
\scalebox{0.82}{
\begin{tabular}{lccc}
\toprule
\textbf{Artifact} & $M=2$ & $M=4$ & $M=6$ \\
\midrule
$\mathbf{W}_e$ & 1 KB & 2 KB & 2 KB \\
$\mathbf{B}$ & 908 MB & 1815 MB & 2723 MB \\
\bottomrule
\end{tabular}
}
\end{subtable}
\end{table}
}

\subsection{Precision and Efficiency}
\label{app:precision}

To evaluate the interaction between numerical precision and basis compression, we run the full pipeline under FP32, FP16, and BF16 on ViT-B/16 with 8 tasks and $M=4$ bases. \Cref{tab:precision} reports the results. Reducing precision to FP16 or BF16 halves storage from ${\sim}$1706 MB to ${\sim}$853 MB with no meaningful loss in accuracy for AE, demonstrating that basis compression is orthogonal to and composable with precision reduction as discussed in \Cref{sec:related_work}.

\begin{table}[ht]
\centering
\caption{Effect of numerical precision on ViT-B/16, 8 tasks, $M=4$ bases.}
\label{tab:precision}
\scalebox{0.82}{
\begin{tabular}{llrr}
\toprule
\textbf{Prec} & \textbf{Method} & \textbf{Storage (MB)} & \textbf{NormAcc} \\
\midrule
\multirow{3}{*}{FP32}
& AE         & 1705.99 & 0.717 \\
& PCA        & 1705.87 & 0.538 \\
& RandSelect & 1705.99 & 0.695 \\
\midrule
\multirow{3}{*}{FP16}
& AE         & 853.08 & 0.717 \\
& PCA        & 852.96 & 0.540 \\
& RandSelect & 853.08 & 0.695 \\
\midrule
\multirow{3}{*}{BF16}
& AE         & 853.08 & 0.717 \\
& PCA        & 852.96 & 0.540 \\
& RandSelect & 853.08 & 0.695 \\
\bottomrule
\end{tabular}
}
\end{table}

\section{Experimental Details about Bases Arithmetic}

\subsection{Choice of Subsampling Strategy and Learning Weights}
\label{app:subsample}
\paragraph{Subsampling.}

\begin{table}[ht]
\centering
\caption{Effect of subsampling on TA addition accuracy for ViT-B/32 with $M=50\%$ across different $\tau$ values (8 tasks). 
We report absolute average accuracy with and without subsampling.}
\label{tab:subsample_tau}
\begin{tabular}{lcc}
\toprule
$\tau$ & $D_i$ & $\widetilde{D_i}$ \\
\midrule
(500, 0.8) & 0.682 & 0.684 \\
(500, 0.9) & 0.667 & 0.667 \\
1          & 0.682 & 0.681 \\
5          & 0.689 & 0.689 \\
\bottomrule
\end{tabular}
\end{table}

During subsampling, we select $n_i \cdot M/T$ examples per original dataset $D_i$ (subsampling stratified by dataset), creating in total $nM$ effective samples in $\cup_{i=1}^{T}\widetilde{D_i}$ compared to $nT$ effective samples in $\cup_{i=1}^{T}D_i$ assuming size of $T$ datasets are the same. In few-shot scenarios, such as the language experiments (\Cref{tab:addition_language}) or OOD experiments (\Cref{tab:ood}), we similarly adjust the dataset size to $k \cdot M/T$, thereby simulating the stratified subsampling effect. \Cref{tab:subsample_tau} compares addition results with and without subsampling under TA for ViT-B/32 with $M=4$ and different $\tau$ schedules. The results show that subsampling has only a minimal impact on accuracy, indicating that tuning on smaller, stratified subsets is sufficient to capture the relevant task relationships while reducing the overall computational cost.

\paragraph{Learning Weights.}

\begin{table}[ht]
\centering
\caption{Effect of different weighting strategies for AE bases (ViT-B/32, 8 tasks, L\&S, $M=50\%$). 
We report absolute (normalized) accuracy and average L\&S mask sparsity.}
\label{tab:weight_choice}
\begin{tabular}{lcc}
\toprule
Weighting Strategy & Abs.\ (Norm.) Acc & Avg.\ Sparsity \\
\midrule
Uniform           & 0.674 (0.727) & 0.913 \\
Fixed (random)    & 0.615 (0.669) & \textbf{0.944} \\
Encoder Weighted  & \textbf{0.691 (0.732)} & 0.910 \\
\bottomrule
\end{tabular}
\end{table}

Table~\ref{tab:weight_choice} compares different strategies for setting the basis weights when constructing AE representations. In the uniform case, each task contributes equally ($\mathbf{W}_e[i,m] = 1/T$), while in the fixed random case, each of $M$ basis randomly selects one task to receive weight 1 (others 0). Finally, the encoder-weighted approach uses the learned encoder outputs $\mathbf{W}_e[:,m]$ to define the convex combination of tasks for each basis as in our proposed weighting strategy. We observe that encoder weighting yields the best accuracy under L\&S, where accurate localization requires each basis to correspond to a meaningful mixture of tasks. This mapping allows the model to identify parameter regions that explain the combined tasks’ performance more effectively than uniform or random weighting. The tradeoff is a slight reduction in sparsity, which modestly increases memory/storage overhead when using CSR format. Nonetheless, the gain in performance strongly supports the use of encoder weights in practice.

\subsection{Offline Bases Addition}
\label{app:offline_addition}

\paragraph{Datasets and Metrics.} Define normalized accuracy as absolute accuracy normalized by single task fine tuned performance. For vision experiments, we report the classification accuracy on ViT models \citep{dosovitskiy2020image} on Cars \citep{krause20133d}, DTD \citep{cimpoi2014describing}, EuroSAT \citep{helber2019eurosat}, GTSRB \citep{stallkamp2012man}, MNIST \citep{lecun1998gradient}, RESISC45 \citep{cheng2017remote}, SVHN \citep{netzer2011reading}, and SUN397 \citep{xiao2010sun} for the 8-task setting. For 14 tasks, we additionally include CIFAR100 \citep{krizhevsky2009learning}, STL10 \citep{coates2011analysis}, Flowers102 \citep{nilsback2008automated}, OxfordIIITPet \citep{parkhi2012cats}, PCAM \citep{veeling2018rotation}, FER2013 \citep{goodfellow2013challenges}, and for 20 tasks, we further include EMNIST \citep{cohen2017emnist}, CIFAR10 \citep{krizhevsky2009learning}, Food101 \citep{bossard2014food}, FashionMNIST \citep{xiao2017fashion}, RenderedSST2 \citep{socher2013recursive} and KMNIST \citep{clanuwat2018deep}. For language experiments, we use a 12-task benchmark \cite{gao2020making} with SST2 \citep{socher2013recursive}, CR \citep{hu2004mining}, MR \citep{pang2004sentimental}, MPQA \citep{wiebe2005annotating}, TREC \citep{li2002learning}, SUBJ \citep{pang2004sentimental}, QNLI \citep{wang2018glue}, SNLI \citep{bowman2015large}, MNLI \citep{williams2017broad}, RTE \citep{wang2018glue}, MRPC \citep{dolan2005automatically} and QQP \citep{sharma2019natural} trained on RoBERTa \citep{liu2019roberta} models. We report the F1 metric for MRPC. 

\paragraph{Hyperparameters of Oracle Merging Methods.} For vision experiments, both TA and TIES use validation data for hyperparameter tuning of the isotropic scaling coefficient $\alpha$, 
with a grid search over $[0, 0.05, 0.10, \dots, 1.0]$ (21 grid points). 
For TIES, we additionally set the top-$k$ threshold to 20\% and use \texttt{sum} as the merging rule. 
For L\&S, we use the following settings: 
sigmoid bias = 5, 
batch size = 16 for ViT-L/14 and 64 otherwise, 
$\ell_1$ strength = 1, 
learning rate = $10^{-7}$, 
10 training epochs, 
and sparsity = $10^{-5}$. For language experiments, we perform experiments on 64-shot datasets. TA and TIES adopt the same setup as in vision. 
For L\&S, we use: 
sigmoid bias = 3, 
learning rate = $10^{-7}$, 
$\ell_1$ strength = 0, 
10 training epochs, 
sparsity = $10^{-5}$, 
and batch size = 8.

\paragraph{Supplementary Results.} The normalized accuracy results in \Cref{tab:addition_vision_normalized,tab:addition_language_normalized} confirm that our main observations in \Cref{sec:offline_multitask} are not simply an artifact of single-task fine-tuning performance. Even after normalization, AE achieves the best overall results in general, with a clear ordering of AE $>$ RandSelect $>$ PCA across both vision and language tasks. For the TIES method, RandSelect becomes slightly more competitive, and in a few large-task settings it reaches performance on par with AE. The per-dataset results in \Cref{fig:vision_8tasks_perdataset,fig:vision_14tasks_perdataset,fig:vision_20tasks_perdataset,fig:language_12tasks_perdataset} illustrate that AE exhibits a more pronounced advantage in terms of absolute accuracy under L\&S, the strongest merging method we evaluate, and its relative advantage grows mildly as the number of tasks increases. 

To further strengthen the PCA baseline, we experimented with a simple 
reweighting scheme in \Cref{tab:addition_vision_normalized} that converts the task coefficients/PCA loadings $\mathbf{C} = \mathbf{V}_M^\top$ 
into convex combinations of tasks, denoted by PCA$^{\geq 0}$. For each component $m \in [M]$, let 
$v_m \in \mathbb{R}^T$ denote the corresponding column of $\mathbf{V}_M$. 
We define task weights by applying a softmax only to the strictly positive 
entries of $v_m$: 
\begin{equation*}
    \mathbf{W}_e[i,m] =
    \begin{cases}
        \dfrac{\exp(v_{m,i})}{\sum_{j: v_{m,j} > 0} \exp(v_{m,j})}, & v_{m,i} > 0, \\
        0, & \text{otherwise},
    \end{cases}
    \label{eq:pca_possoft}
\end{equation*}
The resulting per-basis weights 
$ \mathbf{W}_e[:,m] \in \Delta^{T-1}$ ensure that each PCA basis vector can be interpreted 
as a convex combination of the positively aligned task vectors by completely discarding the negatively aligned ones, thus can be used as the loss weight while training binary masks for L\&S. While this positive-softmax variant addresses the concern that our default option uniform 
PCA ignores the relative contribution of tasks, it still performs significantly 
worse than AE in practice.

\paragraph{Multi-seed Robustness.} To assess the robustness of our results, we report mean and standard error across multiple seeds for L\&S and L\&S-AE in \Cref{tab:multiseed}. The results confirm that the performance differences between full task vectors and compressed bases are consistent across runs.

\begin{table}[ht]
\centering
\caption{Multi-seed normalized accuracy (mean $\pm$ SE over 3 seeds) for L\&S with full task vectors and L\&S-AE with $M=50\%$ bases across vision models and task counts. \underline{Underline} indicates the better mean.}
\label{tab:multiseed}
\scalebox{0.85}{
\begin{tabular}{llcc}
\toprule
\textbf{Model} & \textbf{Tasks} & \textbf{L\&S} & \textbf{L\&S-AE} \\
\midrule
\multirow{3}{*}{ViT-B/16}
& 8  & \underline{0.732} $\pm$ 0.019 & 0.667 $\pm$ 0.063 \\
& 14 & 0.680 $\pm$ 0.000 & \underline{0.686} $\pm$ 0.003 \\
& 20 & 0.597 $\pm$ 0.003 & \underline{0.657} $\pm$ 0.005 \\
\midrule
\multirow{3}{*}{ViT-B/32}
& 8  & \underline{0.742} $\pm$ 0.001 & 0.693 $\pm$ 0.022 \\
& 14 & 0.659 $\pm$ 0.001 & \underline{0.667} $\pm$ 0.006 \\
& 20 & 0.596 $\pm$ 0.000 & \underline{0.638} $\pm$ 0.003 \\
\midrule
\multirow{3}{*}{ViT-L/14}
& 8  & \underline{0.778} $\pm$ 0.001 & 0.723 $\pm$ 0.006 \\
& 14 & \underline{0.754} $\pm$ 0.001 & 0.750 $\pm$ 0.006 \\
& 20 & 0.701 $\pm$ 0.003 & \underline{0.720} $\pm$ 0.007 \\
\bottomrule
\end{tabular}
}
\end{table}

\begin{table*}[ht]
\centering
\small
\caption{Comparison of normalized addition accuracy across ViT models under 8, 14, and 20 vision task settings \citep{wang2024rethinking} with bases number $M=50\%$ of the total tasks.}
\label{tab:addition_vision_normalized}
\setlength{\tabcolsep}{4pt}
\begin{tabular}{lccccccccc}
\toprule
\multirow{2}{*}{\textbf{Method}}  & \multicolumn{3}{c}{\textbf{ViT-B/16}} & \multicolumn{3}{c}{\textbf{ViT-B/32}} & \multicolumn{3}{c}{\textbf{ViT-L/14}} \\
\cmidrule(lr){2-4} \cmidrule(lr){5-7} \cmidrule(lr){8-10}
& 8 task & 14 task & 20 task & 8 task & 14 task & 20 task & 8 task & 14 task & 20 task \\
\midrule
TA \citep{ilharco2022editing}    & 0.796 & 0.759 & 0.708 & 0.766 & 0.721 & 0.668 & 0.887 & 0.840 & 0.781 \\
\rowcolor{yellow!20} RandSelect        & 0.695 & 0.706 & 0.672 & 0.698 & 0.703 & \underline{0.674} & 0.808 & 0.811 & 0.763 \\
\rowcolor{yellow!20} PCA               & 0.538 & 0.629 & 0.622 & 0.578 & 0.633 & 0.645 & 0.686 & 0.729 & 0.705 \\
\rowcolor{yellow!20} AE (Ours)      & \textbf{0.717} & \textbf{0.729} & \textbf{0.687} & \textbf{0.744} & \underline{\textbf{0.725}} & \underline{\textbf{0.676}} & \textbf{0.847} & \textbf{0.840} & \textbf{0.783} \\
\midrule
TIES \citep{yadav2024ties}  & 0.843 & 0.787 & 0.733 & 0.81 & 0.748 & 0.699 & 0.907 & 0.841 & 0.798 \\
\rowcolor{yellow!20} RandSelect        & 0.719 & 0.718 & 0.672 & 0.712 & \textbf{0.717} & \textbf{0.689} & 0.847 & 0.821 & \textbf{0.779} \\
\rowcolor{yellow!20} PCA               & 0.540 & 0.629 & 0.622 & 0.576 & 0.633 & 0.654 & 0.686 & 0.728 & 0.707 \\
\rowcolor{yellow!20} AE (Ours)      & \textbf{0.724} & \textbf{0.728} & \textbf{0.687} & \textbf{0.741} & \textbf{0.717} & 0.670 & \textbf{0.854} & \underline{\textbf{0.841}} & 0.778 \\
\midrule
L\&S \citep{he2024localize}              & 0.794 & 0.721 & 0.641 & 0.811 & 0.698 & 0.642 & 0.809 & 0.796 & 0.748 \\
\rowcolor{yellow!20} RandSelect        & 0.575 & 0.569 & 0.465 & 0.580 & 0.527 & 0.467 & 0.696 & 0.722 & 0.643 \\
\rowcolor{yellow!20} PCA               & 0.540 & 0.482 & 0.440 & 0.489 & 0.438 & 0.389 & 0.692 & 0.629 & 0.581 \\
\rowcolor{yellow!20} PCA$^{\geq0}$ & 0.541 & 0.455 & 0.397 & 0.489 & 0.447 & 0.393 & 0.692 & 0.582 & 0.579 \\
\rowcolor{yellow!20} AE (Ours)      & \textbf{0.694} & \textbf{0.716} & \underline{\textbf{0.687}} & \textbf{0.732} & \underline{\textbf{0.718}} & \underline{\textbf{0.679}} & \textbf{0.767} & \textbf{0.780} & \underline{\textbf{0.780}} \\
\bottomrule
\end{tabular}
\end{table*}

\begin{table}[ht]
\centering
\small
\caption{Comparison of normalized addition accuracy with RoBERTa-base model on 12 language task benchmark with bases number $M=25\%$ of the total tasks. }
\label{tab:addition_language_normalized}
\setlength{\tabcolsep}{3.5pt}
\begin{tabular}{cccc cccc cccc}
\toprule
\multicolumn{4}{c}{\textbf{TA} \citep{ilharco2022editing}} 
& \multicolumn{4}{c}{\textbf{TIES} \citep{yadav2024ties}} 
& \multicolumn{4}{c}{\textbf{L\&S} \citep{he2024localize}} \\
\cmidrule(lr){1-4} \cmidrule(lr){5-8} \cmidrule(lr){9-12}
$100\%$ & \cellcolor{yellow!20}RandSelect & \cellcolor{yellow!20}PCA & \cellcolor{yellow!20}AE & $100\%$ & \cellcolor{yellow!20}RandSelect & \cellcolor{yellow!20}PCA & \cellcolor{yellow!20}AE & $100\%$ & \cellcolor{yellow!20}RandSelect & \cellcolor{yellow!20}PCA & \cellcolor{yellow!20}AE \\
\midrule

0.777 & \cellcolor{yellow!20}0.570 & \cellcolor{yellow!20}0.565 & \cellcolor{yellow!20}\textbf{0.592} 
& 0.744 & \cellcolor{yellow!20}0.568 & \cellcolor{yellow!20}\textbf{0.587} & \cellcolor{yellow!20}\textbf{0.587} 
& 0.936 & \cellcolor{yellow!20}0.769 & \cellcolor{yellow!20}0.763 & \cellcolor{yellow!20}\textbf{0.901} \\
\bottomrule
\end{tabular}
\end{table}

\begin{figure}[ht]
    \centering
    \includegraphics[width=0.5\linewidth]{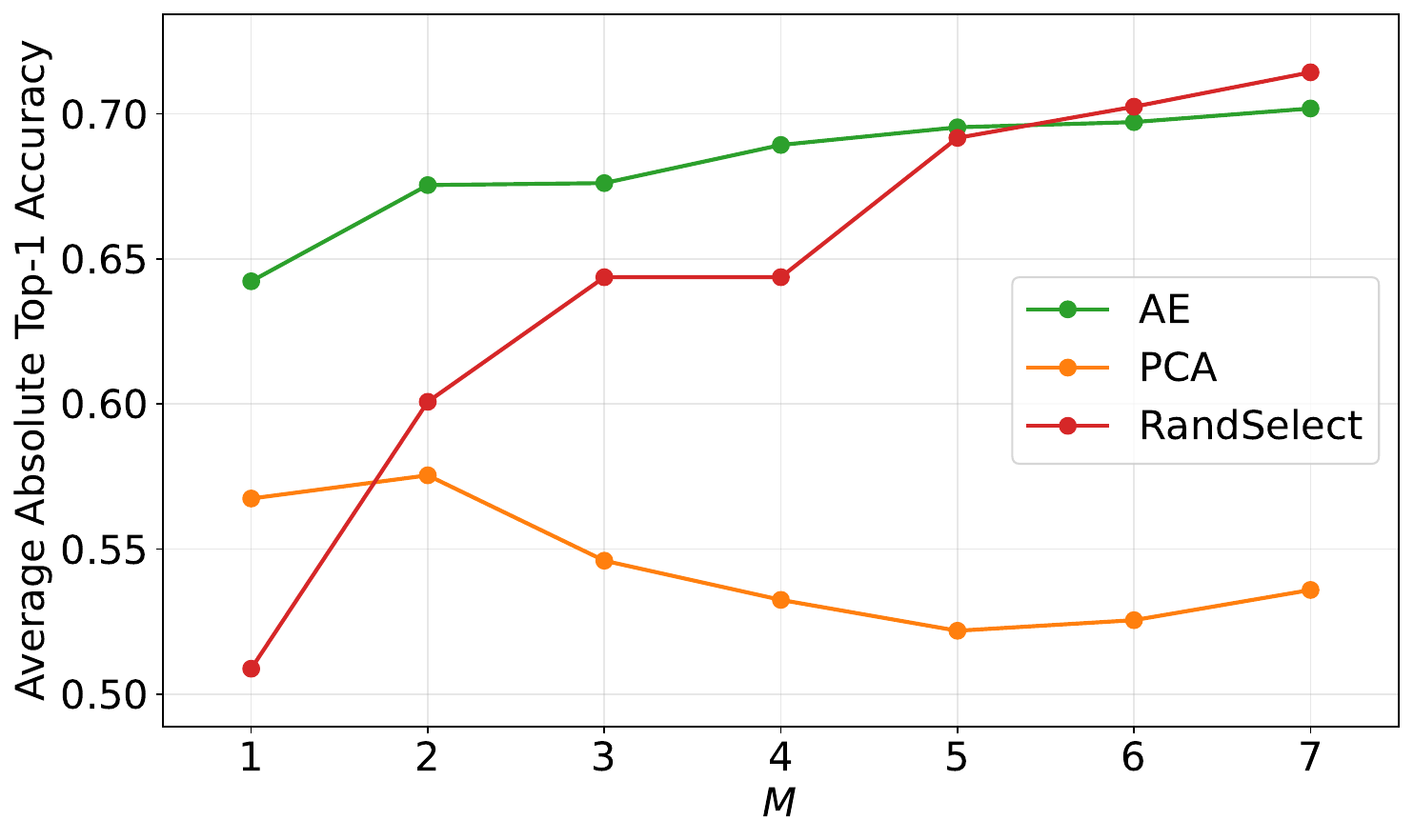}
    \caption{Offline addition accuracy as a function of the number of basis vectors $M$ for different bases construction methods with ViT-B/32 on 8 tasks benchmark.}
    \label{fig:offline_addition_M}
\end{figure}

\paragraph{Performance Scaling with $M$.} \Cref{fig:offline_addition_M} plots the offline addition accuracy as the number of basis vectors M increases. We observe that the AE achieves consistently strong performance across all values of $M$, starting higher than both PCA and RandSelect and maintaining steady improvements as $M$ grows. In particular, AE dominates PCA throughout the entire range, with a margin of more than 0.1 absolute accuracy at most values of $M$. RandSelect starts much lower but improves rapidly with larger $M$, eventually approaching and in some cases slightly surpassing AE when $M \geq 6$. This reflects that random bases can capture task diversity when many are available, but their effectiveness is more volatile and requires larger basis sizes to be competitive. In contrast, AE provides reliable accuracy gains even with small $M$, making it far more practical in settings where storage or computational budget limits the number of bases.

\begin{table*}[ht]
\centering
\small
\caption{Normalized accuracy across datasets for Llama-3.2-3B \citep{grattafiori2024llama} for $5$ task vectors and various compression methods at $M = 40\%$. Best compression results of Task Arithmetic (TA) are bolded.}
\label{tab:mergebench}
\setlength{\tabcolsep}{4.5pt}
\renewcommand{\arraystretch}{1.15}
\begin{tabular}{llcc>{\columncolor{yellow!20}}c>{\columncolor{yellow!20}}c>{\columncolor{yellow!20}}c}
\toprule
\textbf{Category} & \textbf{Dataset} & \textbf{SFT} & \textbf{TA} & \textbf{AE} & \textbf{PCA} & \textbf{RandSelect} \\
\midrule
Instruction Following & IFEval \citep{zhou2023instruction} & 37.52 & 25.32 & 14.79 & 13.68 & 21.44 \\
\midrule
\multirow{2}{*}{Math} & GSM8k \citep{cobbe2021training} & 72.55 & 45.34 & 53.68 & 18.50 & 46.10 \\
 & MATH \citep{hendrycks2021measuring} & 33.04 & 10.14 & 22.50 & 0.88 & 16.80 \\
 \midrule
\multirow{12}{*}{\shortstack{Multilingual\\(fr, de, es, ru)}}  &  \multirow{4}{*}{M\_MMLU \citep{lai2023okapi}}    & 44.89 & 45.01 & 45.32 & 44.77 & 45.20 \\
 &       & 43.61 & 44.54 & 48.04 & 46.27 & 47.80 \\
 &       & 46.45 & 46.11 & 46.07 & 44.99 & 45.92 \\
 &  & 41.80 & 41.57 & 42.56 & 42.20 & 45.92 \\ \cmidrule(lr){2-7}
 &   \multirow{4}{*}{M\_ARC \citep{lai2023okapi}}       & 40.89 & 40.46 & 29.94 & 31.48 & 31.14 \\
 &         & 38.32 & 36.70 & 37.69 & 36.32 & 38.71 \\
 &         & 40.09 & 41.54 & 35.07 & 33.70 & 35.41 \\
 &         & 36.95 & 37.55 & 34.22 & 32.42 & 34.04 \\ \cmidrule(lr){2-7}
 & \multirow{4}{*}{M\_Hellaswag \citep{lai2023okapi}  } & 58.67 & 59.56 & 42.09 & 42.40 & 42.71 \\
 &         & 54.46 & 55.22 & 46.35 & 46.16 & 46.53 \\
 &         & 60.07 & 61.21 & 43.89 & 43.80 & 44.46 \\
 &  & 52.63 & 53.89 & 40.76 & 41.03 & 41.22 \\
\midrule
\multirow{2}{*}{Coding} 
 & Humaneval+ \citep{chen2021evaluating} & 41.83 & 36.71 & 28.96 & 16.46 & 28.84 \\
 & MBPP+ \citep{austin2021program}     & 46.59 & 45.50 & 41.40 & 38.23 & 41.48 \\
 \midrule
\multirow{4}{*}{Safety}  & WildGuardTest \citep{han2024wildguard} & 85.71 & 51.94 & 29.91 & 25.11 & 31.32 \\

 & HarmBench \citep{mazeika2024harmbench} & 89.38 & 39.69 & 27.82 & 24.07 & 30.62 \\
 & DoAnythingNow \citep{shen2024anything} & 90.67 & 32.67 & 31.61 & 27.67 & 23.67 \\
 & XSTest \citep{rottger2023xstest}    & 37.56 & 60.22 & 46.89 & 42.22 & 47.11 \\
\midrule
\multicolumn{2}{c}{\textbf{Average Normalized Accuracy}} & 100.00 & 72.88 & \textbf{63.49} & 47.07 & 63.38 \\
\bottomrule
\end{tabular}
\end{table*}

\paragraph{Results on Generative Large Language Models.} In \Cref{tab:mergebench}, we use the LLM model merging benchmark ~\citep{he2025mergebench}, evaluating 5 input task vectors per domain on 21 downstream tasks, providing a comprehensive test of LLM abilities with generative tasks. Using Llama-3.2-3B, we tested three basis compression algorithms with $M = 2$. Following ~\citet{he2025mergebench}, we report normalized accuracy, defined as the absolute sum of a method’s per-category average divided by the absolute sum of the supervised finetuning (SFT) per-category average. We also adopt the recommended scaling coefficient of 0.4 for all addition experiments. Across all settings, AE achieves the best overall performance, RandSelect ranks second, and PCA is consistently the worst. Notably, AE preserves 87\% of the full task vector performance while using only 40\% of the compute and storage, which is particularly important for scaling to larger foundation models.

\begin{figure*}[ht]
\centering
\setlength{\tabcolsep}{3pt} 
\renewcommand{\arraystretch}{0} 

\begin{tabular}{ccc}
\subcaptionbox{ViT-B/16, TA, 8 tasks}{\includegraphics[width=0.32\textwidth]{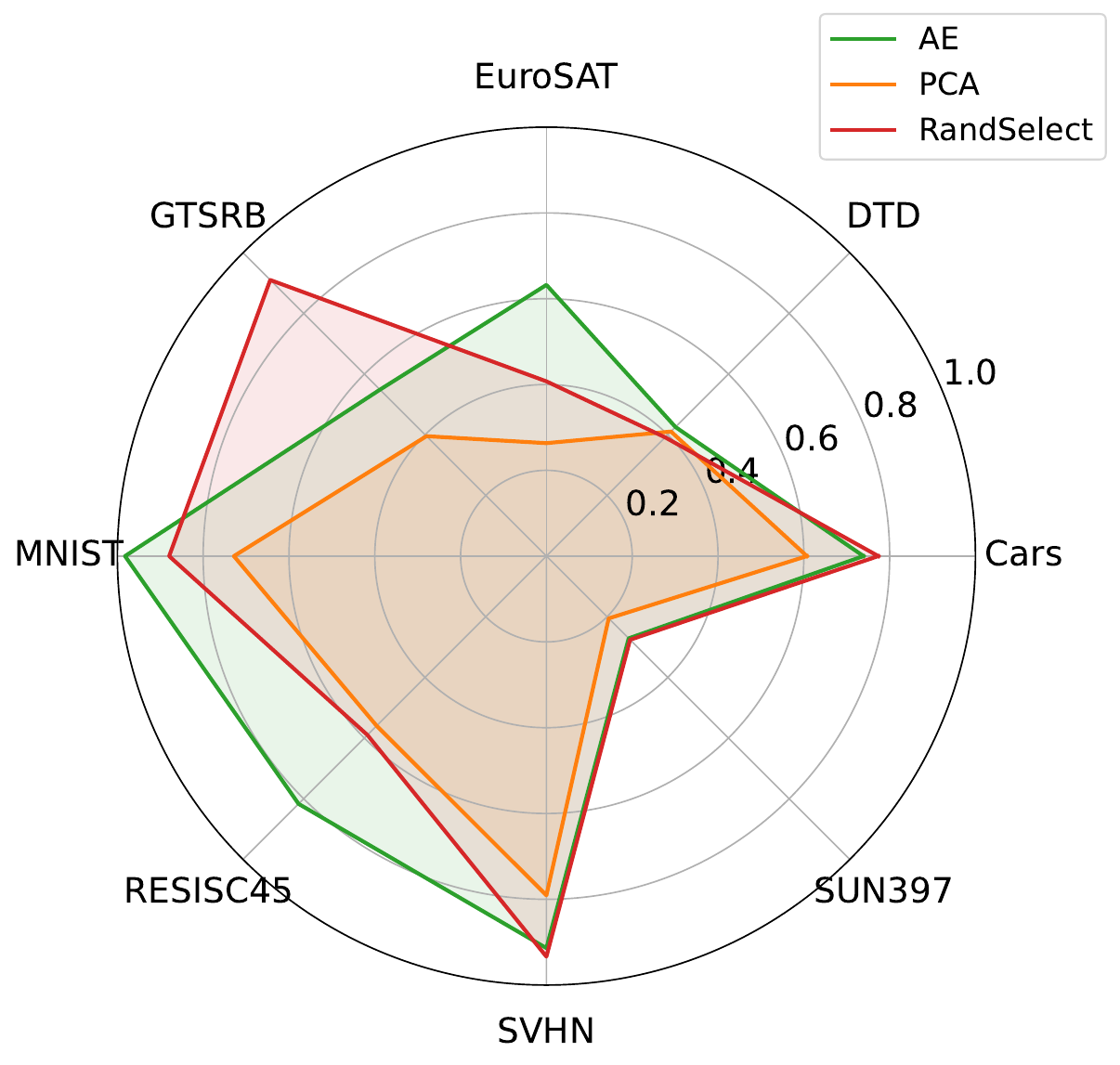}} &
\subcaptionbox{ViT-B/16, TIES, 8 tasks}{\includegraphics[width=0.32\textwidth]{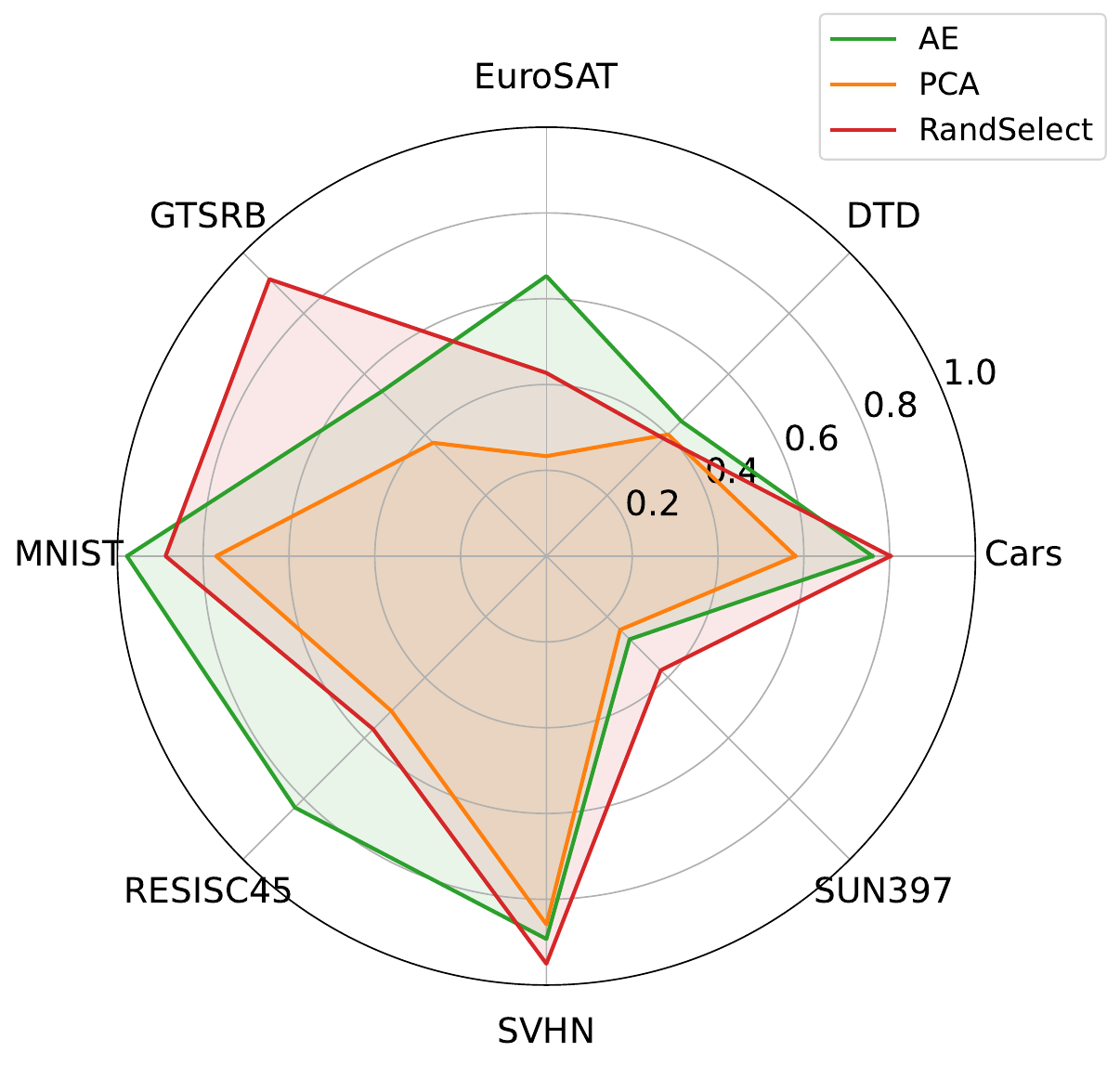}} &
\subcaptionbox{ViT-B/16, L\&S, 8 tasks}{\includegraphics[width=0.32\textwidth]{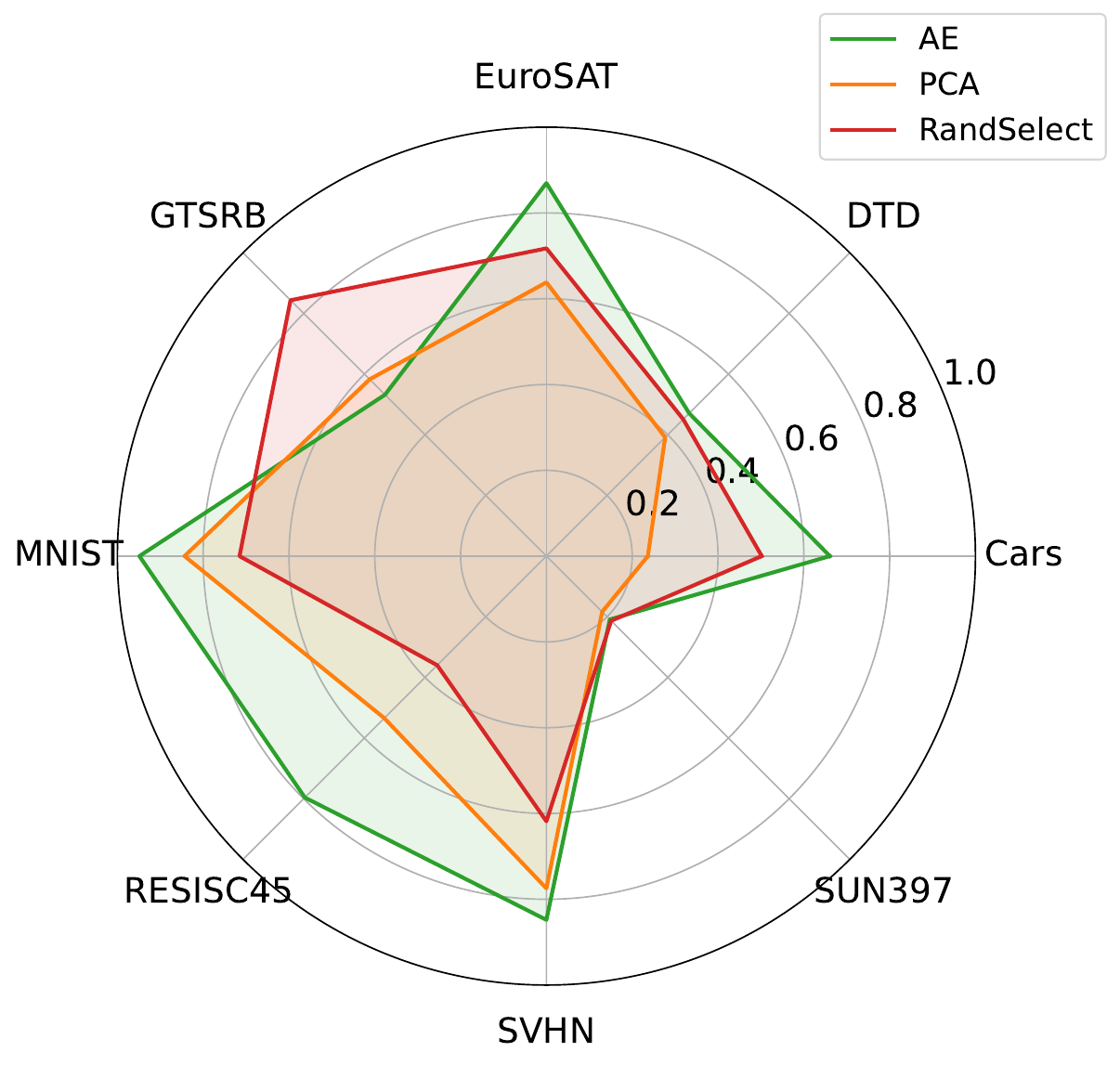}} \\[4pt]

\subcaptionbox{ViT-B/32, TA, 8 tasks}{\includegraphics[width=0.32\textwidth]{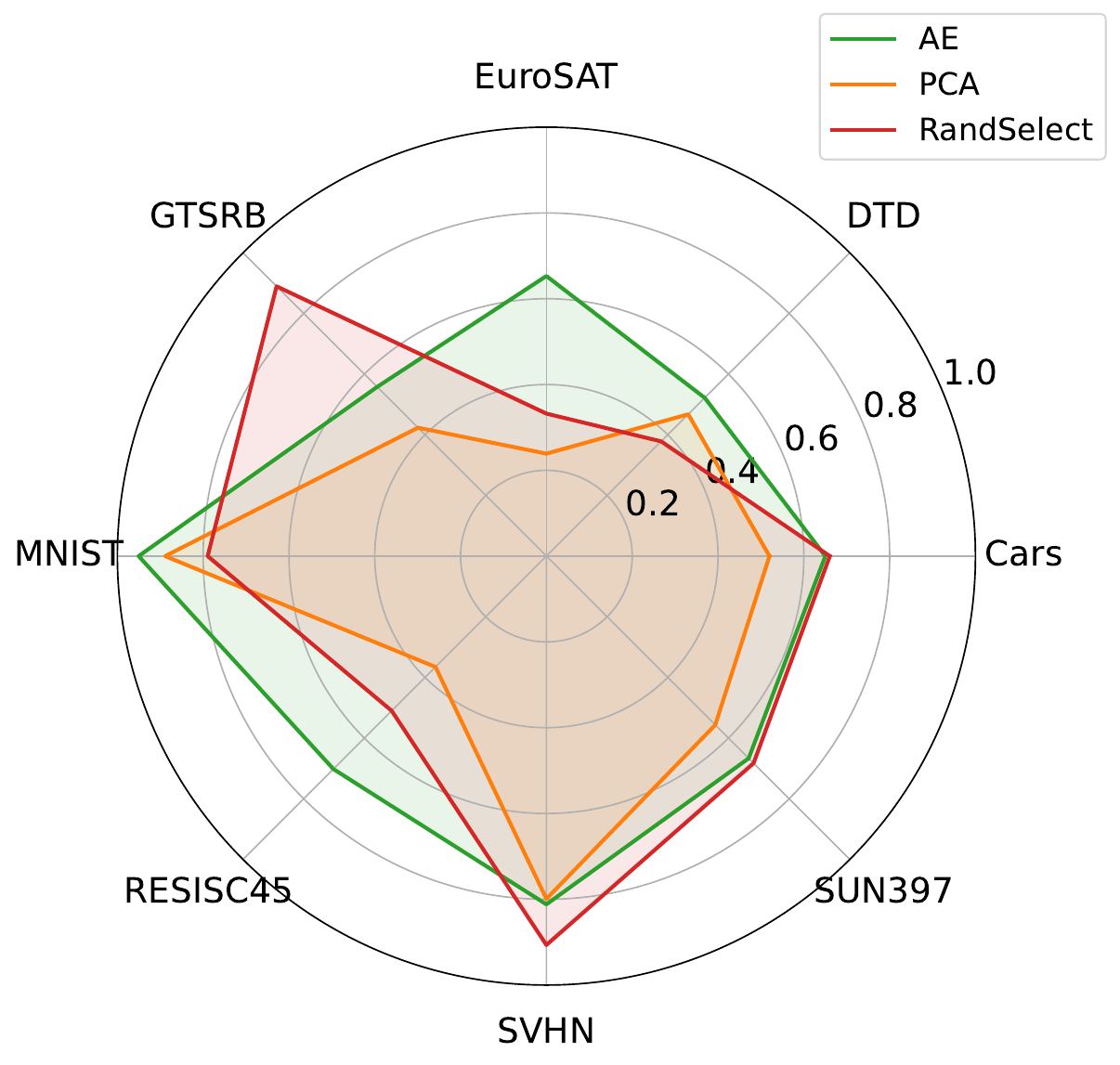}} &
\subcaptionbox{ViT-B/32, TIES, 8 tasks}{\includegraphics[width=0.32\textwidth]{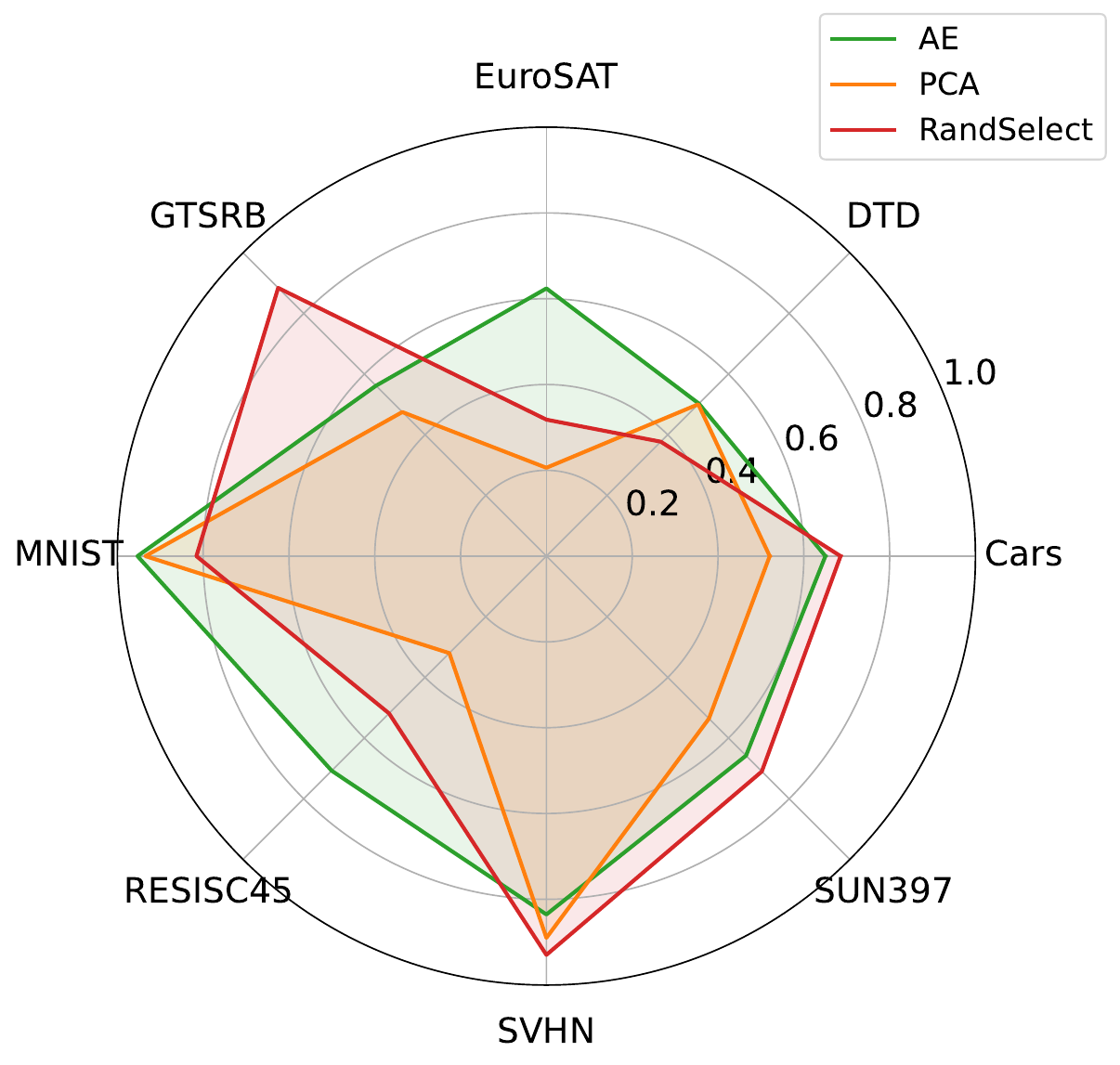}} &
\subcaptionbox{ViT-B/32, L\&S, 8 tasks}{\includegraphics[width=0.32\textwidth]{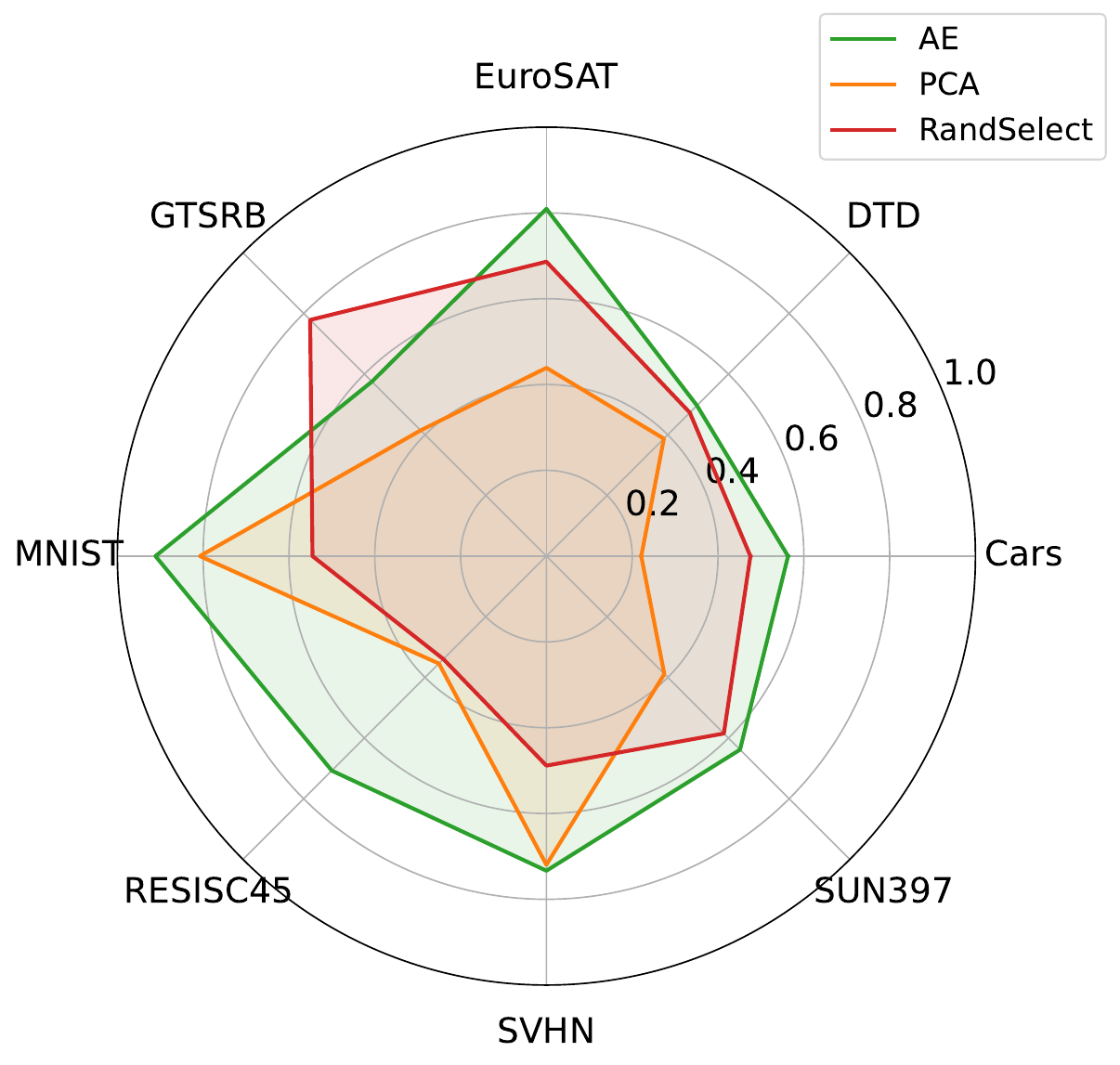}} \\[4pt]

\subcaptionbox{ViT-L/14, TA, 8 tasks}{\includegraphics[width=0.32\textwidth]{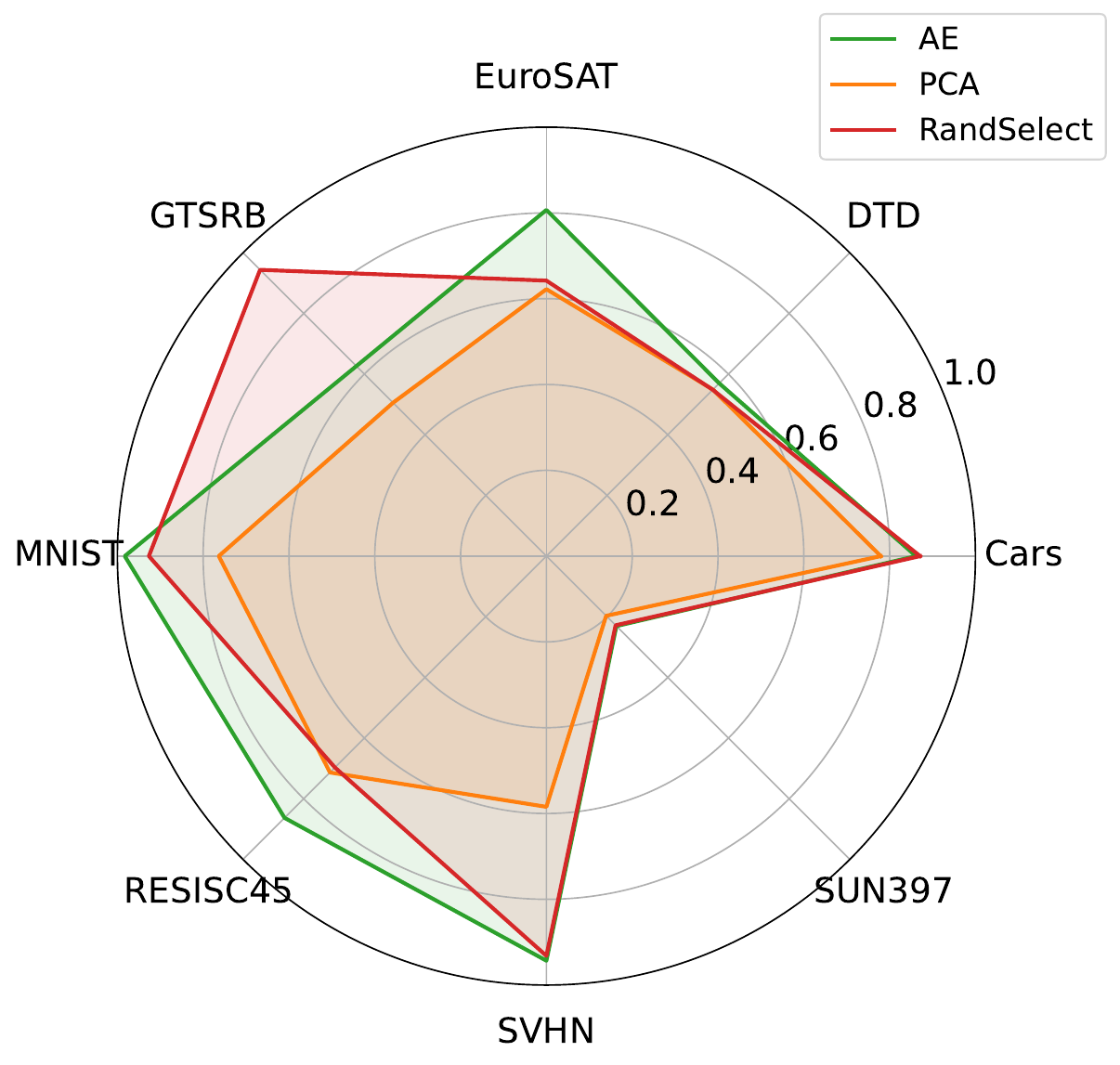}} &
\subcaptionbox{ViT-L/14, TIES, 8 tasks}{\includegraphics[width=0.32\textwidth]{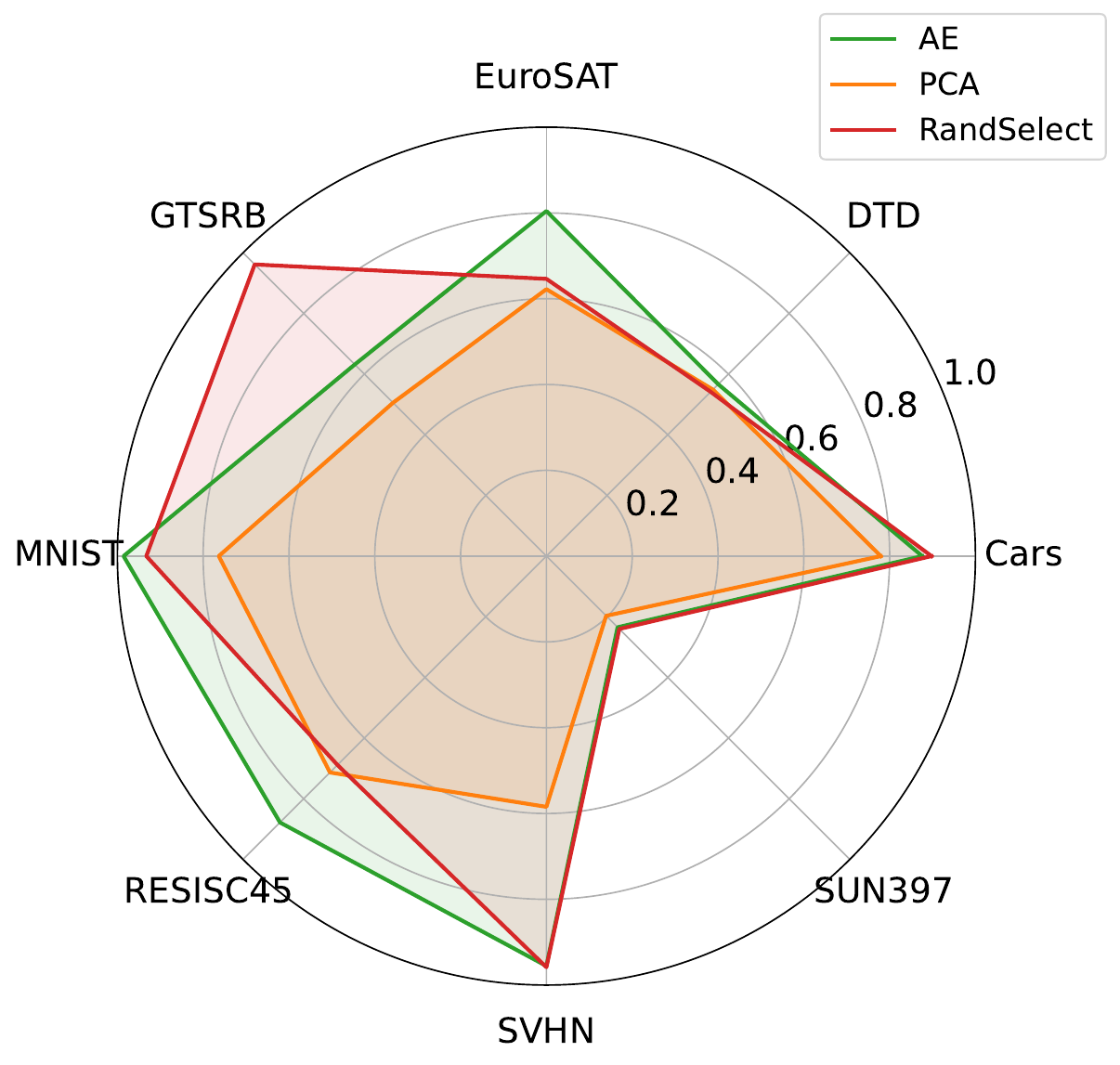}} &
\subcaptionbox{ViT-L/14, L\&S, 8 tasks}{\includegraphics[width=0.32\textwidth]{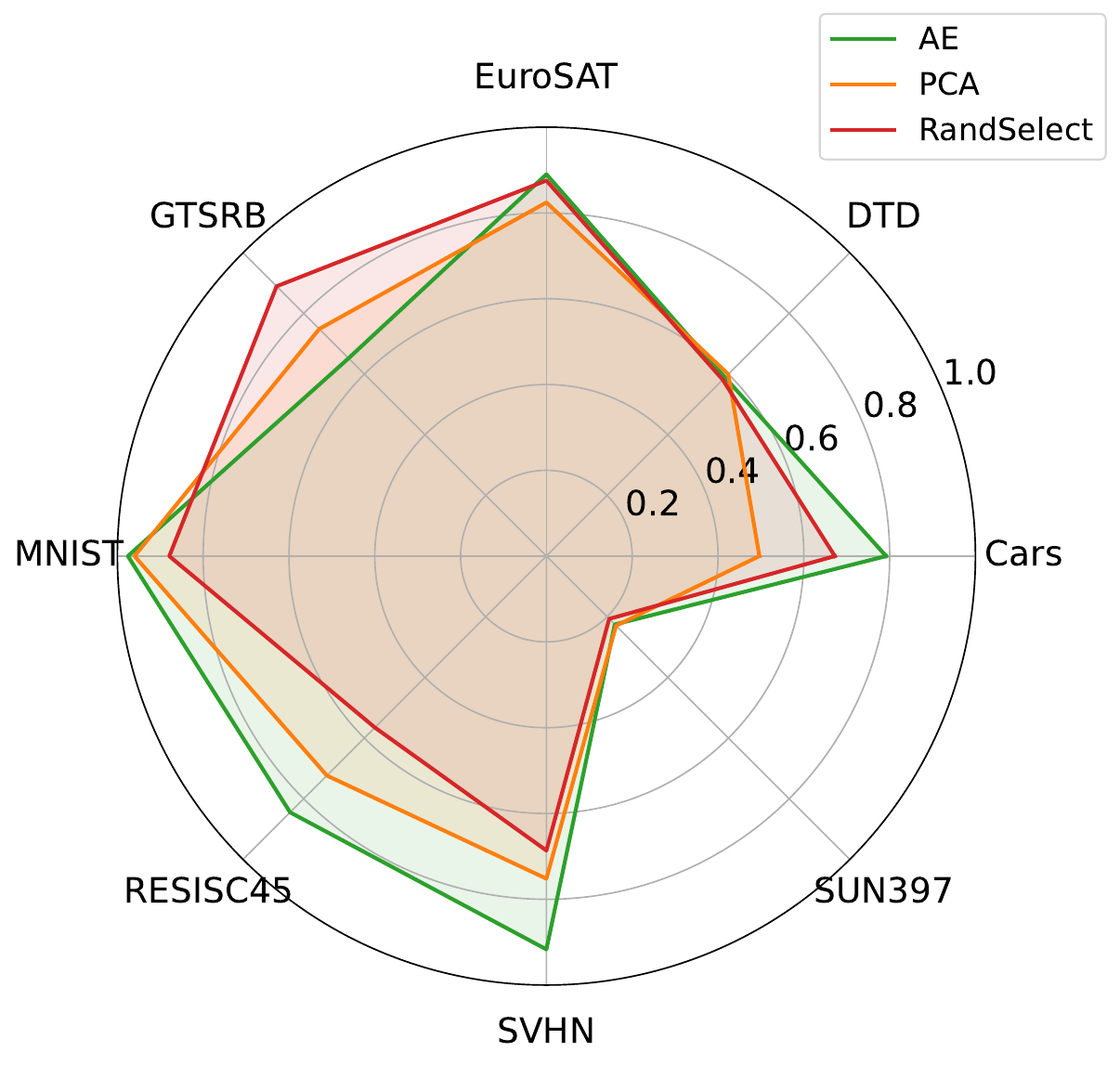}} \\
\end{tabular}

\caption{Per dataset bases comparison results for 8 vision tasks of \Cref{tab:addition_vision}.}
\label{fig:vision_8tasks_perdataset}
\end{figure*}

\begin{figure*}[ht]
\centering
\setlength{\tabcolsep}{3pt} 
\renewcommand{\arraystretch}{0} 

\begin{tabular}{ccc}
\subcaptionbox{ViT-B/16, TA, 14 tasks}{\includegraphics[width=0.32\textwidth]{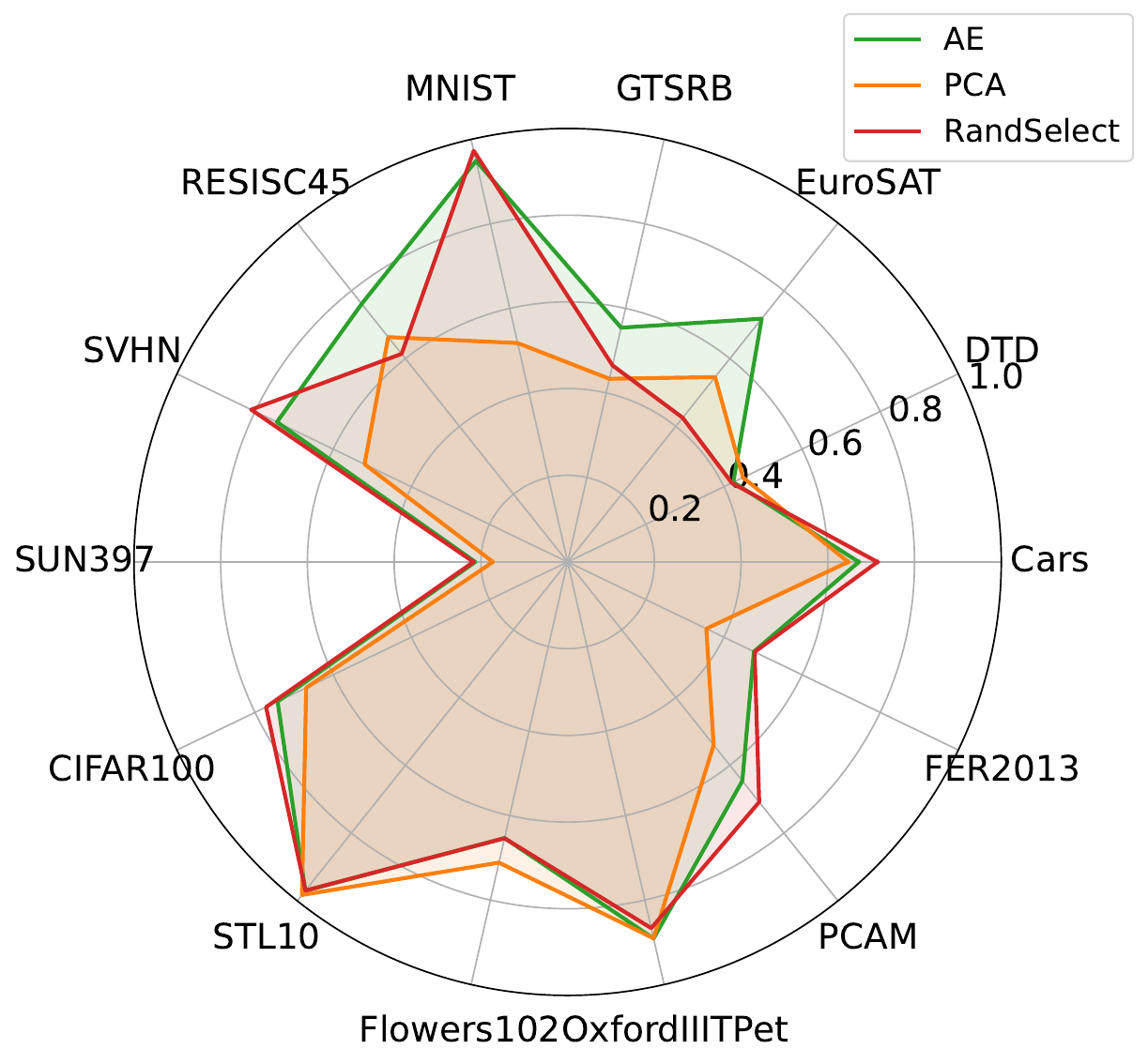}} &
\subcaptionbox{ViT-B/16, TIES, 14 tasks}{\includegraphics[width=0.32\textwidth]{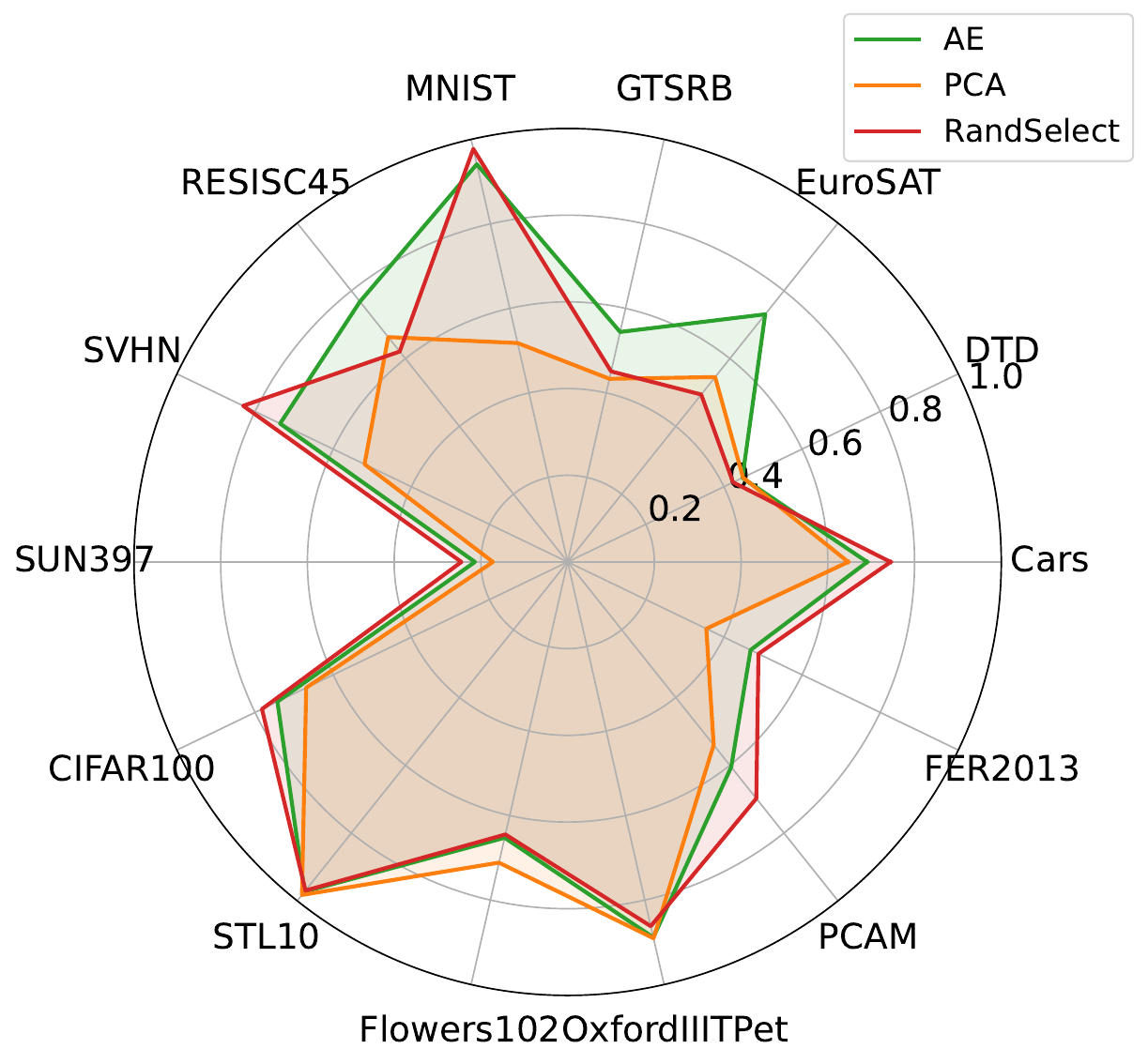}} &
\subcaptionbox{ViT-B/16, L\&S, 14 tasks}{\includegraphics[width=0.32\textwidth]{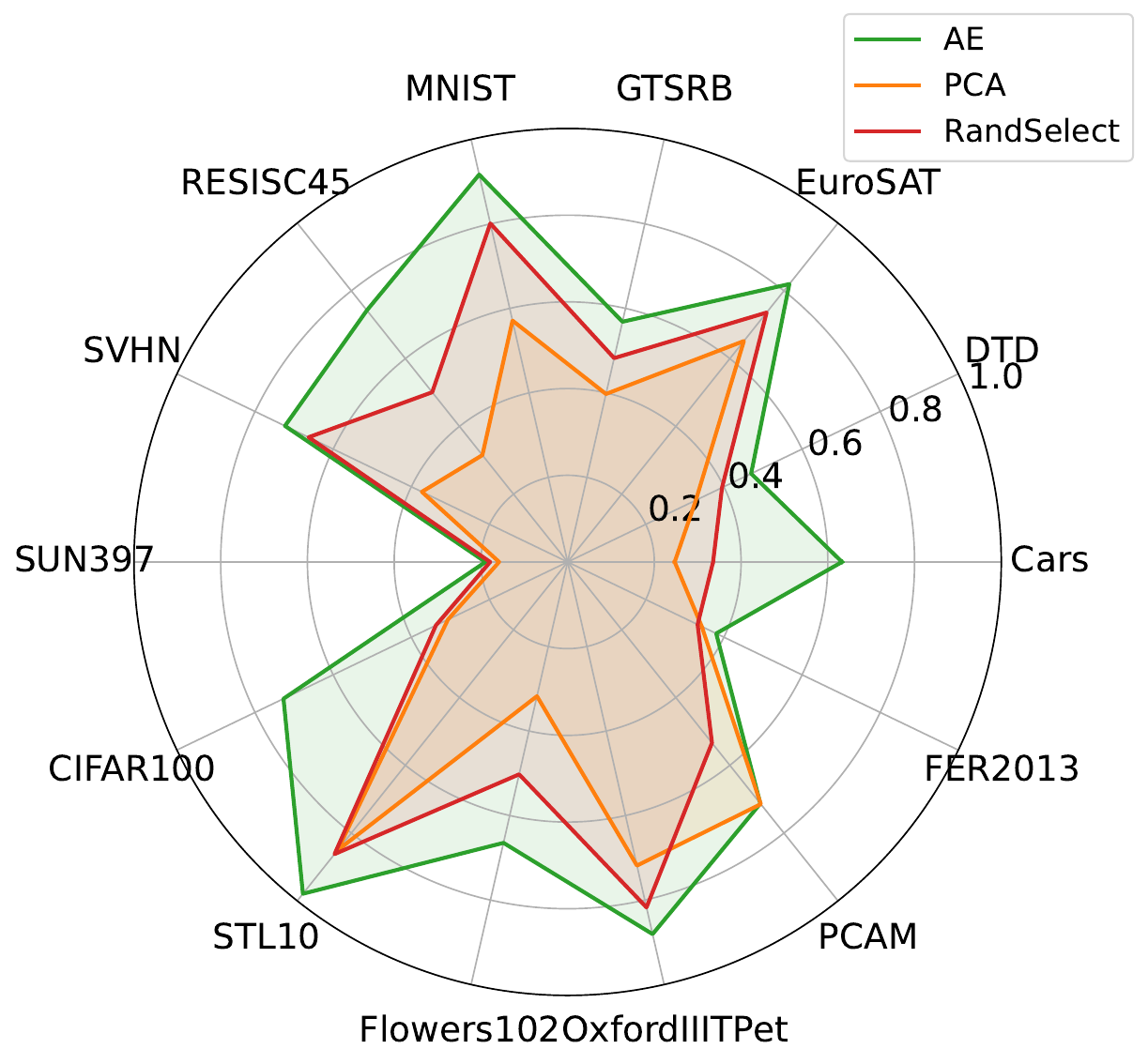}} \\[4pt]

\subcaptionbox{ViT-B/32, TA, 14 tasks}{\includegraphics[width=0.32\textwidth]{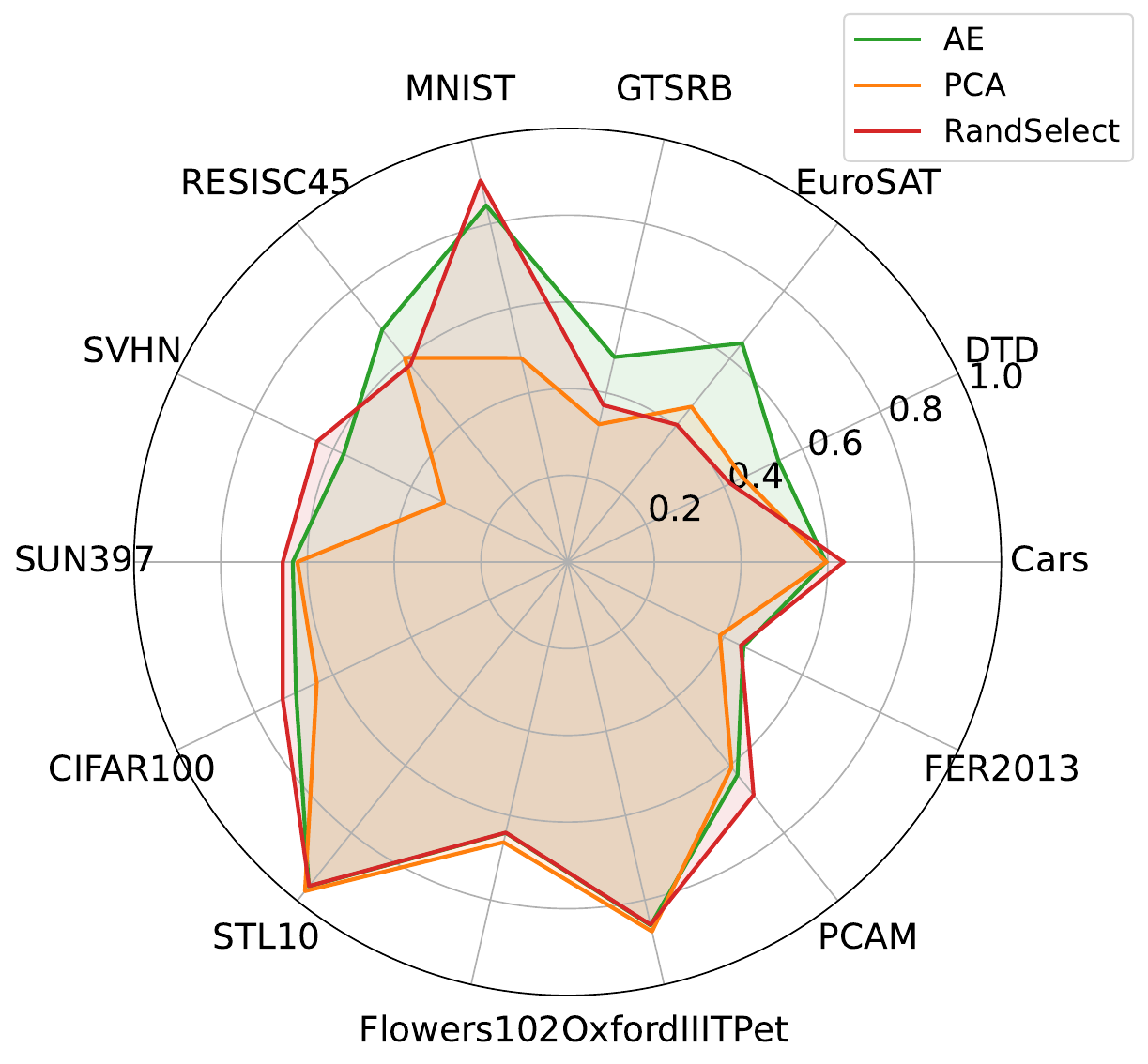}} &
\subcaptionbox{ViT-B/32, TIES, 14 tasks}{\includegraphics[width=0.32\textwidth]{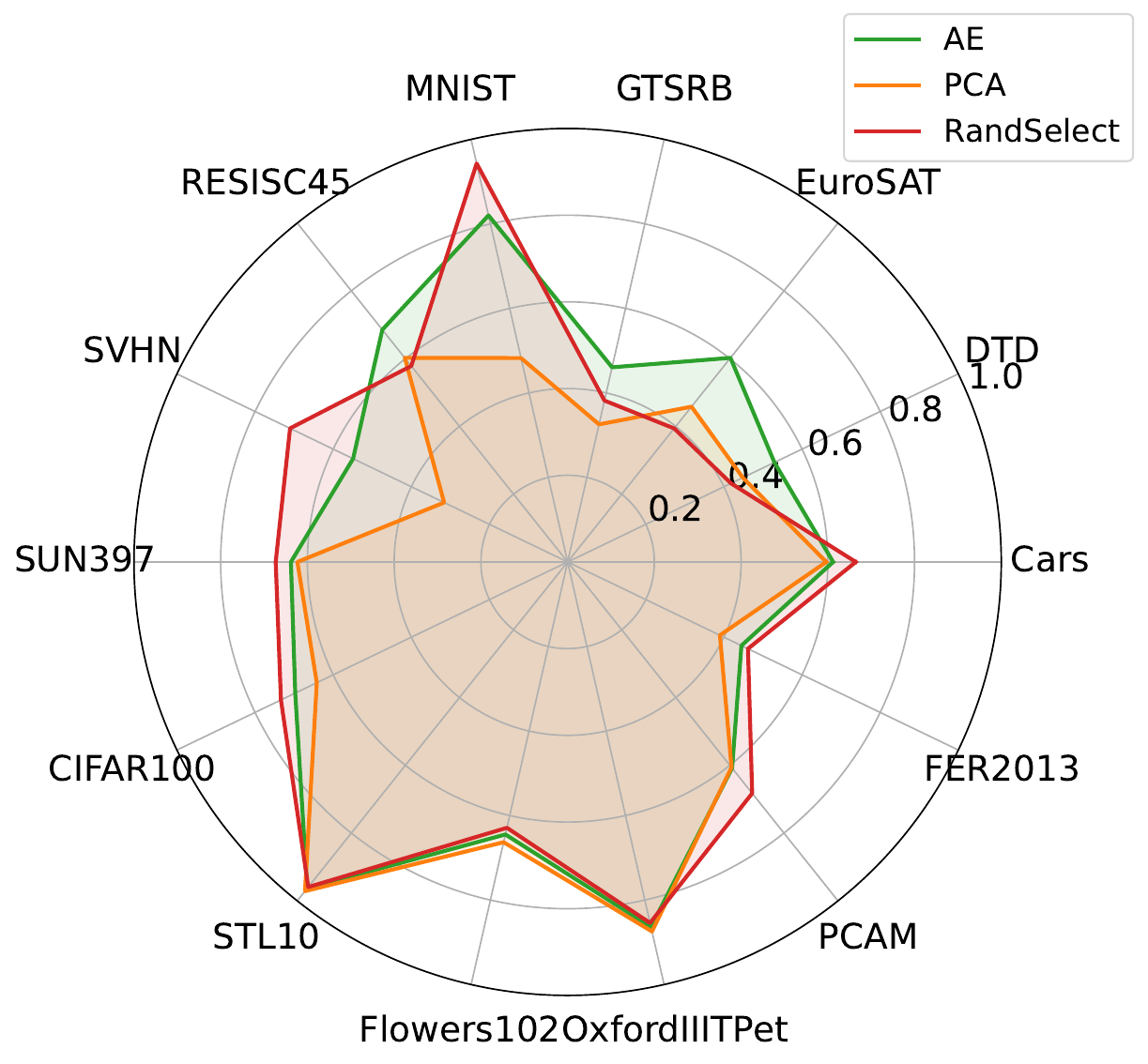}} &
\subcaptionbox{ViT-B/32, L\&S, 14 tasks}{\includegraphics[width=0.32\textwidth]{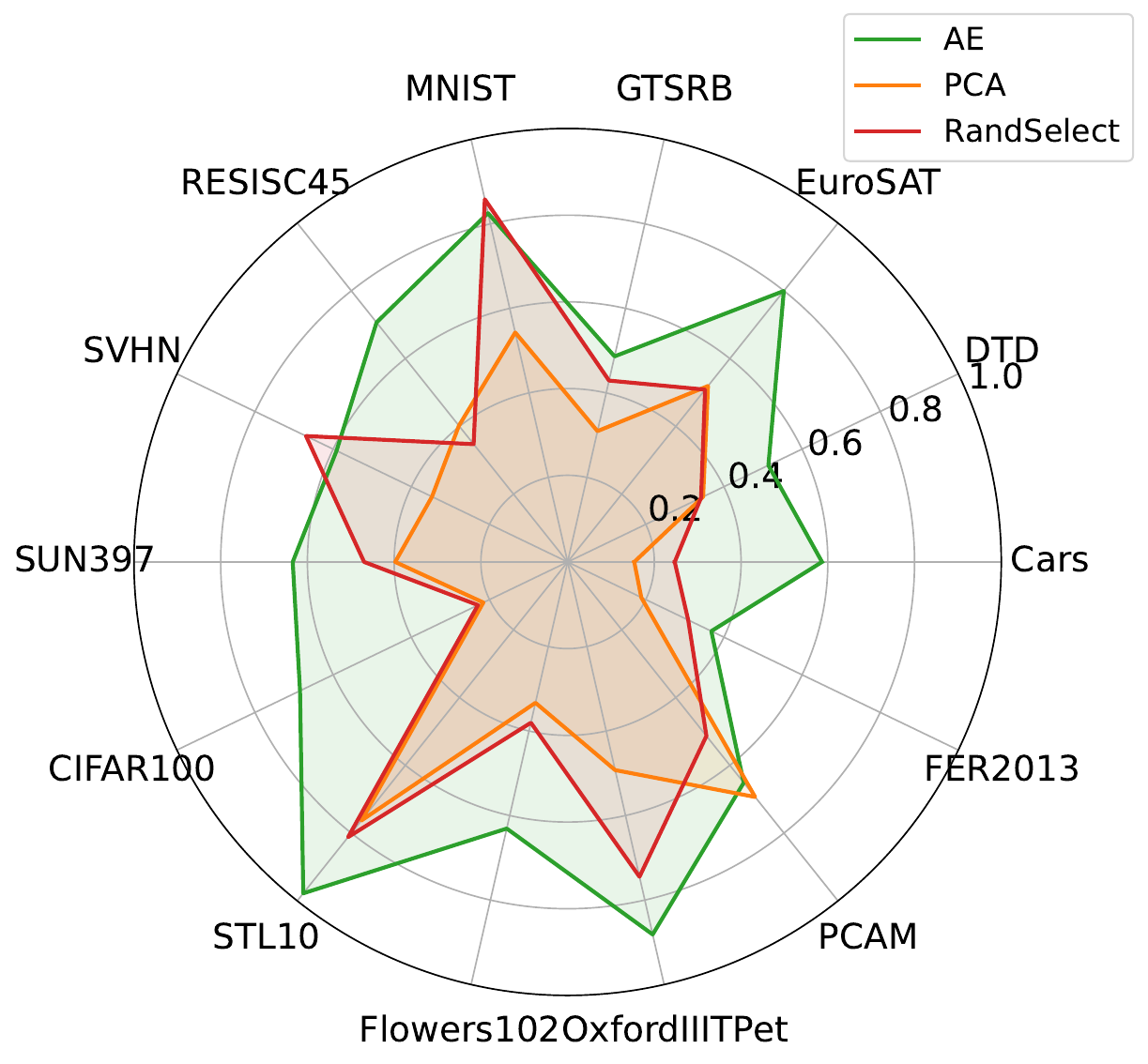}} \\[4pt]

\subcaptionbox{ViT-L/14, TA, 14 tasks}{\includegraphics[width=0.32\textwidth]{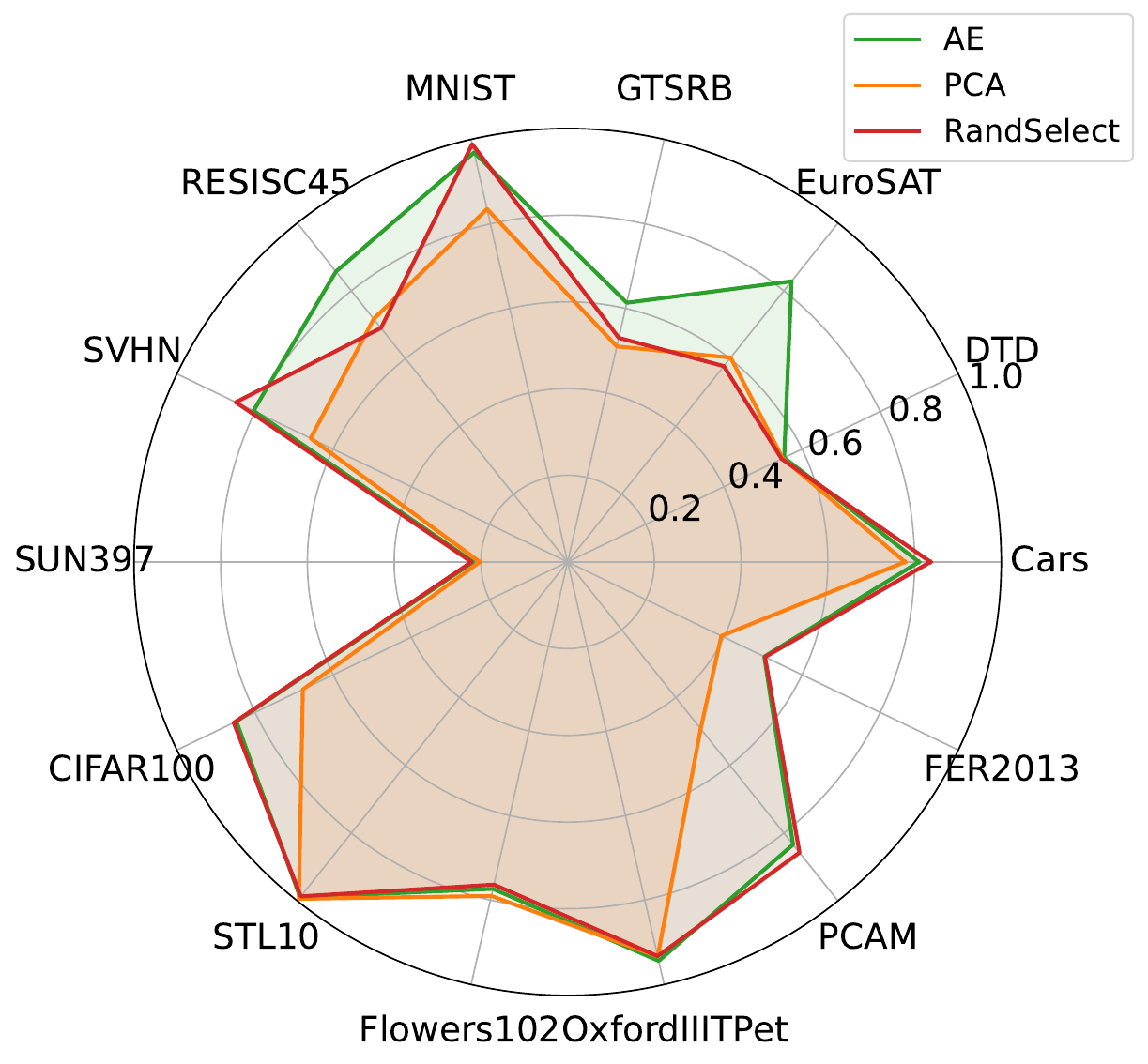}} &
\subcaptionbox{ViT-L/14, TIES, 14 tasks}{\includegraphics[width=0.32\textwidth]{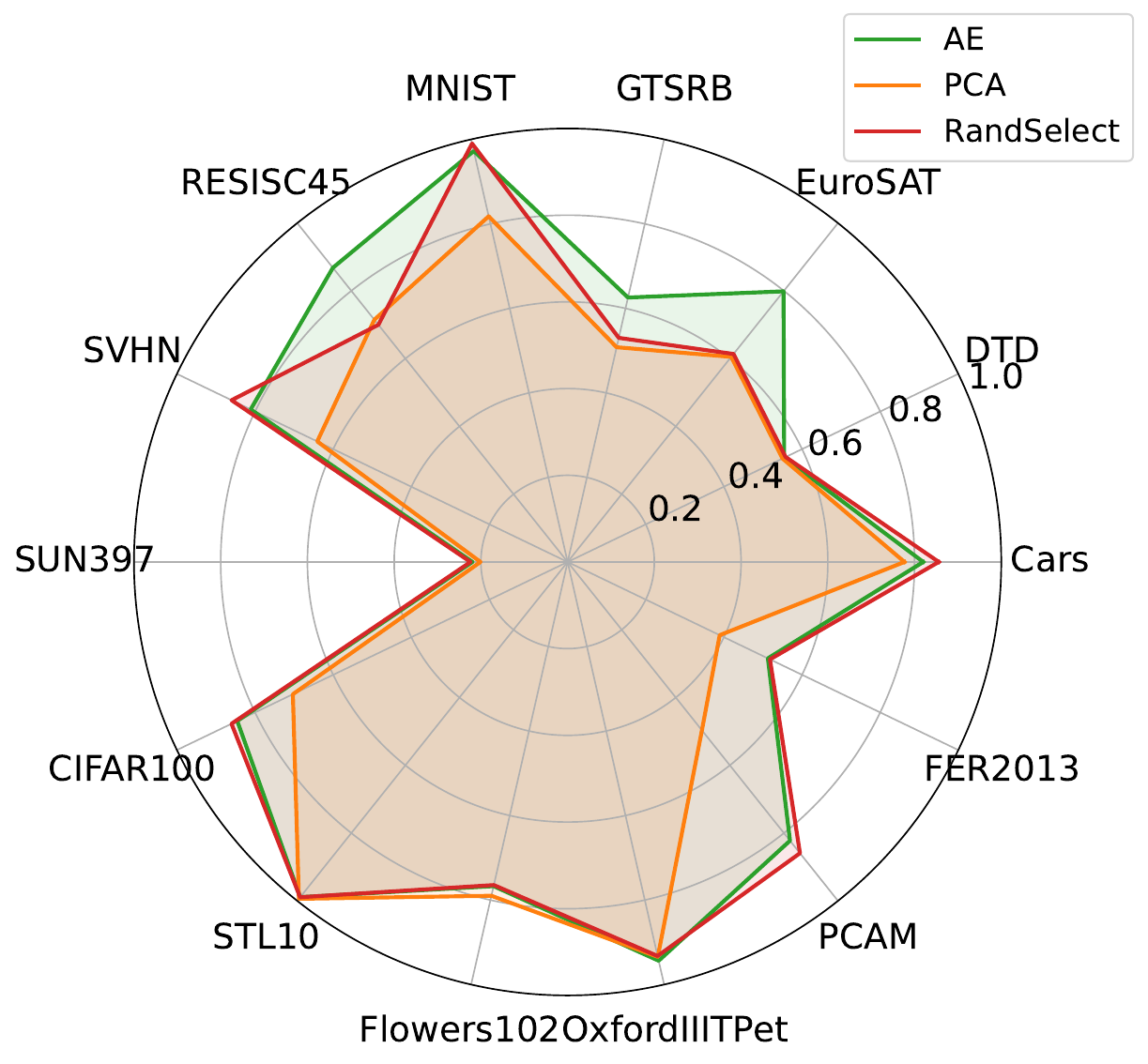}} &
\subcaptionbox{ViT-L/14, L\&S, 14 tasks}{\includegraphics[width=0.32\textwidth]{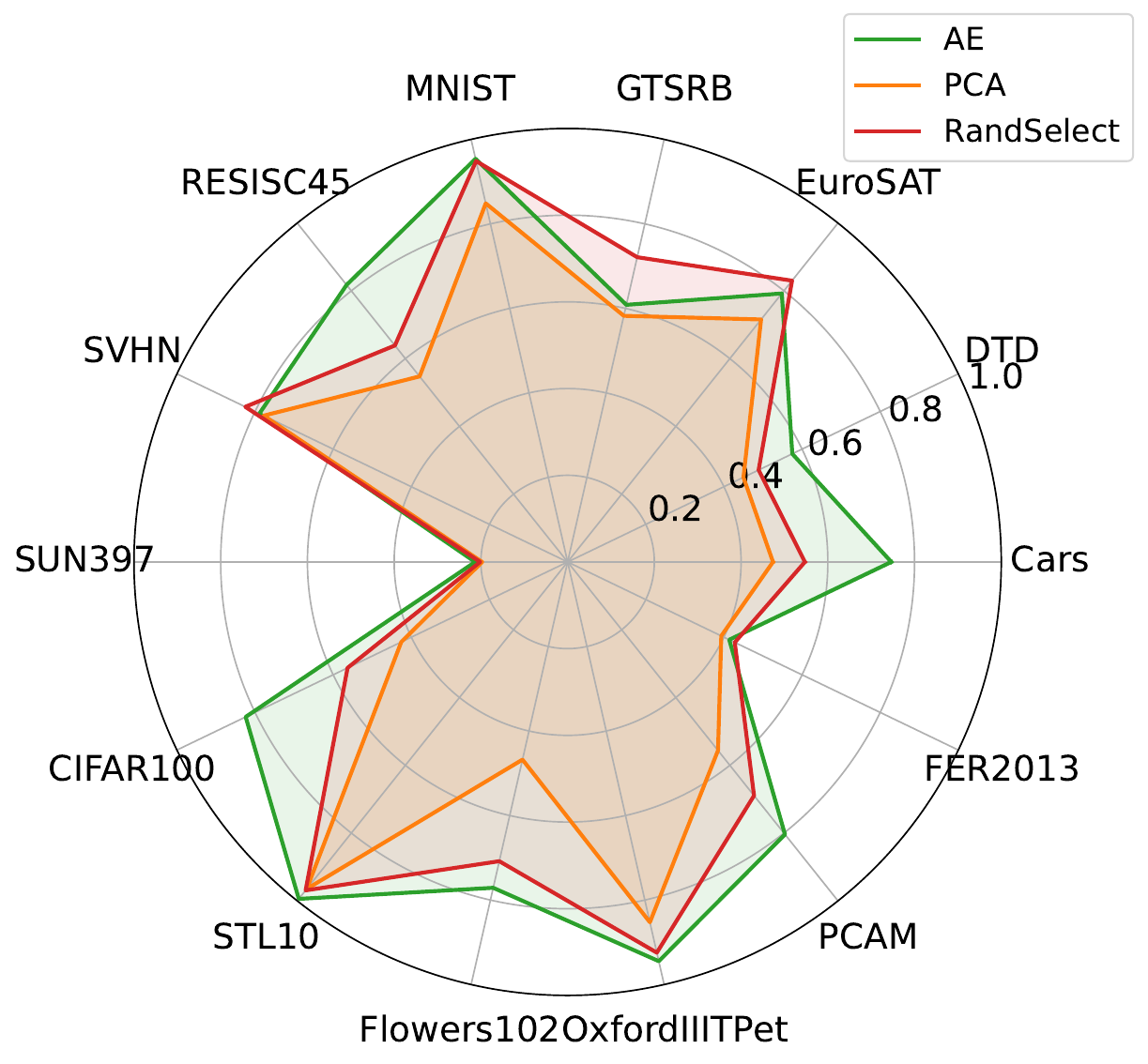}} \\
\end{tabular}

\caption{Per dataset bases comparison results for 14 vision tasks of \Cref{tab:addition_vision}.}
\label{fig:vision_14tasks_perdataset}
\end{figure*}

\begin{figure*}[ht]
\centering
\setlength{\tabcolsep}{3pt} 
\renewcommand{\arraystretch}{0} 

\begin{tabular}{ccc}
\subcaptionbox{ViT-B/16, TA, 20 tasks}{\includegraphics[width=0.32\textwidth]{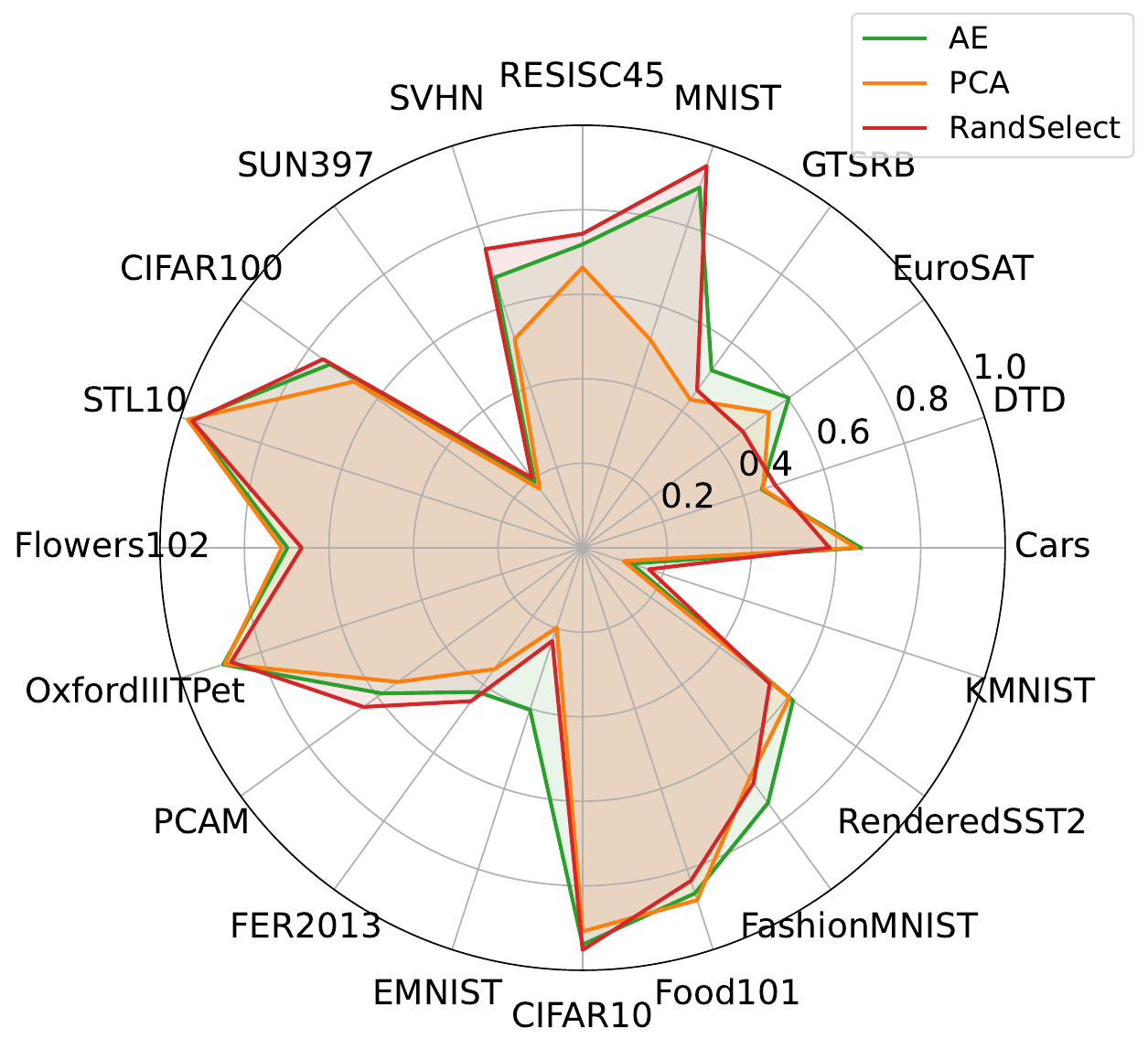}} &
\subcaptionbox{ViT-B/16, TIES, 20 tasks}{\includegraphics[width=0.32\textwidth]{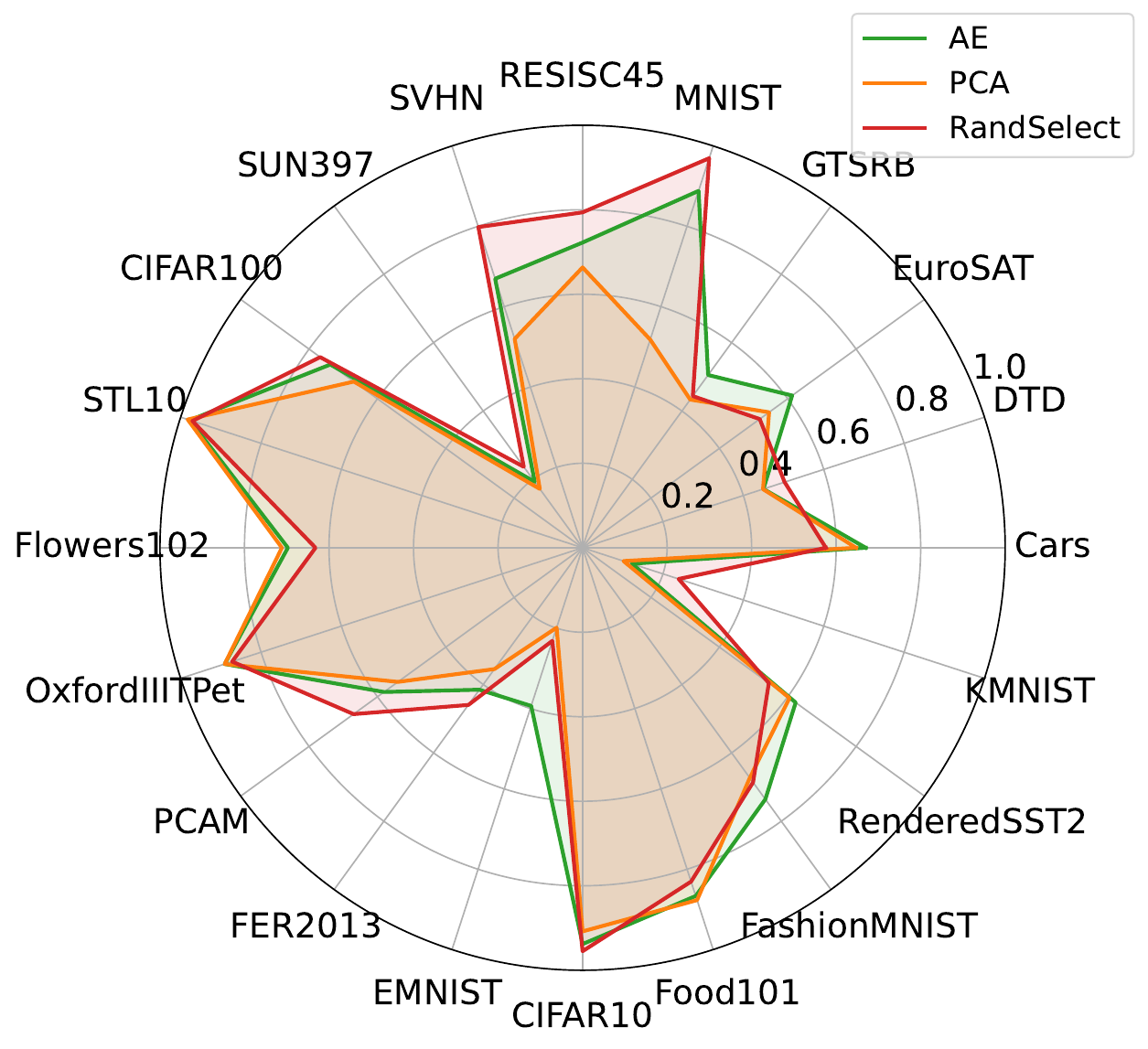}} &
\subcaptionbox{ViT-B/16, L\&S, 20 tasks}{\includegraphics[width=0.32\textwidth]{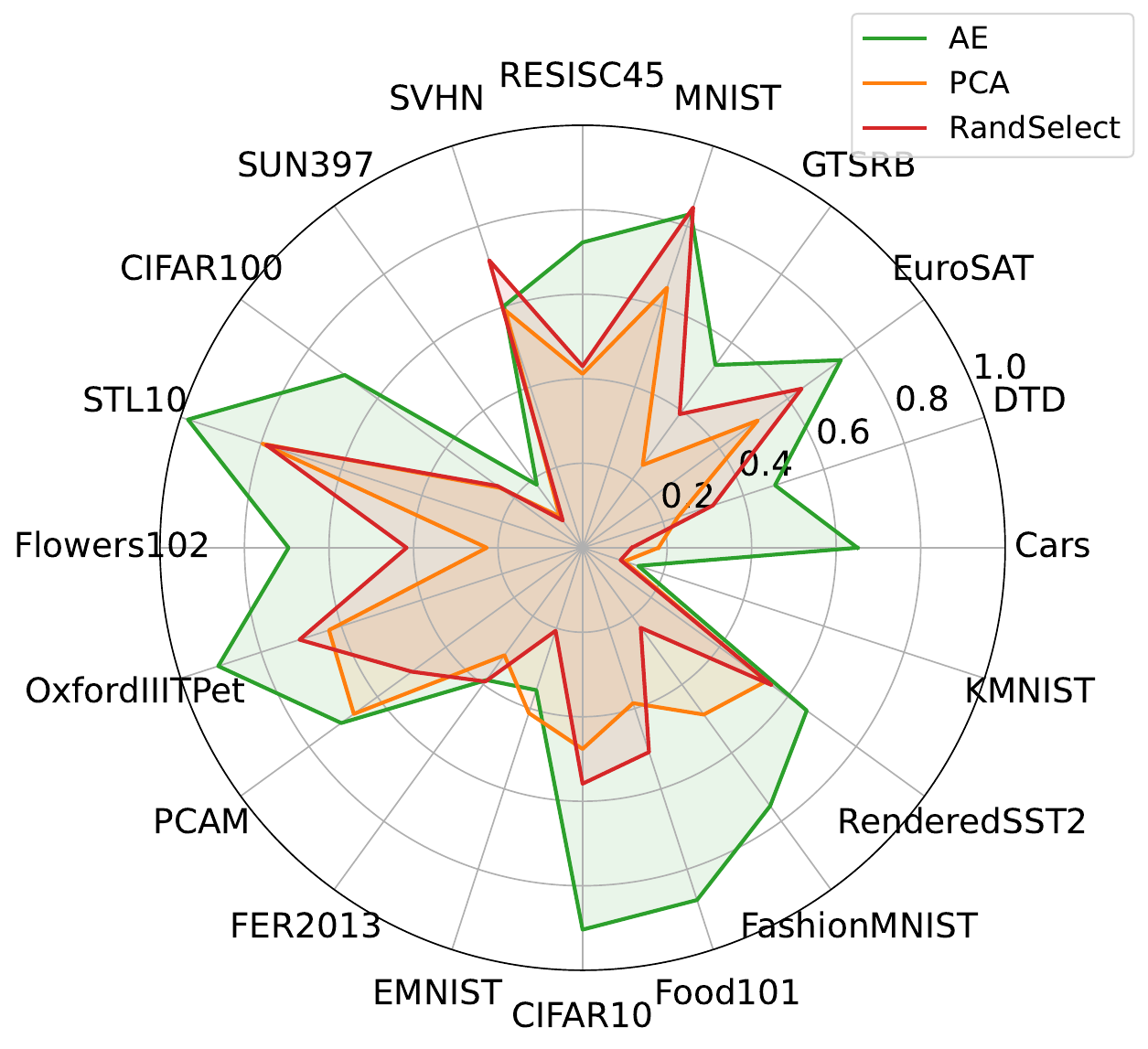}} \\[4pt]

\subcaptionbox{ViT-B/32, TA, 20 tasks}{\includegraphics[width=0.32\textwidth]{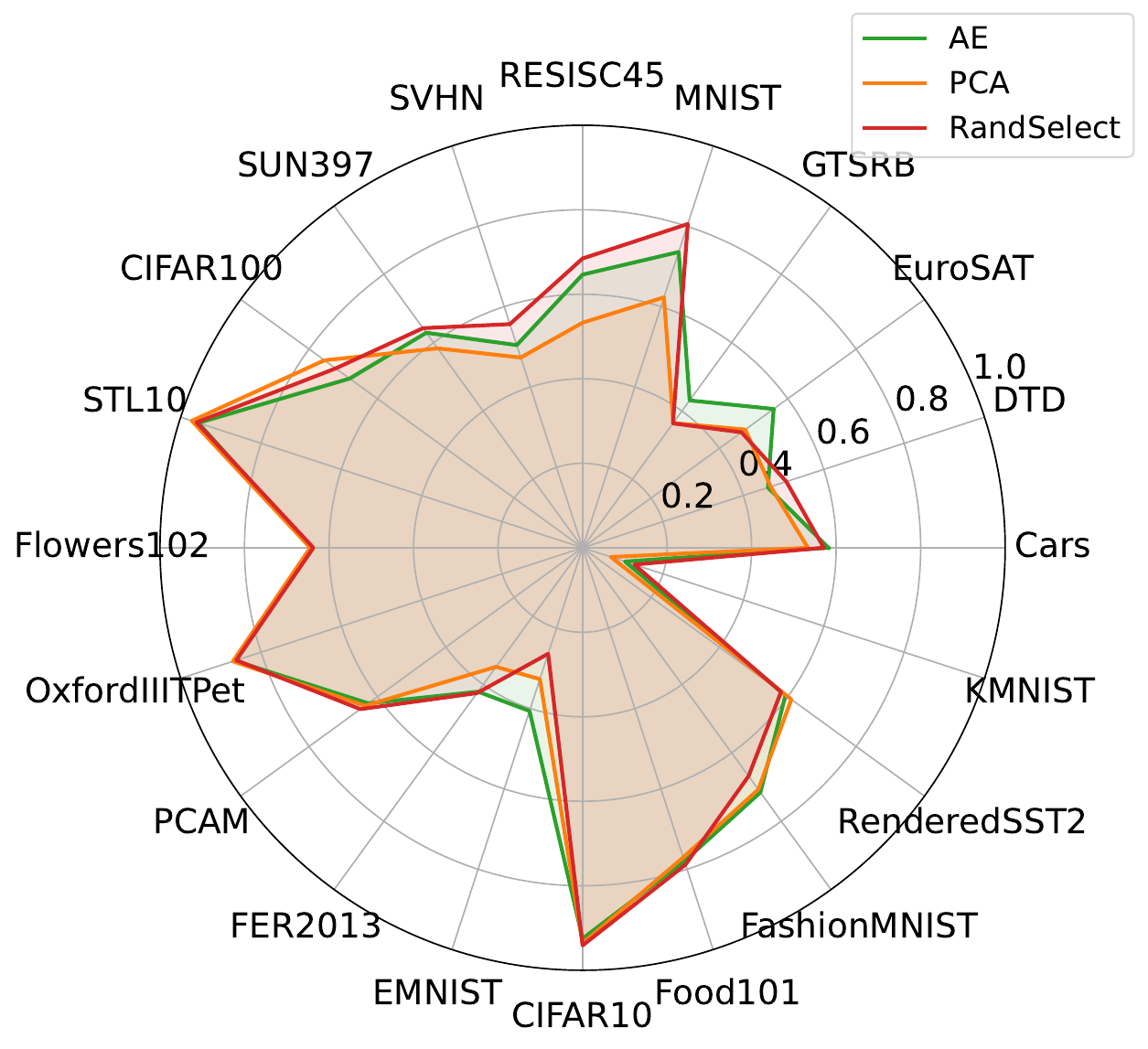}} &
\subcaptionbox{ViT-B/32, TIES, 20 tasks}{\includegraphics[width=0.32\textwidth]{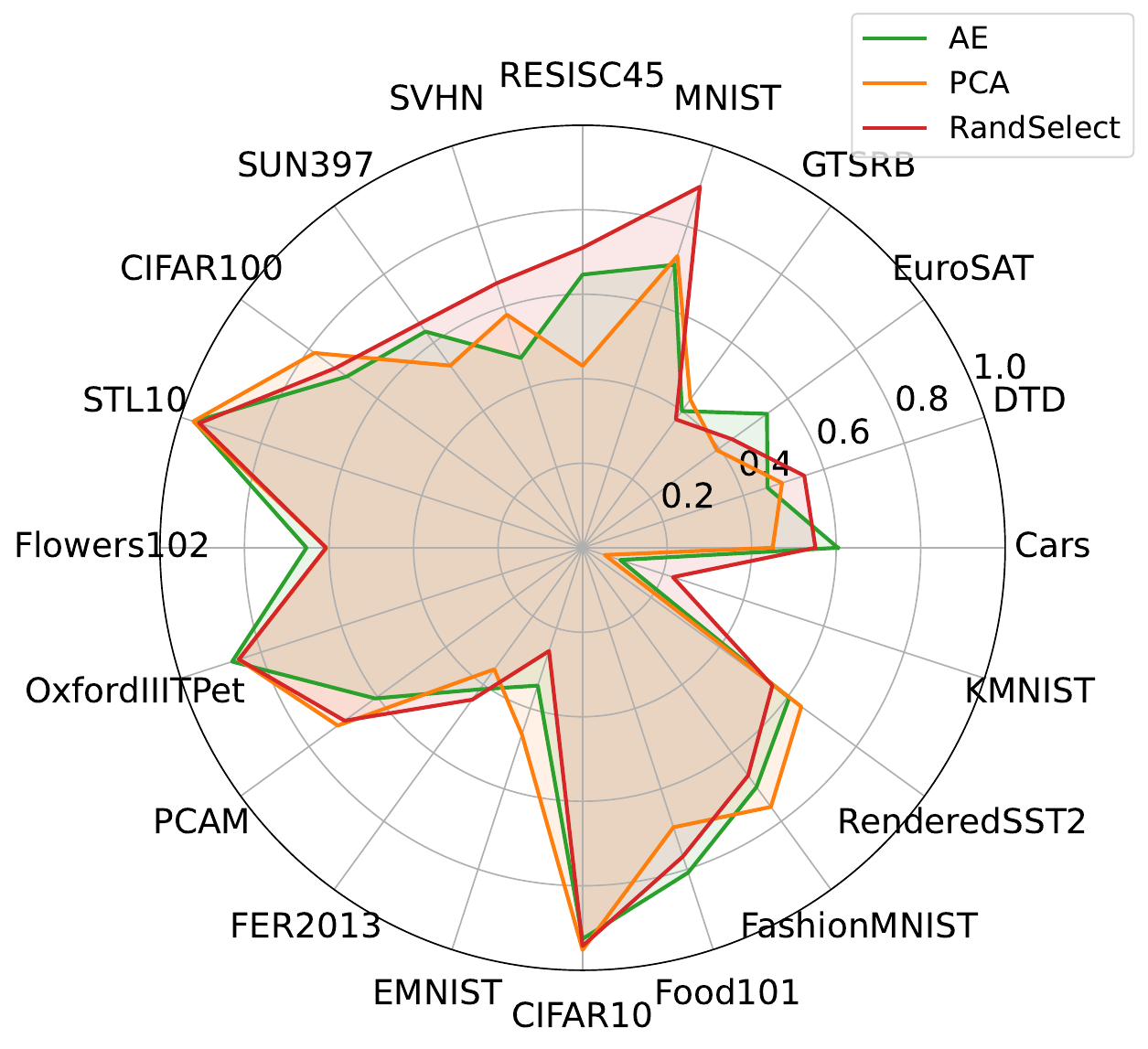}} &
\subcaptionbox{ViT-B/32, L\&S, 20 tasks}{\includegraphics[width=0.32\textwidth]{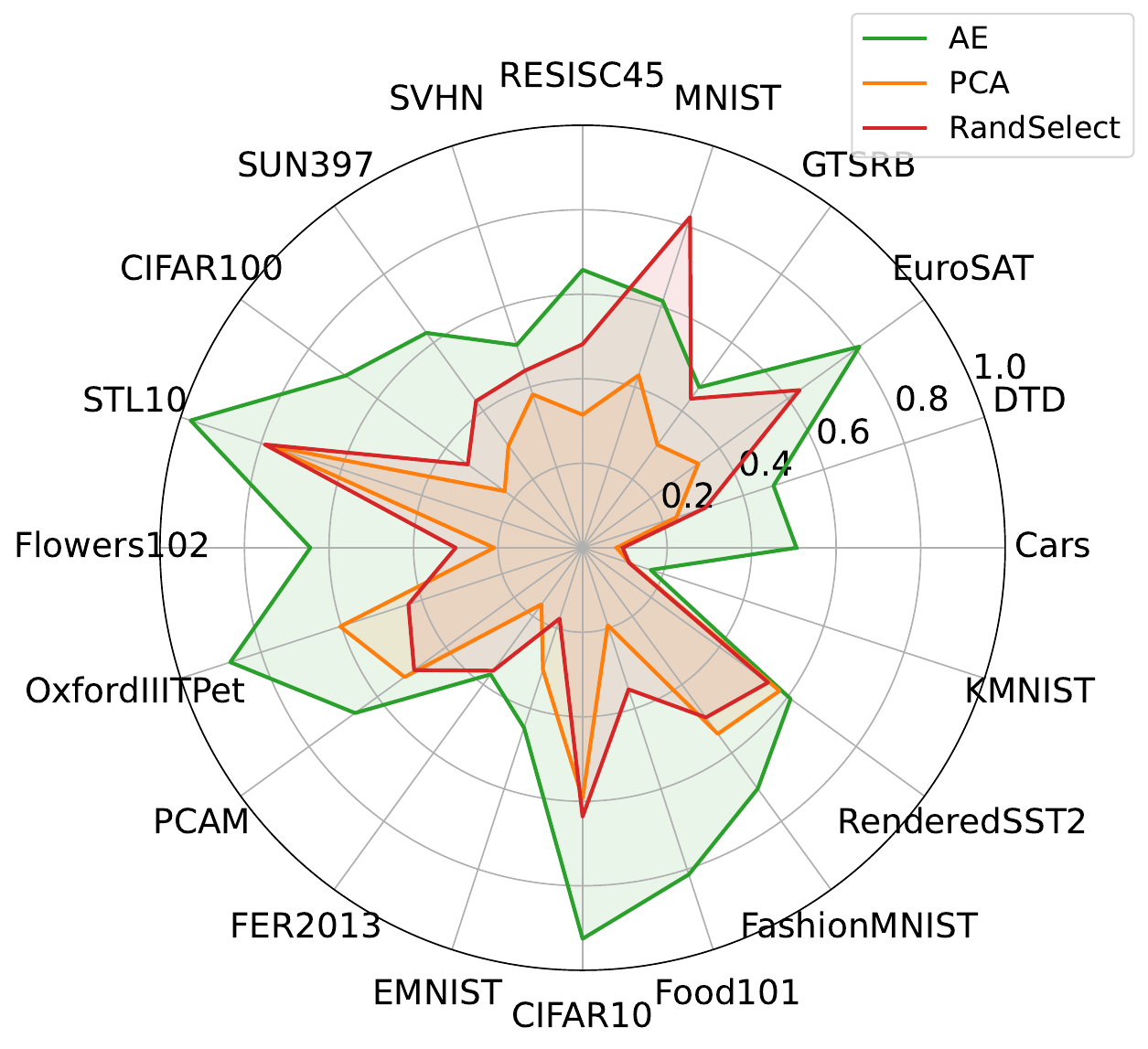}} \\[4pt]

\subcaptionbox{ViT-L/14, TA, 20 tasks}{\includegraphics[width=0.32\textwidth]{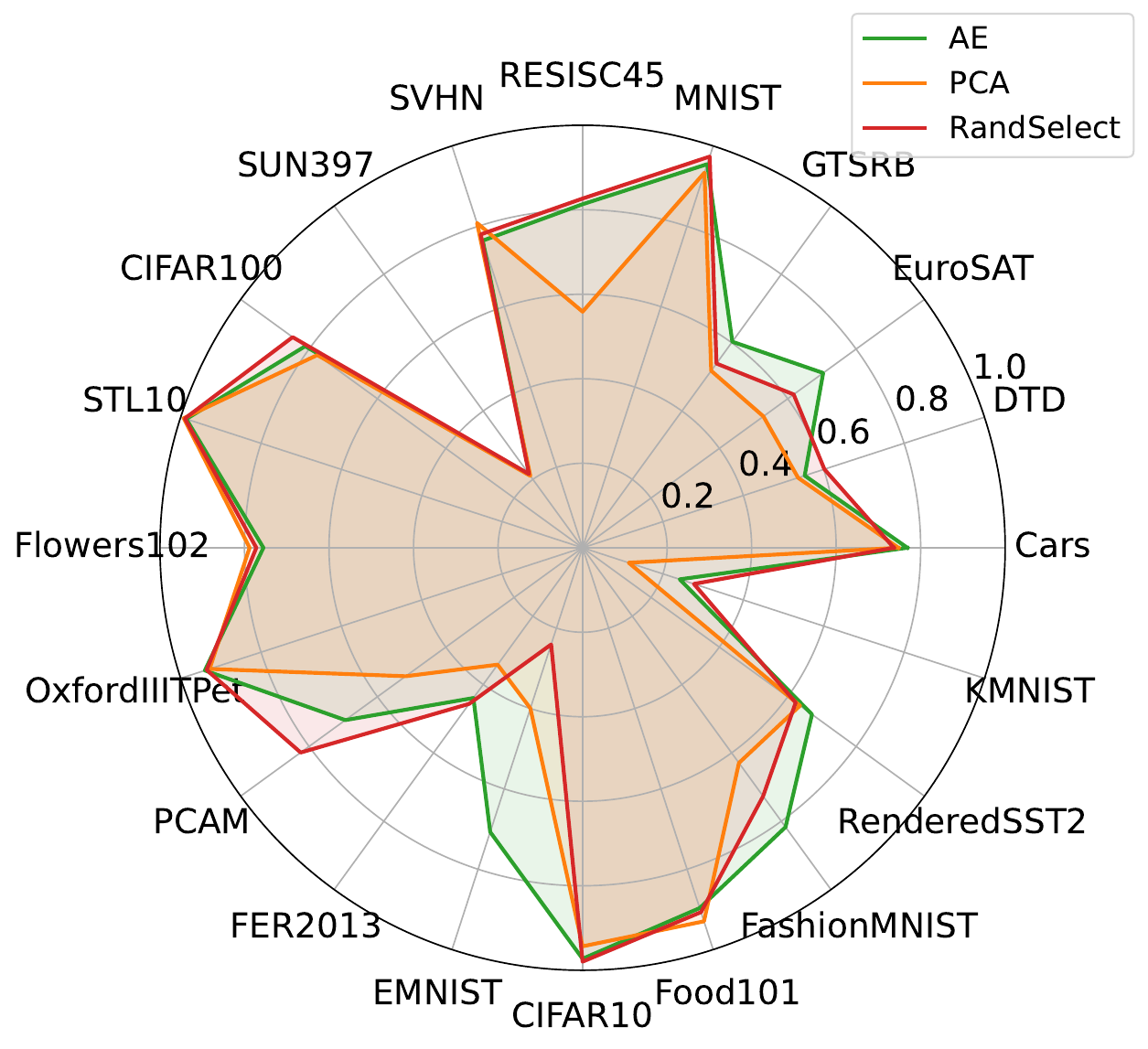}} &
\subcaptionbox{ViT-L/14, TIES, 20 tasks}{\includegraphics[width=0.32\textwidth]{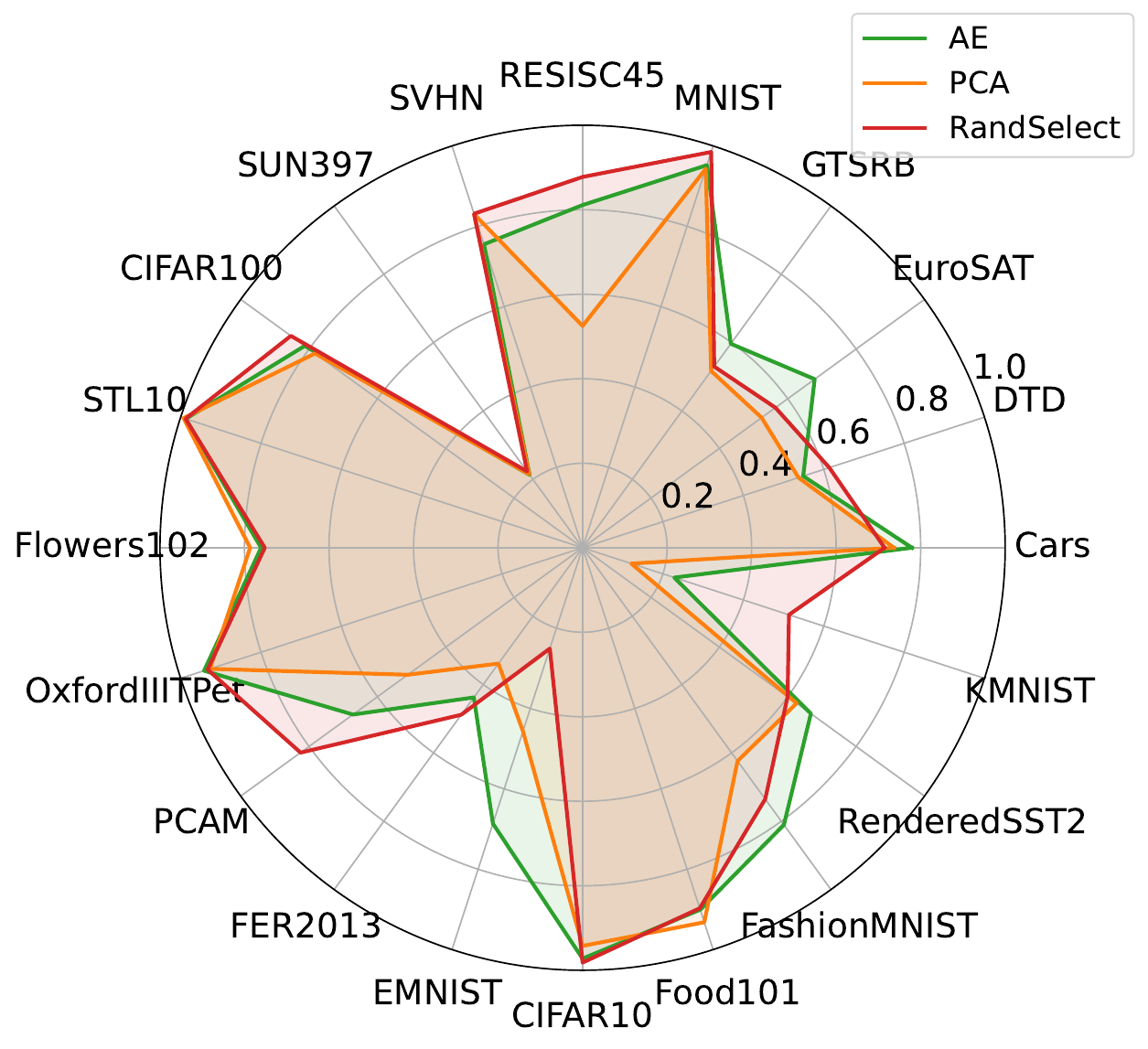}} &
\subcaptionbox{ViT-L/14, L\&S, 20 tasks}{\includegraphics[width=0.32\textwidth]{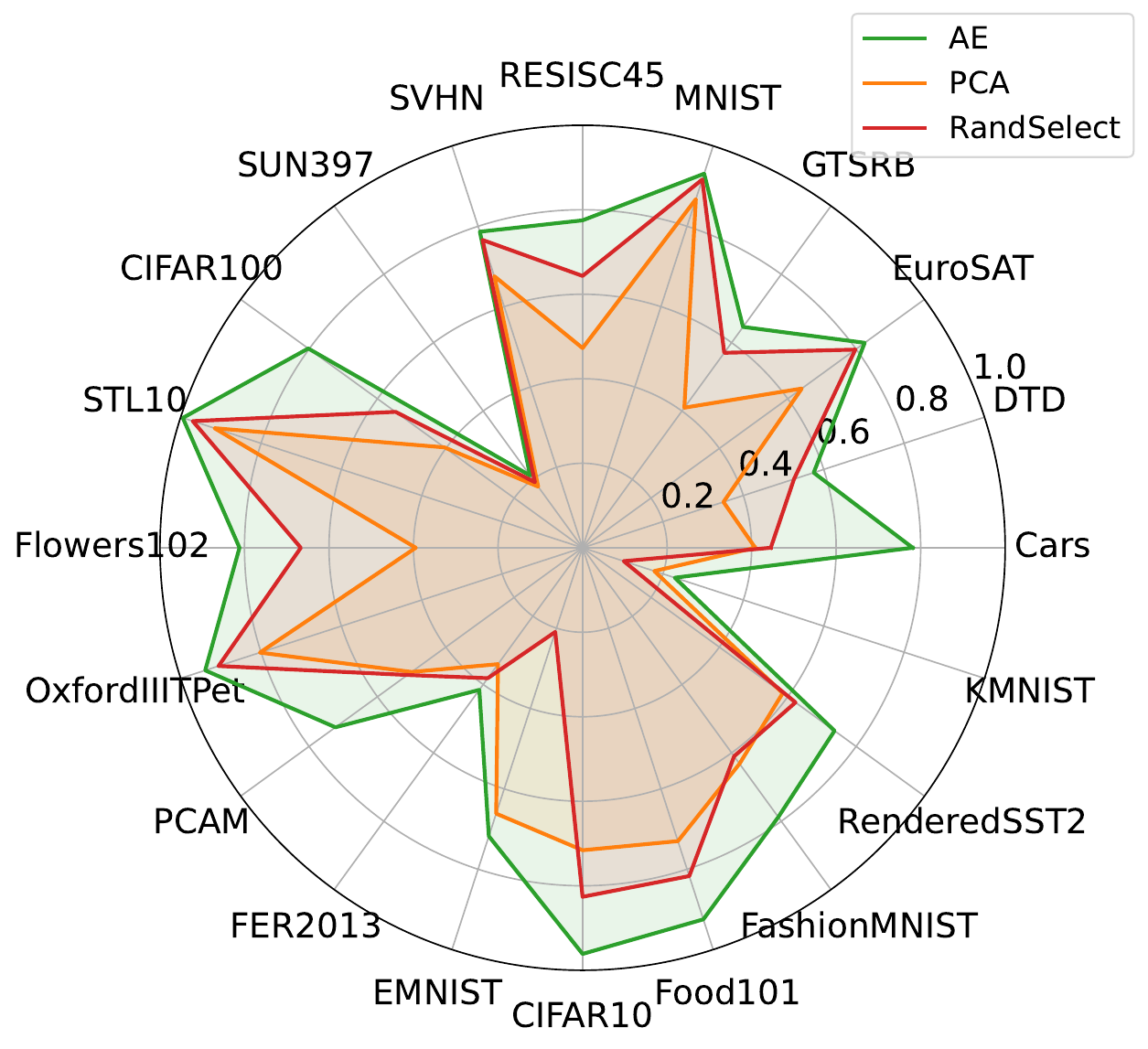}} \\
\end{tabular}

\caption{Per dataset bases comparison results for 20 vision tasks of \Cref{tab:addition_vision}.}
\label{fig:vision_20tasks_perdataset}
\end{figure*}

\begin{figure*}[ht]
\centering
\setlength{\tabcolsep}{3pt} 
\renewcommand{\arraystretch}{0} 

\begin{tabular}{ccc}
\subcaptionbox{RoBERTa-base, TA, 12 tasks}{\includegraphics[width=0.32\textwidth]{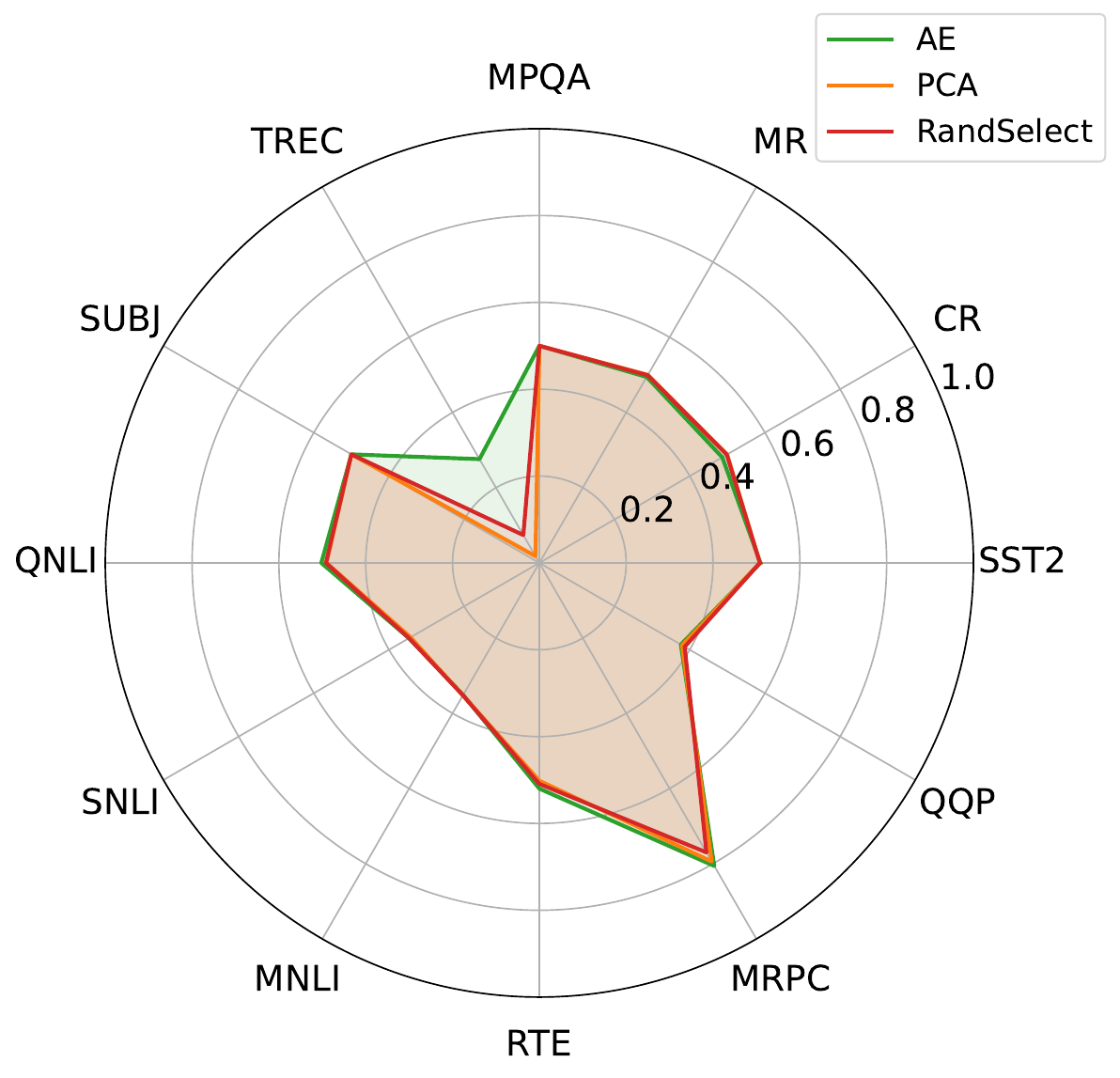}} &
\subcaptionbox{RoBERTa-base, TIES, 12 tasks}{\includegraphics[width=0.32\textwidth]{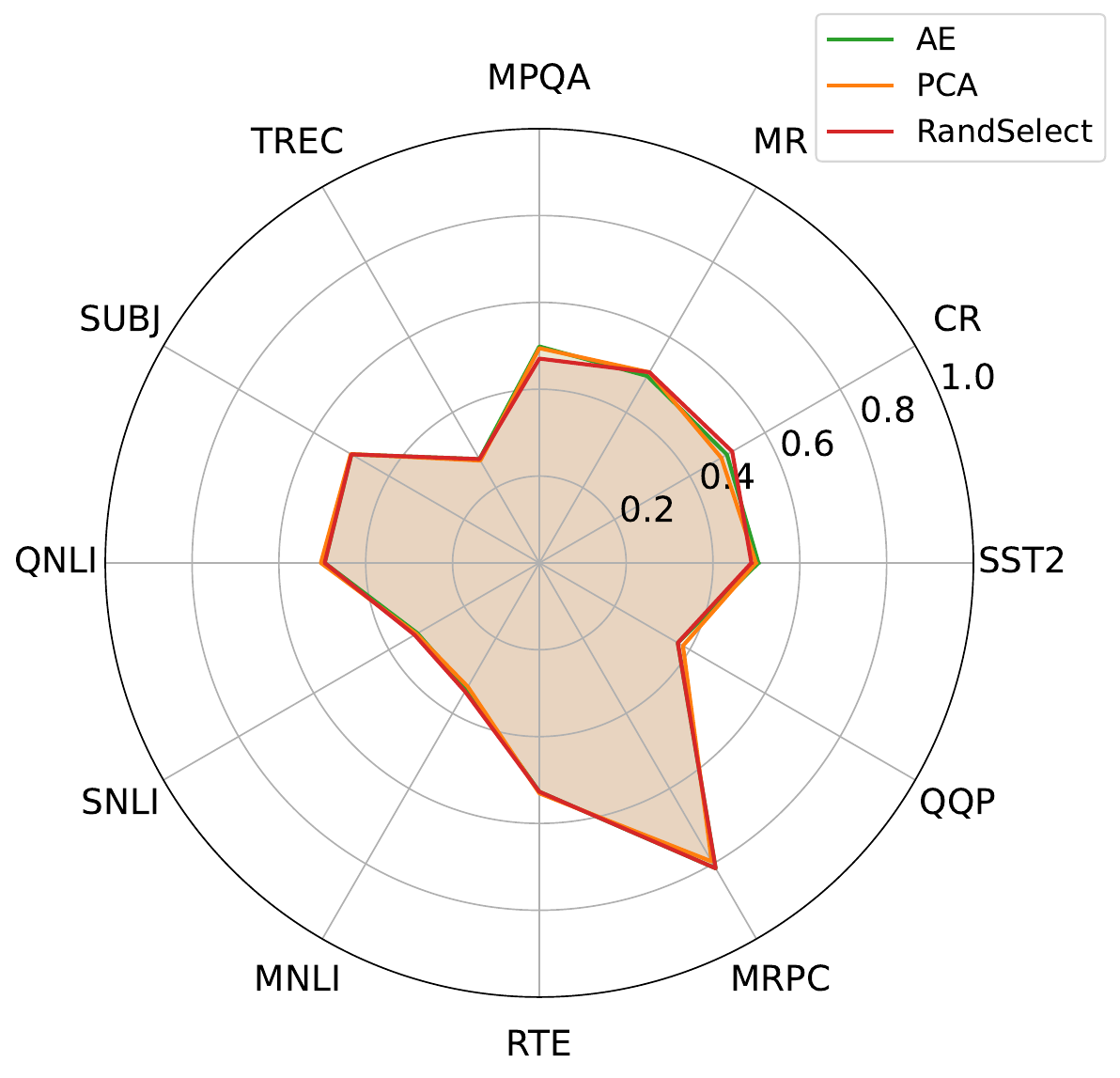}} &
\subcaptionbox{RoBERTa-base, L\&S, 12 tasks}{\includegraphics[width=0.32\textwidth]{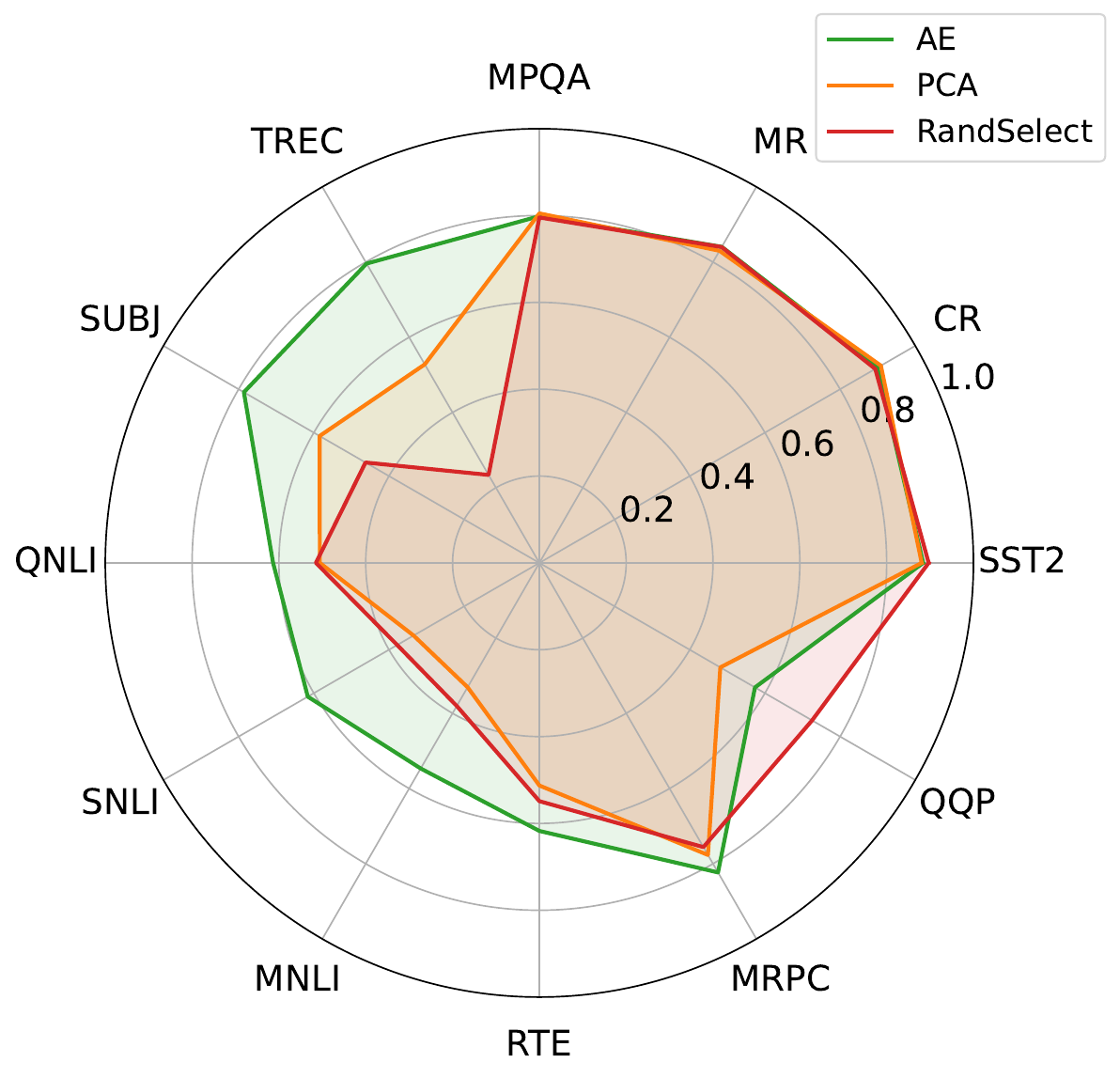}} \\
\end{tabular}

\caption{Per dataset bases comparison results for 12 language tasks of \Cref{tab:addition_language}.}
\label{fig:language_12tasks_perdataset}
\end{figure*}

\paragraph{Compatibility with QLoRA Task Vectors.} To verify that Task Vector Bases compose with LoRA-based task vectors, we apply our framework to QLoRA (4-bit, Rank 16, Alpha 32) task vectors on ViT-B/16 with 8 tasks and $M=4$ bases (\Cref{tab:qlora}). QLoRA task vectors trade accuracy for storage (${\sim}$72$\times$ reduction). Among QLoRA settings, AE remains the best basis method and nearly matches TA (0.565 vs.\ 0.566). AE basis construction takes 18.25s (vs.\ 22.72s in \Cref{tab:basis_time}), confirming that the two compression axes compose without additional overhead.

\begin{table}[ht]
\centering
\caption{Task Vector Bases applied to QLoRA task vectors on ViT-B/16, 8 tasks, $M=4$. AE remains the best basis method and nearly matches TA. Runtime is comparable to full precision (\Cref{tab:basis_time}).}
\label{tab:qlora}
\scalebox{0.82}{
\begin{tabular}{llrrr}
\toprule
\textbf{Setting} & \textbf{Method} & \textbf{Abs. Acc} & \textbf{Storage (MB)} & \textbf{Time (s)} \\
\midrule
Full FT (FP32) & AE ($M{=}4$) & 0.666 & 1705.99 & 11059.96 \\
\midrule
\multirow{4}{*}{QLoRA (4-bit)} & TA & 0.566 & 47.2 & 10545.00 \\
& AE ($M{=}4$) & 0.565 & 23.6 & 10545.00 \\
& PCA ($M{=}4$) & 0.542 & 23.6 & 10545.00 \\
& RandSelect ($M{=}4$) & 0.528 & 23.6 & 10545.00 \\
\bottomrule
\end{tabular}
}
\end{table}

\paragraph{Softmax vs. Unconstrained Linear Encoder.} To isolate the effect of the softmax constraint, we compare the default softmax activated autoencoder against an unconstrained linear encoder under the 8 task addition setting on ViT-B/16 with $M=4$ bases. \Cref{tab:softmax_ablation} reports per dataset normalized accuracy. The softmax encoder consistently outperforms the unconstrained linear encoder on 7 out of 8 datasets, with an average improvement of 17.1 percentage points, validating that enforcing convex combination coefficients via softmax provides a substantial empirical benefit.

\begin{table}[ht]
\centering
\caption{Softmax vs. linear (unconstrained) encoder on task addition (8 tasks, ViT-B/16, $M=4$). Normalized accuracy (\%) per dataset.}
\label{tab:softmax_ablation}
\begin{tabular}{lccc}
\toprule
\textbf{Dataset} & \textbf{Softmax} & \textbf{Linear} & \textbf{Delta} \\
\midrule
Cars     & 84.9 & 67.3 & $-17.6$ \\
DTD      & 55.9 & 57.1 & $+1.2$ \\
EuroSAT  & 63.7 & 38.8 & $-24.9$ \\
GTSRB    & 55.4 & 45.1 & $-10.3$ \\
MNIST    & 98.5 & 80.3 & $-18.2$ \\
RESISC45 & 84.5 & 52.5 & $-32.0$ \\
SVHN     & 93.4 & 73.5 & $-19.9$ \\
SUN397   & 37.0 & 22.0 & $-15.0$ \\
\midrule
\textbf{Average} & \textbf{71.7} & \textbf{54.6} & $-17.1$ \\
\bottomrule
\end{tabular}
\end{table}

\subsection{Offline Fewshot OOD Generalization}
\paragraph{Hyperparameters and Datasets.} For the aTLAS method in the few-shot regime in \Cref{sec:ood}, we train for 10 epochs on every target OOD dataset (CIFAR10, CIFAR100, STL10, Food101, Flowers102, OxfordIIITPet). We use the AdamW optimizer with a learning rate of 0.1 and weight decay of 0.1, together with a cosine learning-rate schedule that decays the learning rate from 0.1 to 0 over the course of training. The per-GPU batch size is set to 128 for all models except ViT-L/14, where it is 64 with gradient accumulation of 2, yielding an effective batch size of 128 in both cases.

\subsection{Online Continual Learning}

\begin{table}[ht]
\centering
\caption{Online continual addition results (8 tasks, $M = 50\%$) over 5 runs with different task order. We report mean accuracy (\%) $\pm$ standard deviation. The bases method with better mean for the same merging method is bold.}
\label{tab:continual_merging_8task_5runs}
\begin{tabular}{lccc}
\toprule
Method & ViT-B/32 & ViT-B/16 & ViT-L/14 \\
\midrule
RandSelect-TA   & $62.04 \pm 3.69$ & $70.86 \pm 1.95$ & $77.47 \pm 1.40$ \\
AE-TA           & $\textbf{66.60} \pm 0.86$ & $\textbf{71.23} \pm 2.15$ & $\textbf{78.99} \pm 1.22$ \\ \midrule
RandSelect-TSVM & $65.61 \pm 3.15$ & $\textbf{73.17} \pm 3.82$ & $79.84 \pm 3.24$ \\ 
AE-TSVM         & $\textbf{69.01} \pm 0.98$ & $73.03 \pm 0.92$ & $\textbf{80.40} \pm 0.82$ \\
\bottomrule
\end{tabular}
\end{table}

\paragraph{Hyperparameters and Implementation Details.} In the continual setting, RandSelect maintains a fixed-size buffer of task vectors. As new tasks arrive, their vectors are added to the buffer until it reaches capacity. Once the buffer is full, the method enforces the size constraint by randomly discarding one existing task vector whenever a new one is added. For both TA and TSVM, unlike in the offline setting where the isotropic scaling coefficient $\alpha$ is tuned on a validation set, 
here we fix $\alpha$ to standard values suggested in prior work for simplicity. 
Specifically, we follow \citet{tang2025merging} and set $\alpha=0.3$ for TA and $\alpha=1$ for TSVM \citep{gargiulo2025task}, without performing any additional scaling search. 
Moreover, as shown in \Cref{fig:continual_M}, for larger basis sizes ($M=6,7$) we observed that using the annealing scheme $\tau=(500,0.8)$ further improves AE performance; 
therefore, we adopt this setting for those runs. For smaller basis sizes ($M<6$), we continue to use the default hyperparameters for AE basis construction. 

\paragraph{Results across Architectures.} We further include online addition results for different model architectures. From \Cref{tab:continual_merging_8task_5runs}, we see that AE consistently achieves higher mean accuracy with notably lower variance compared to RandSelect across all three backbones. For example, on ViT-B/32, AE improves over RandSelect by more than 4 points under both TA and TSVM merging, while also cutting the standard deviation by a factor of 3–4. Even on larger models such as ViT-L/14, where the margins are smaller, AE still maintains a clear edge in both accuracy and stability. It is worth noting that RandSelect remains a surprisingly strong baseline, particularly in the continual setting. In some cases (e.g., TSVM on ViT-B/16), its mean accuracy approaches that of AE, though at the cost of much higher variance. This suggests that RandSelect can occasionally perform well, but such performance is highly sensitive to task order and thus less reliable. Overall, AE provides a robust and dependable solution for online continual merging, delivering consistently strong results across architectures. The fact that RandSelect can sometimes compete highlights that random bases capture useful task diversity, but also underscores that this phenomenon deserves further investigation in future work.

\subsection{Bases Negation}
\paragraph{Hyperparameters and Metrics.} The tuning of $\alpha$ in \Cref{tab:negation_vit} is based on selecting the coefficient on the grid search over $[0, 0.05, 0.10, \dots, 1.0]$ (21 grid points) that at least maintain $95\%$ of pretrained model's ImageNet (control task) test accuracy, and we use the selected $\alpha$ to create the edited model and report the target and control metrics.

\section{LLM Usage Statement}
We used LLMs to aid in polishing the writing of this paper. Specifically, LLMs were employed as a general-purpose assistant to improve clarity, grammar, and style, and to suggest alternative phrasings for technical explanations. They were not used to generate novel research ideas, design experiments, or produce results. The authors take full responsibility for all content, including text refined with the assistance of LLMs.

\end{document}